\documentclass{article}
\usepackage[margin=1in]{geometry}    
\usepackage[T1]{fontenc}
\usepackage[utf8]{inputenc}
\usepackage[cal=euler]{mathalpha}
\usepackage{cochineal}
\usepackage{amsmath,amssymb,amsthm,mathtools,bm}
\renewcommand{\epsilon}{\varepsilon}  
\usepackage{booktabs}
\usepackage{capt-of}
\usepackage{longtable}
\usepackage{tabularx}
\usepackage{enumitem}
\usepackage{microtype}
\usepackage{float}
\usepackage{hyperref}
\usepackage[numbers,sort&compress]{natbib}
\hypersetup{colorlinks=true,linkcolor=blue!60!black,citecolor=blue!60!black,urlcolor=blue!60!black}
\usepackage{xcolor}

\mathtoolsset{showmanualtags}
\allowdisplaybreaks
\title{Exact information accounting for SGD methods}
\author{Akshay Balsubramani \\ {\small \texttt{akshay@vac.bio}}}
\date{}

\newtheorem{theorem}{Theorem}[section]
\newtheorem{proposition}[theorem]{Proposition}
\newtheorem{corollary}[theorem]{Corollary}

\newcommand{\R}{\mathbb{R}}
\newcommand{\E}{\mathbb{E}}
\newcommand{\KL}{\mathrm{KL}}
\newcommand{\tr}{\mathrm{tr}}
\newcommand{\diag}{\mathrm{diag}}
\newcommand{\Var}{\mathrm{Var}}
\newcommand{\Cov}{\mathrm{Cov}}
\newcommand{\Bias}{\mathrm{Bias}}
\newcommand{\calF}{\mathcal{F}}
\newcommand{\calN}{\mathcal{N}}
\newcommand{\calB}{\mathcal{B}}

\newcommand{\calK}{\mathcal{K}}
\newcommand{\ip}[2]{\left\langle #1,#2\right\rangle}

\newcommand{\half}{\frac12}

\providecommand{\akshay}[1]{}
\providecommand{\vac}[1]{}
\providecommand{\dd}{\mathrm{d}}
\providecommand{\ip}[1]{\langle #1 \rangle}

\providecommand{\KL}{\mathrm{KL}}

\providecommand{\Gibbs}{\Pi}

\providecommand{\citep}[1]{[\textbf{#1}]}
\providecommand{\cite}[1]{[\textbf{#1}]}
\makeatletter
\@ifundefined{argmax}{}{}
\@ifundefined{argmin}{}{}
\@ifundefined{diam}{}{}
\@ifundefined{rank}{}{}
\@ifundefined{supp}{}{}
\@ifundefined{tr}{\DeclareMathOperator{\tr}{tr}}{}
\@ifundefined{sign}{\DeclareMathOperator{\sign}{sign}}{}
\makeatother

\providecommand{\E}{\mathbb{E}} \providecommand{\R}{\mathbb{R}}
 \providecommand{\calB}{\mathcal{B}}
 
 \providecommand{\calF}{\mathcal{F}}

\providecommand{\calK}{\mathcal{K}} 
 \providecommand{\calN}{\mathcal{N}}
 \providecommand{\calP}{\mathcal{P}}

\makeatletter
\def\bvapx@title{}
\AtBeginDocument{\providecommand{\phantomsection}{}}
\let\bvapx@maketitle\maketitle
\def\maketitle{\global\let\bvapx@title\@title\bvapx@maketitle}
\newcommand{\appendixtitle}{%
  \clearpage
  \suppressfloats[t]%
  \phantomsection
  \begin{center}
    {\LARGE\bfseries Appendices\ifx\bvapx@title\@empty\else\space to ``\bvapx@title''\fi\par}%
  \end{center}
  \vspace{1.5\baselineskip}%
}
\newcommand{\appendixcontents}{%
  \let\bvOrigAddcontentsline\addcontentsline
  \renewcommand{\addcontentsline}[3]{%
    \def\bvAclFile{##1}\def\bvAclToc{toc}%
    \ifx\bvAclFile\bvAclToc \bvOrigAddcontentsline{apc}{##2}{##3}%
    \else\bvOrigAddcontentsline{##1}{##2}{##3}\fi}%
  \setcounter{tocdepth}{2}%
  {\centering\large\bfseries Contents of the appendices\par}%
  \vspace{0.5\baselineskip}%
  \@starttoc{apc}%
}
\makeatother

\makeatletter
\newcommand{\appendixletterskipI}{\renewcommand{\thesection}{\ifnum\value{section}>8 \@Alph{\numexpr\value{section}+1\relax}\else\@Alph{\value{section}}\fi}}
\makeatother

\allowdisplaybreaks[2]
\AtBeginDocument{%
  \setlength{\abovedisplayskip}{6pt plus 2pt minus 2pt}%
  \setlength{\belowdisplayskip}{6pt plus 2pt minus 2pt}%
  \setlength{\abovedisplayshortskip}{3pt plus 1pt minus 1pt}%
  \setlength{\belowdisplayshortskip}{3pt plus 1pt minus 1pt}%
}

\providecommand{\akshay}[1]{}\renewcommand{\akshay}[1]{}%
\providecommand{\vac}[1]{}\renewcommand{\vac}[1]{}%
\begin{document}
\maketitle

\begin{abstract}
As an alternative to the standard geometric analyses, we give an exact, information-theoretic analysis of stochastic gradient descent (SGD) and its variants.
We show that a preconditioned SGD step is the posterior-mean update of a Gaussian Bayes model, and that its one-step regret splits into an intrinsic-time cost and a change in comparator information.
The split extends to an identity for the objective itself.
Convex convergence, strict-saddle-point escape, the link between flatness and generalization, the standard learning-rate schedules, adaptive optimizers, and the noisy, momentum, heavy-tailed, and gradient-free variants of SGD each correspond to a term or a special case of this identity.
We measure its terms on synthetic and real training runs.
On real networks it attributes the slack of classical convergence bounds to the terms dropped by their derivations and separates optimizers that reach the same training loss.
That separation follows the number and consistency of their steps.
Its relation to which of them generalizes better differs between networks.
For gradient-free SGD the identity determines how a curvature preconditioner should enter the update.
The sharpness-based generalization certificate it yields, with a data-independent isotropic prior, is vacuous at network scale unless the curvature spectrum is nearly flat across all parameters.
\end{abstract}

\section{Introduction}

Stochastic gradient descent (SGD) and its variants are used to train deep networks \citep{leon2018optimization} and large language models \citep{hong2023sophia}, to learn online \citep{Zinkevich2003}, and to sample from Bayesian posteriors \citep{welling2011bayesian}.
SGD is simultaneously a numerical method, a stochastic approximation algorithm, and a source of implicit regularization. Classical analysis usually begins with the identity
\[
\eta_t \ip{g_t}{\theta_t-u}
=\half\|\theta_t-u\|^2-\half\|\theta_{t+1}-u\|^2+\frac{\eta_t^2}{2}\|g_t\|^2
\]
then makes statements about excess risk through convexity, smoothness, or stochastic assumptions.
This usual geometric interpretation loses information every round.
We reinterpret and extend it as an exact Bayesian information balance.

\paragraph{Information ledger framework.} We specialize an adaptive Bayes identity to continuous Gaussian posteriors and local gradient scores. The specialization yields SGD and its preconditioned and adaptive variants as the posterior-mean recursion of a Gaussian adaptive Bayes update. It splits each round's excess composite loss into a centered cumulant cost and the change in a comparator's relative-entropy information. The cumulant cost is proportional to the round's intrinsic-time density $Q_t$, displayed below. We call this split the \emph{information ledger} of SGD. Classical convergence theorems are its projections: they follow after geometry-specific assumptions license dropping nonnegative Bregman curvature, terminal information, or favorable drift terms.

\paragraph{Contributions.} A Bayes update of a Gaussian posterior with a local linear score preserves the covariance and shifts the posterior mean by the SGD step. A proper Gaussian likelihood with process-noise reheating recovers this update in its weak-observation limit (\S\ref{sec:gab}). For fixed covariance $\Sigma$, learning rate $\eta_t$, and comparator $u$ the update satisfies the identity
\[
\eta_t\ip{g_t}{\theta_t-u}
=I_t(u)-I_{t+1}(u)+\eta_t^2Q_t,
\quad
I_t(u)=\half\|u-\theta_t\|_{\Sigma^{-1}}^2,
\quad
Q_t=\half g_t^\top\Sigma g_t
\]
and adding the Bregman remainder $B_f(u,\theta)=f(u)-f(\theta)-\ip{\nabla f(\theta)}{u-\theta}$ converts it into an objective-level identity in which convexity, strong convexity, and smoothness enter only when that term is simplified (\S\ref{sec:ledger}, \S\ref{sec:loss-identity}, \S\ref{sec:convex}).

The intrinsic clock $V_T=\sum_tQ_t$ organizes the adaptive, momentum, and variance-reduced optimizers (Table~\ref{tab:optimizer-ledger}). Square-root clock schedules give AdaGrad-Norm.
Choosing each step size so that the normalizing constant of the round's Bayes update reaches a prescribed level gives the Polyak step (\S\ref{sec:schedules}).
Gradient forecasts give momentum, Nesterov acceleration, and optimistic SGD.
Variance reduction is intrinsic-time reduction (\S\ref{sec:stochastic}).
Chebyshev step-size cycling and Schedule-Free SGD are read off the same identity.
Alternating the step that maximizes each round's gain in comparator information with a large escape step whenever the realized decrease per unit of intrinsic time stalls gives a cyclic descent-and-escape schedule (\S\ref{sec:schedules}).

Curvature enters the local free energy of \S\ref{sec:flatness} through a log-determinant penalty on the Hessian spectrum, in place of a raw Hessian eigenvalue. The same identity incorporates terminal flatness into a PAC-Bayes certificate (\S\ref{sec:flatness}, \S\ref{sec:gen}). With a data-independent isotropic prior the certificate is vacuous at network scale unless the curvature spectrum is nearly flat across all parameters. Near a strict saddle point $s$, the component of $I_t(s)$ along a direction of curvature $-\gamma<0$ is multiplied by $(1+\eta\gamma)^2$ per step and receives an additive noise injection (\S\ref{sec:saddle}). SAM, ASAM, and Fisher SAM add a path-averaged-Hessian directional curvature term (\S\ref{sec:flatness}, Table~\ref{tab:optimizer-ledger}).

The information ledger extends to the noisy and momentum variants of SGD (\S\ref{sec:noise-momentum}). Every finite-cumulant perturbation law adds one cumulant entry and leaves the other terms of the information ledger unchanged. Under heavy tails the pathwise ledger and the relative-entropy identity of the noise channel still hold. Phase-space momentum, as in SGHMC and underdamped Langevin dynamics, resolves into conservative transport plus a thermostat channel.

For zeroth-order SGD a random finite-difference probe drives the update with closed-form conditional moments. The sampling covariance acts as an implicit preconditioner exposing the dimension penalty (\S\ref{sec:zeroth}).
The second moment decides how a preconditioner should enter.
On a real objective with a wide curvature spread, a query-only curvature probe drawn into the sampling covariance recovers nearly all of the gain of an oracle that knows the Hessian diagonal.
Applied to the step, the same preconditioner makes no progress (\S\ref{sec:eval-practice}).

On real networks the information ledger attributes the slack of classical convergence bounds to the terms dropped by their derivations (\S\ref{sec:eval-slack-vs-bound}).
The intrinsic-time fraction divides the intrinsic-time cost by the sum of that cost and the change in comparator information, with the comparator at the run's own endpoint.
It separates optimizers that reach the same training loss, according to the number and consistency of their steps (\S\ref{sec:eval-regime-discrimination}).
Its relation to the generalization gap differs between networks (\S\ref{subsec:findings}).
Single terms with a closed form, such as the Bregman half-gain of \S\ref{sec:convex} and strict-saddle-point escape times, behave as the identity implies (\S\ref{sec:eval-closed-form}).
Sharpness tracks the generalization gap along the batch-size axis on two networks (\S\ref{sec:eval-practice}).
Across optimizers the bare Hessian statistics depend on coordinates, while function-space curvature does not (\S\ref{subsec:findings}).

\paragraph{Scope and roadmap.} The information ledger holds exactly for the local gradient score and the SGD mean recursion. The objective-level identities hold once the Bregman remainder is retained. Conventional rates still require assumptions, which enter visibly and locally where terms of the identity are replaced by envelopes.
\S\ref{sec:ledger-main} builds the information ledger, from which \S\ref{sec:readings} and \S\ref{sec:flatgen} read off the classical theory. \S\ref{sec:experiments} measures its terms on real training runs and on problems where a single term has a closed form. The appendices hold full derivations, experimental protocols, and deferred proofs.

\section{Related work}\label{sec:related}

As the most widely used training algorithm in modern machine learning, SGD has a correspondingly vast literature.
This section gives a representative overview of the parts closest to the present analysis.

Classical and modern stochastic approximation analyzes SGD through expected gradient alignment, variance, and averaging \citep{herbert1951stochastic,arkadi2009robust,leon2018optimization}, in the online-gradient-descent vocabulary of \cite{Zinkevich2003,hazan2016introduction}.
In the information ledger the gradient alignment equals the change in comparator information plus the intrinsic-time cost (\S\ref{sec:ledger}).
Gradient variance enters through the stochastic-gradient intrinsic time (\S\ref{sec:stochastic}).
Averaging enters through the weighted average identity (\S\ref{sec:convex}).
The high-probability analysis of the last iterate under non-smoothness is sharpened in \cite{harvey2019tight}.
For unconstrained smooth convex objectives with sub-Gaussian gradient noise, \cite{li2026exact} characterize the time-uniform rates attainable by the last iterate of SGD under deterministic step-size schedules fixed before the run.
A boundary of order $(\log T)^{p}/\sqrt{T}$ that holds at all iterations simultaneously with high probability, uniformly over that class, is attainable if and only if $p > 1/2$.
Anytime-valid confidence sequences make time-uniform guarantees observable: \cite{aolaritei2025stopping,aolaritei2026trajectory} certify the suboptimality and, under strong convexity, the last-iterate distance of projected SGD uniformly over time, stitching time-uniform envelopes \citep{howard2020b} over an accumulated conditional-variance clock that they also call intrinsic time.
Their certified envelopes relax the one-step identity from which \S\ref{sec:convex} derives the contraction.
The information ledger names the slack they retain: the curvature gain beyond the strong-convexity quadratic, the terminal information, and the certified envelope standing in for the true distance.

Existing derivations of optimizers from Bayesian objects approximate the posterior, recovering SGD and AdamW as limits of Bayesian filtering on the gradients \citep{aitchison2020bayesianfiltering} or a family from ridge regression to RMSProp as instances of one Bayesian learning rule \citep{khan2023bayesian}.
They are exact only in the limit of that approximation.
Others add noise so that iterates approach posterior samples \citep{welling2011bayesian}, tune constant-step SGD so that its stationary law approximates a posterior \citep{stephan2017stochastic}, or approximate the realized update by diffusion limits and modified equations \citep{qianxiao2019stochastic,david2020implicit}.
Safe Bayes \citep{grunwald2017inconsistency} tempers the likelihood with an exponent $\eta\in(0,1]$ to repair consistency under misspecification.
The inverse temperature $\eta_t$ of \S\ref{sec:gab} plays that role.
An exact correspondence does precede ours for one member of the family: an extended Kalman filter estimating a static parameter under exponential-family observation noise is algebraically identical, step for step, to online stochastic natural gradient on the log-likelihood \cite{ollivier2018kalman}.
That equivalence is specific to the Fisher-preconditioned update and couples the step to the filter's adaptively contracting covariance.
The present identity holds the covariance fixed, applies to any preconditioned first-order step, and is an equality on the realized trajectory with no approximation step, so its terms are measured.
SGD's stationary distribution admits a variational reading \cite{chaudhari2018stochastic}: the steady-state law minimizes a free energy whose potential coincides with the training loss only when the mini-batch gradient noise is isotropic.
That statement concerns the stationary distribution, in continuous time; the identity here is per step, on the realized discrete path.

The terms of the information ledger meet several further literatures.
Mutual-information and conditional-mutual-information bounds control the generalization of adaptive algorithms \citep{russo2016controlling,raginsky2017informationtheoretic,thomas2020reasoning,gergely2021informationtheoretic}, and \cite{hellstrom2025generalizationpacbayes} survey the resulting body of information-theoretic and PAC-Bayesian generalization bounds.
The closest of these to the present accounting \cite{gergely2021informationtheoretic} advances a Gaussian-perturbed copy of the SGD path using gradients taken at the unperturbed iterates.
That recursion is an instance of the noisy update accounted for in Appendix~\ref{sec:addendum-langevin-momentum}, whose pathwise identity is stated for an arbitrary realized gradient and perturbation.
The gradient need not be taken at the iterate advanced by the recursion.
The mini-batch term of the expected intrinsic-time density is half the local gradient variance of that analysis, taken in the metric $\Sigma$ and referred to the conditional gradient mean, identified in its discussion as the more natural choice.
This paper is complementary in scope: it supplies the optimization-side information ledger.
Its terminal information $I_{T+1}(u)$ is the relative entropy between two Gaussians of covariance $\Sigma$, centered at the final iterate and at the comparator, which is the complexity of a PAC-Bayes bound whose prior is the second.
The sharpness literature runs from \cite{sepp1997flat} through the large-batch link \citep{nitish2017largebatch} and the edge of stability of full-batch gradient descent \citep{cohen2021gradient}, with \cite{laurent2017sharp} showing that many parameter-space flatness notions are not reparameterization-invariant.
PAC-Bayes bounds make the flatness link explicit through Poincar\'e and log-Sobolev inequalities without explicit dependence on predictor dimension \citep{haddouche2025flatminima}.
Local-entropy objectives bias optimization toward wide valleys \citep{pratik2017entropysgd}.
In stochastic convex optimization, flat empirical minima can incur $\Omega(1)$ population risk while sharp minima of the same problems generalize optimally \citep{schliserman2025flat}.
Sharpness-aware minimization and its metric variants \citep{pierre2021sharpness,kwon2021asam,kim2022fishersam} sit inside the identity as choices of ascent direction (Appendix~\ref{app:flatness-full}).
The informationally natural sharpness is the log-determinant of the local Hessian (\S\ref{sec:flatness}).
Gradient methods avoid or escape strict saddle points under perturbation and noise \citep{jason2016gradient,rong2015escaping,chi2017how,cong2019sharp,hadi2018escaping}.
The contribution here is the local saddle-point mechanism expressed as information amplification plus noise injection (\S\ref{sec:saddle}).
No sharper escape rate is claimed.

RMSProp, Adam, momentum and acceleration, variance reduction, natural gradient, K-FAC, Shampoo, gradient clipping, parameter-free SGD, and optimistic online learning are each a specific reading of the same information ledger (Table~\ref{tab:optimizer-ledger}, \S\ref{sec:stochastic}, and Appendices~\ref{app:schedules-full} and~\ref{app:identities}).
Cyclical, super-convergence, warm-restart, and warmup-stable-decay schedules \citep{smith2017cyclical,smith2019superconvergence,loshchilov2017sgdr,wen2024wsd} redistribute when intrinsic-time costs are paid and when comparator information is transported.
At large-language-model scale, realized loss curves under constant schedules with linear cooldown track a suboptimality bound from nonsmooth convex theory closely enough that optimal learning rates transfer across schedules \citep{schaipp2025agreement}.
Optimizers introduced since 2023 \citep{xiangning2023symbolic,hong2023sophia,keller2024muon,nikhil2024soap,aaron2024road} choose different summaries of the realized gradient stream, each a reading of the information ledger through its choice of preconditioner $\Sigma_t$, forecast $m_t$, and schedule $\eta_t$ (Table~\ref{tab:optimizer-ledger}, \S\ref{sec:schedulefree-ledger}).
Shampoo's second-moment estimate has itself been recast as covariance estimation under relative-entropy minimization \citep{lin2025shampoo}.
Choosing the preconditioner to minimize expected intrinsic time reduces to that estimation problem once the entropy of the prior is subtracted (\S\ref{sec:precon}).

\section{Preliminaries}\label{sec:prelims}

Let $\Theta=\R^d$. A stochastic loss on round $t$ is denoted by $f_t:\Theta\to\R$, with gradient $g_t=\nabla f_t(\theta_t)$.
For stochastic optimization, $F(\theta)=\E_z f(\theta;z)$, $\calF_t$ denotes the history before drawing the stochastic gradient at $\theta_t$, and $\E[g_t\mid \calF_t]=\nabla F(\theta_t)$, with conditional gradient-noise covariance $C_t=\E[(g_t-\nabla F(\theta_t))(g_t-\nabla F(\theta_t))^\top\mid\calF_t]$.

For $\Sigma\succ0$, define the Gaussian information distance $D_\Sigma(u,\theta)=\half\|u-\theta\|_{\Sigma^{-1}}^2$.
If $q_{\theta,\Sigma}=\calN(\theta,\Sigma)$ and $\rho_{u,\Sigma}=\calN(u,\Sigma)$, then the relative entropy between them, written $\KL(\cdot\|\cdot)$ throughout, is $\KL(\rho_{u,\Sigma}\|q_{\theta,\Sigma})=D_\Sigma(u,\theta)$.
For a differentiable function $f$, the first-order Bregman remainder is $B_f(u,\theta)=f(u)-f(\theta)-\ip{\nabla f(\theta)}{u-\theta}$, so that
\begin{equation}\label{eq:bregman-exact}
\ip{\nabla f(\theta)}{\theta-u}=f(\theta)-f(u)+B_f(u,\theta)
\end{equation}
When $f$ is convex, $B_f(u,\theta)\ge0$. When $f$ is nonconvex, $B_f$ is signed and records the curvature.

The preconditioned SGD recursion is
\begin{equation}\label{eq:sgd}
\theta_{t+1}=\theta_t-\eta_t\Sigma_t g_t
\end{equation}
Ordinary SGD corresponds to $\Sigma_t=\sigma^2 I$ (often $\sigma^2=1$). A scalar covariance $\Sigma=\sigma^2I$ merely changes the unit in which the inverse temperature $\eta_t$ is measured; we write $\alpha_t=\eta_t\sigma^2$ for the resulting effective step size.

Throughout the paper the \emph{composite loss} fed to the Bayesian machinery is the affine $c_t(\theta):=\ip{g_t}{\theta}$, a linearization of $f_t$ at $\theta_t$.
For \emph{convex} round losses, the subgradient inequality gives $f_t(\theta_t)-f_t(\theta_*)\le\ip{g_t}{\theta_t-\theta_*}$, so any identity for $c_t$ yields a regret bound for $f_t$ at zero inflation.
For \emph{nonconvex} $f_t$, the subgradient inequality fails, but the one-step identity for $c_t$ still holds as a trajectory-level Bayesian bookkeeping identity (\S\ref{sec:saddle}).

\section{The SGD information ledger}\label{sec:ledger-main}

\subsection{SGD as Gaussian adaptive Bayes}\label{sec:gab}

The update \eqref{eq:sgd} is a Bayesian posterior-mean recursion. 
It follows from a Gaussian prior over parameters and the local linear gradient score.

Fix $\Sigma\succ0$ and let $q_t=\calN(\theta_t,\Sigma)$.
After observing $g_t$, define the local score $c_t(\theta)=\ip{g_t}{\theta}$. The adaptive Bayes update with inverse temperature $\eta_t>0$ is
\begin{equation}\label{eq:linear-tilt}
q_{t+1}(d\theta)=\frac{\exp(-\eta_t c_t(\theta))q_t(d\theta)}{Z_t},
\qquad
Z_t=\int \exp(-\eta_t c_t(\theta))q_t(d\theta)
\end{equation}

Reweighting a Gaussian by the exponential of a linear function gives back a Gaussian with the same covariance and a mean shifted by the SGD step $-\eta_t\Sigma g_t$. So the SGD iterate is the posterior mean of an exponential-family update with the local gradient score as the likelihood factor; the same calculation also writes down the log-partition function $\log Z_t$ as an explicit closed-form quadratic in $g_t$.

\begin{proposition}[The Gaussian linear-score update is SGD]\label{prop:gaussian-tilt}
If $q_t=\calN(\theta_t,\Sigma)$ and $c_t(\theta)=\ip{g_t}{\theta}$, then the update \eqref{eq:linear-tilt} satisfies
\[
q_{t+1}=\calN(\theta_t-\eta_t\Sigma g_t,\Sigma)
\]
Moreover,
\[
\log Z_t=-\eta_t\ip{g_t}{\theta_t}+\frac{\eta_t^2}{2}g_t^\top\Sigma g_t
\]
Thus the posterior mean follows the preconditioned SGD step $\theta_{t+1}=\theta_t-\eta_t\Sigma g_t$.
\end{proposition}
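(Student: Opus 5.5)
The plan is to compute the tilted density directly by completing the square in the exponent. Write the density of $q_t$ as proportional to $\exp\bigl(-\half(\theta-\theta_t)^\top\Sigma^{-1}(\theta-\theta_t)\bigr)$. Multiplying by $\exp(-\eta_t\ip{g_t}{\theta})$ gives an exponent
\[
-\half(\theta-\theta_t)^\top\Sigma^{-1}(\theta-\theta_t)-\eta_t\ip{g_t}{\theta}.
\]
Set $m=\theta_t-\eta_t\Sigma g_t$ and expand $-\half(\theta-m)^\top\Sigma^{-1}(\theta-m)$. Its cross term is $\theta^\top\Sigma^{-1}\theta_t-\eta_t\theta^\top g_t$, which matches the cross terms above. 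The two exponents therefore differ only by a constant independent of $\theta$, namely
\[
-\half\theta_t^\top\Sigma^{-1}\theta_t+\half m^\top\Sigma^{-1}m=-\eta_t\ip{g_t}{\theta_t}+\frac{\eta_t^2}{2}g_t^\top\Sigma g_t .
\]
After normalization, the constant cancels, so $q_{t+1}=\calN(m,\Sigma)$. The covariance is unchanged because the tilt is linear and adds nothing to the quadratic form.

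For $\log Z_t$, I would use the Gaussian moment generating function rather than tracking the constant. Here $Z_t=\E_{\theta\sim\calN(\theta_t,\Sigma)}\exp(\ip{-\eta_t g_t}{\theta})=\exp\bigl(-\eta_t\ip{g_t}{\theta_t}+\half\eta_t^2 g_t^\top\Sigma g_t\bigr)$, which gives the claimed formula directly. As a consistency check, this agrees with the constant from completing the square, since the normalizing factors of $q_t$ and of $\calN(m,\Sigma)$ are identical. The final sentence of the proposition then follows by reading off the mean, $\theta_{t+1}=m$.

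There is no genuine obstacle, because everything is finite-dimensional Gaussian algebra with $\Sigma\succ0$ guaranteeing integrability, so $Z_t<\infty$ for all $g_t$. The only step that needs care is the bookkeeping of the constant in the completed square: the sign of $\eta_t\ip{g_t}{\theta_t}$ and the factor $\tfrac12$ on $\eta_t^2 g_t^\top\Sigma\Sigma^{-1}\Sigma g_t=\eta_t^2 g_t^\top\Sigma g_t$. I would settle these using the MGF identity rather than by hand.
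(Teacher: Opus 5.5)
Your proposal is correct and follows the paper's proof essentially step for step: the Gaussian moment generating function gives $\log Z_t$, and completing the square in the tilted density gives $q_{t+1}=\calN(\theta_t-\eta_t\Sigma g_t,\Sigma)$. Your explicit check that the completed-square constant equals the MGF value is a small consistency step the paper leaves out, and it does not change the argument.
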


The proof is in Appendix~\ref{app:proofs-deferred}.

Define the mix loss
\[
\ell^{\mathrm{mix}}_t=-\eta_t^{-1}\log Z_t
=\ip{g_t}{\theta_t}-\eta_tQ_t,
\qquad
Q_t=\half g_t^\top\Sigma g_t
\]
The mean score under $q_t$ is $\E_{q_t}c_t=\ip{g_t}{\theta_t}$, so the centered cumulant cost is $\delta_t=\E_{q_t}c_t-\ell^{\mathrm{mix}}_t=\eta_t Q_t$.
This increment, the finite-temperature Gaussian cumulant incurred by the update, is a quadratic form in the observed gradient.

The Gaussian prior $\pi=\calN(0,\Sigma)$ plays distinct roles at once: it is a \emph{regularizer} ($\log\pi$ supplies a ridge penalty of strength $\Sigma^{-1}/2$), a \emph{decoder for comparators} ($\KL(\calN(\theta_*,\Sigma)\|\pi)=\half\theta_*^\top\Sigma^{-1}\theta_*$), and a \emph{Bayesian error bar} maintained at every step ($q_t$ always has covariance $\Sigma$ under linearization). Choosing $\Sigma$ adapts all three simultaneously: a non-isotropic prior yields preconditioned SGD; a Fisher-shaped prior yields natural gradient (\S\ref{sec:precon}).

Bayes updating admits two geometries that differ under variable $\eta_t$. The \emph{local} update \eqref{eq:linear-tilt} starts from $q_t$ each round and gives SGD itself. The \emph{prior-retempered} update recomputes the posterior from the prior $\pi$ at the current temperature: $p_t^{\mathrm{retemp}}=\calN(-\eta_t\Sigma G_{t-1},\Sigma)$ with $G_{t-1}=\sum_{s<t}g_s$. At fixed step size the two coincide when the local run starts at the prior, $q_1=\pi$. The SGD recursion as it is conventionally implemented in deep-learning frameworks (PyTorch, JAX, TensorFlow) is the local branch.

The linear score $\exp(-\eta_t\ip{g_t}{\theta})$ is also the weak-observation limit of a proper Gaussian likelihood. A finite Gaussian observation model whose covariance-contracting update is reheated by a process-noise step reproduces the SGD mean recursion term for term (Proposition~\ref{prop:proper-gaussian-likelihood}, Appendix~\ref{app:gab-likelihood}).
\subsection{The one-step identity}\label{sec:ledger}

Let $\rho_u=\calN(u,\Sigma)$ be a Gaussian comparator with the same covariance as $q_t$, and define $I_t(u)=\KL(\rho_u\|q_t)=D_\Sigma(u,\theta_t)=\half\|u-\theta_t\|_{\Sigma^{-1}}^2$.
Against any Gaussian comparator $\rho_u$, the inner-product regret $\eta_t\ip{g_t}{\theta_t-u}$ at round $t$ equals an immediate intrinsic-time cost $\eta_t^2 Q_t$ plus the change in the comparator's Gaussian relative-entropy information $I_t(u)-I_{t+1}(u)$. The identity involves no inequalities and no smoothness or convexity assumptions. It is the Gaussian specialization of the adaptive-Bayes one-step balance \cite{balsubramani2026adaptive}, which states the same accounting for reweighting updates over a general prior.

\begin{theorem}[Exact Gaussian-Bayes SGD identity]\label{thm:sgd-ledger}
For the update $\theta_{t+1}=\theta_t-\eta_t\Sigma g_t$, every comparator $u\in\R^d$ satisfies
\begin{equation}\label{eq:one-step-ledger}
\ip{g_t}{\theta_t-u}
=\eta_tQ_t+\frac{I_t(u)-I_{t+1}(u)}{\eta_t},
\qquad
Q_t=\half g_t^\top\Sigma g_t
\end{equation}
Equivalently,
\begin{equation}\label{eq:weighted-ledger}
\eta_t\ip{g_t}{\theta_t-u}
=\underbrace{I_t(u)-I_{t+1}(u)}_{\text{comparator-information transport}}
+\underbrace{\eta_t^2Q_t}_{\text{intrinsic-time cost}}
\end{equation}
Consequently,
\begin{equation}\label{eq:sum-weighted-ledger}
\sum_{t=1}^T \eta_t\ip{g_t}{\theta_t-u}
=I_1(u)-I_{T+1}(u)+\sum_{t=1}^T\eta_t^2Q_t
\end{equation}
\end{theorem}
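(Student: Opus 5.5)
The plan is to verify the weighted form \eqref{eq:weighted-ledger} by a direct polarization computation in the $\Sigma^{-1}$ inner product. Then divide by $\eta_t>0$ to get \eqref{eq:one-step-ledger}, and telescope to get \eqref{eq:sum-weighted-ledger}. No convexity or smoothness enters anywhere, since the only objects are the realized vector $g_t$, the fixed matrix $\Sigma\succ0$, and the definitions $I_t(u)=\half\|u-\theta_t\|_{\Sigma^{-1}}^2$ and $Q_t=\half g_t^\top\Sigma g_t$.

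The key step is to substitute $\theta_{t+1}=\theta_t-\eta_t\Sigma g_t$ into $I_{t+1}(u)$. Writing $u-\theta_{t+1}=(u-\theta_t)+\eta_t\Sigma g_t$, expand the squared $\Sigma^{-1}$-norm:
\[
I_{t+1}(u)=I_t(u)+\eta_t\,(u-\theta_t)^\top\Sigma^{-1}\Sigma g_t+\frac{\eta_t^2}{2}\,g_t^\top\Sigma\Sigma^{-1}\Sigma g_t .
\]
The middle term simplifies, using symmetry of $\Sigma$, to $\eta_t\ip{g_t}{u-\theta_t}$. The last term simplifies to $\eta_t^2 Q_t$. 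Rearranging gives $\eta_t\ip{g_t}{\theta_t-u}=I_t(u)-I_{t+1}(u)+\eta_t^2Q_t$, which is \eqref{eq:weighted-ledger}.

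As a cross-check matching the Bayesian reading, the same identity can be obtained from Proposition~\ref{prop:gaussian-tilt}. Use the Donsker--Varadhan/Gibbs equality $\KL(\rho_u\|q_{t+1})=\KL(\rho_u\|q_t)+\eta_t\E_{\rho_u}c_t+\log Z_t$. Insert $\E_{\rho_u}c_t=\ip{g_t}{u}$ and the closed form of $\log Z_t$; this reproduces the same three terms. I would present the direct algebra as the proof and mention the Bayes route as the interpretation.

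For \eqref{eq:sum-weighted-ledger}, sum \eqref{eq:weighted-ledger} over $t=1,\dots,T$ with $\Sigma$ fixed across rounds. The information terms telescope to $I_1(u)-I_{T+1}(u)$. The only point needing care, and the nearest thing to an obstacle, is that the telescoping requires the same $\Sigma$ (hence the same norm defining $I_t$) at every round. With a time-varying $\Sigma_t$ the differences $I_{t+1}$ computed under $\Sigma_t$ versus $\Sigma_{t+1}$ would not cancel. The statement fixes $\Sigma$, so this is satisfied. Varying $\eta_t$ causes no difficulty because the weighted form carries $\eta_t$ on the left rather than dividing the information terms.
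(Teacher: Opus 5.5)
Your proof is correct and follows the paper's own argument: expand $I_{t+1}(u)$ at $\theta_{t+1}=\theta_t-\eta_t\Sigma g_t$ using $\Sigma^\top=\Sigma$, rearrange to get \eqref{eq:weighted-ledger}, divide by $\eta_t$, and telescope with $\Sigma$ fixed across rounds. Your cross-check through the Gibbs relation $\KL(\rho_u\|q_{t+1})=\KL(\rho_u\|q_t)+\eta_t\E_{\rho_u}c_t+\log Z_t$ with the closed form of $\log Z_t$ is also valid; it is the Bayesian reading of the same algebra.
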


The proof is in Appendix~\ref{app:proofs-deferred}.

The identity recovers the classical one-line proof as a special case. When $\Sigma=\sigma^2I$ and $\alpha=\eta\sigma^2$, identity \eqref{eq:weighted-ledger} becomes the classical SGD contraction
\[
\ip{g_t}{\theta_t-\theta_*}=\tfrac{\alpha_t}{2}\|g_t\|^2+\tfrac{1}{2\alpha_t}\bigl(\|\theta_*-\theta_t\|^2-\|\theta_*-\theta_{t+1}\|^2\bigr)
\]
usually obtained by expanding $\|\theta_{t+1}-\theta_*\|^2=\|\theta_t-\alpha_tg_t-\theta_*\|^2$.

The identity is the standard Euclidean SGD algebra when $\Sigma=I$. The telescoping structure it exposes is the Euclidean instance of the Bregman three-point identity \cite{chen1993convergence}, which likewise holds as an equality and underlies the usual mirror-descent regret decomposition. Telescoped potential arguments organize classical first-order convergence proofs more broadly \cite{bansal2019potential}. The present reading adds the meaning of its terms; the equality itself is classical.

For unweighted linearized regret, summing \eqref{eq:one-step-ledger} by parts gives
\begin{equation}\label{eq:unweighted-abel}
\sum_{t=1}^T\ip{g_t}{\theta_t-u}
=\sum_{t=1}^T\eta_tQ_t+\frac{I_1(u)}{\eta_1}-\frac{I_{T+1}(u)}{\eta_T}
+\sum_{t=2}^T I_t(u)\left(\frac1{\eta_t}-\frac1{\eta_{t-1}}\right)
\end{equation}
This is the local-update bookkeeping of the temperature $1/\eta_t$. If the learning rate decreases, the temperature variation is nonnegative. If it increases, the identity records the negative variation.

\subsection{Covariance drift for adaptive preconditioning}\label{sec:cov-drift}

Many practical methods change geometry. Let $q_t=\calN(\theta_t,\Sigma_t)$, perform the linear-score update $q_t^+=\calN(\theta_t-\eta_t\Sigma_tg_t,\Sigma_t)$, and then set $q_{t+1}=\calN(\theta_{t+1},\Sigma_{t+1})$ with $\theta_{t+1}=\theta_t-\eta_t\Sigma_tg_t$.
Let $\rho=\calN(u,S)$ be any Gaussian comparator and define $K_t(\rho)=\KL(\rho\|q_t)$ and $\Xi_{t+1}(\rho)=\KL(\rho\|q_{t+1})-\KL(\rho\|q_t^+)$.
Then
\begin{equation}\label{eq:cov-drift}
\eta_t\left(\E_{q_t}c_t-\E_\rho c_t\right)
=K_t(\rho)-K_{t+1}(\rho)+\Xi_{t+1}(\rho)+\eta_t^2Q_t,
\qquad
Q_t=\half g_t^\top\Sigma_tg_t
\end{equation}
For Gaussian $\rho=\calN(u,S)$, the drift is explicit:
\begin{align}\label{eq:xi-gaussian}
\Xi_{t+1}(\rho)=\half\Big[&\tr\{(\Sigma_{t+1}^{-1}-\Sigma_t^{-1})S\}
+(u-\theta_{t+1})^\top(\Sigma_{t+1}^{-1}-\Sigma_t^{-1})(u-\theta_{t+1}) \\
&+\log\frac{\det\Sigma_{t+1}}{\det\Sigma_t}\Big]\nonumber
\end{align}
Thus adaptive preconditioners stay within the accounting. They add an information term measuring the cost or benefit of changing covariance geometry.

\subsection{Objective-level identities}\label{sec:loss-identity}

The information ledger so far concerns the linearized score $g_t^\top\theta$. To express actual objective loss, insert the Bregman identity \eqref{eq:bregman-exact}.

The linearized identity lifts to the realized excess loss by accounting for the gap between $f_t$ and its first-order Taylor expansion at $\theta_t$. The Bregman remainder $B_{f_t}(u,\theta_t)$ stays in the information ledger as a signed entry. For convex $f_t$ it is nonnegative and yields the curvature gain that classical regret bounds drop. For nonconvex $f_t$ it records the signed curvature that \S\ref{sec:saddle} uses to analyze saddle points.

\begin{theorem}[Exact sample-path loss identity]\label{thm:loss-identity}
Assume $g_t=\nabla f_t(\theta_t)$ and $\theta_{t+1}=\theta_t-\eta_t\Sigma g_t$. For every comparator $u$,
\begin{align}
\sum_{t=1}^T\eta_t\left[f_t(\theta_t)-f_t(u)+B_{f_t}(u,\theta_t)\right]&=I_1(u)-I_{T+1}(u)+\sum_{t=1}^T\eta_t^2Q_t\label{eq:loss-path}\\
\sum_{t=1}^T\eta_t\left[f_t(\theta_t)-f_t(u)\right]&=I_1(u)-I_{T+1}(u)+\sum_{t=1}^T\eta_t^2Q_t-\sum_{t=1}^T\eta_t B_{f_t}(u,\theta_t)\label{eq:excess-exact}
\end{align}
\end{theorem}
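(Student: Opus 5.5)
The plan is to combine the summed linearized identity \eqref{eq:sum-weighted-ledger} of Theorem~\ref{thm:sgd-ledger} with the pointwise Bregman identity \eqref{eq:bregman-exact}, applied round by round. No new analytic input should be needed. The objective-level statement is a relabeling of the inner-product regret, with every term kept as an equality.

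First, fix a comparator $u$ and a round $t$. Apply \eqref{eq:bregman-exact} with $f=f_t$ and $\theta=\theta_t$. Since $g_t=\nabla f_t(\theta_t)$ by assumption, this gives
\[
\ip{g_t}{\theta_t-u}=f_t(\theta_t)-f_t(u)+B_{f_t}(u,\theta_t).
\]
This requires only differentiability of $f_t$ at $\theta_t$; no convexity is used, so $B_{f_t}$ may be signed.

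Second, multiply this identity by $\eta_t$ and sum over $t=1,\dots,T$. The left side becomes $\sum_t\eta_t\ip{g_t}{\theta_t-u}$. Because the iterates obey $\theta_{t+1}=\theta_t-\eta_t\Sigma g_t$ with fixed $\Sigma$, Theorem~\ref{thm:sgd-ledger} identifies that sum with $I_1(u)-I_{T+1}(u)+\sum_t\eta_t^2Q_t$. Substituting yields \eqref{eq:loss-path}.

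Third, to obtain \eqref{eq:excess-exact}, move $\sum_t\eta_tB_{f_t}(u,\theta_t)$ to the right-hand side. Every sum is finite, so this rearrangement is legitimate.

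I expect no real obstacle. The only point needing care is consistency of hypotheses: the $g_t$ in the ledger must be exactly the gradient that enters the Bregman remainder, which the assumption $g_t=\nabla f_t(\theta_t)$ ensures. The covariance must also be the same fixed $\Sigma$ at every step so that the telescoping in Theorem~\ref{thm:sgd-ledger} applies. For a time-varying $\Sigma_t$ one would instead add the drift terms $\Xi_{t+1}$ from \eqref{eq:cov-drift}, but that case lies outside the present statement.
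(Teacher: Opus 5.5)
Your proposal is correct and is essentially the paper's own proof: apply the Bregman identity \eqref{eq:bregman-exact} with $f=f_t$, $\theta=\theta_t$ round by round, then substitute into the summed ledger \eqref{eq:sum-weighted-ledger}. Moving the Bregman sum across then gives \eqref{eq:excess-exact}. Your side remarks are accurate: no convexity is needed, $B_{f_t}$ may be signed, and the fixed $\Sigma$ is what makes the telescoping exact.
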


The proof is in Appendix~\ref{app:proofs-deferred}.

The two forms are the same identity, the second isolating the realized excess loss on the left.

Under unbiased mini-batch noise, with the step sizes and the starting point fixed, the pathwise identity extends to expectation, splitting the intrinsic-time cost into a signal piece $\half\nabla F^\top\Sigma\nabla F$ and a noise piece $\half\tr(\Sigma C_t)$.

\begin{corollary}[Expected stochastic identity]\label{cor:stochastic-identity}
Let $\theta_t$ be $\calF_t$-measurable, $g_t$ be the stochastic gradient drawn at round $t$ given $\calF_t$, and assume $\E[g_t\mid\calF_t]=\nabla F(\theta_t)$ with conditional covariance $C_t=\Cov(g_t\mid\calF_t)$. Fix $\Sigma$, the step sizes $\eta_t$ and the starting point $\theta_1$ deterministically. Then for every $u\in\R^d$,
\begin{equation}\label{eq:expected-stochastic}
\E\sum_{t=1}^T\eta_t\left[F(\theta_t)-F(u)+B_F(u,\theta_t)\right]=I_1(u)-\E I_{T+1}(u)+\sum_{t=1}^T\frac{\eta_t^2}{2}\E\left[\nabla F(\theta_t)^\top\Sigma\nabla F(\theta_t)+\tr(\Sigma C_t)\right]
\end{equation}
\end{corollary}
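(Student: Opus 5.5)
The plan is to start from the pathwise linearized identity \eqref{eq:sum-weighted-ledger} of Theorem~\ref{thm:sgd-ledger}, applied with the stochastic gradients $g_t$ and the fixed $\Sigma$. Then take expectations term by term and use the tower property with respect to $\calF_t$. Theorem~\ref{thm:loss-identity} is not invoked directly, because there $g_t=\nabla f_t(\theta_t)$ is the gradient of the round loss. Here the objective is $F$ and $g_t$ is only an unbiased estimate of $\nabla F(\theta_t)$. The Bregman identity \eqref{eq:bregman-exact} is applied to $F$ only after conditioning.

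The first step handles the left side. Since $\theta_t$ is $\calF_t$-measurable and $u$, $\eta_t$ are deterministic,
\[
\E\bigl[\eta_t\ip{g_t}{\theta_t-u}\mid\calF_t\bigr]=\eta_t\ip{\nabla F(\theta_t)}{\theta_t-u}=\eta_t\bigl[F(\theta_t)-F(u)+B_F(u,\theta_t)\bigr],
\]
where the last equality is \eqref{eq:bregman-exact} with $f=F$ and $\theta=\theta_t$. Taking total expectation and summing gives the left side of \eqref{eq:expected-stochastic}.

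The second step handles the intrinsic-time cost. Write $g_t=\nabla F(\theta_t)+\xi_t$ with $\E[\xi_t\mid\calF_t]=0$ and $\E[\xi_t\xi_t^\top\mid\calF_t]=C_t$. Expanding $Q_t=\half g_t^\top\Sigma g_t$, the cross term has zero conditional mean. The noise term gives $\half\E[\xi_t^\top\Sigma\xi_t\mid\calF_t]=\half\tr(\Sigma C_t)$. Multiplying by the deterministic $\eta_t^2$ and taking expectations yields the stated sum.

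The third step handles the information terms. $I_1(u)$ is deterministic because $\theta_1$ is fixed, so it passes through the expectation unchanged. $I_{T+1}(u)$ stays inside $\E$. Combining the three steps gives \eqref{eq:expected-stochastic}.

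The only delicate point is integrability, needed to justify the exchange of expectation and sum and the tower-property manipulations. I would assume, or note as implicit, that $\E\|g_t\|^2<\infty$, and that $\E\|\theta_t\|^2<\infty$ and $\E|F(\theta_t)|<\infty$ propagate inductively through the recursion. With these in place every term is finite, and the expectation of the exact pathwise identity is again an identity. Determinism of $\eta_t$ matters here: with adaptive step sizes, $\eta_t$ would not factor out of the conditional expectation, and the signal/noise split would fail.
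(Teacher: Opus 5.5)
Your proposal is correct and is essentially the paper's proof: condition on $\calF_t$, use unbiasedness with \eqref{eq:bregman-exact} on the left, the bias--variance split for $\E[Q_t\mid\calF_t]$, and the tower property for the information terms; the only difference is cosmetic, in that you take expectations of the summed identity \eqref{eq:sum-weighted-ledger} instead of telescoping the conditioned one-step identities. One refinement to your closing remark: a predictable ($\calF_t$-measurable) step size still factors out of $\E[\cdot\mid\calF_t]$, so the signal/noise split survives with $\eta_t^2$ moved inside the outer expectation; determinism is needed only for the exact displayed form, and it is steps depending on $g_t$ itself that break the split.
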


The proof is in Appendix~\ref{app:proofs-deferred}.

For mini-batches of size $b$ drawn independently with replacement, $C_t$ is the single-example gradient covariance divided by $b$; drawn without replacement from $N$ examples, it is that covariance times $(N-b)/(b(N-1))$. The realized inflation of intrinsic time follows this factor within $3.1\%$ at every batch size tested (Figure~\ref{fig:single-term}(a)). Larger batches reduce the stochastic component of intrinsic time.

\subsection{Intrinsic time}\label{sec:intrinsic-time}

The identification of the intrinsic-time density $Q_t=\half g_t^\top\Sigma_tg_t$ follows from the Gaussian/linearized structure.
The intrinsic time depends on the instance only through the realized preconditioned-gradient-norm sequence $g_t^\top\Sigma_tg_t$, while the comparator-information transport $I_1(u)-I_{T+1}(u)$ also depends on the endpoint $\theta_{T+1}$.
Curvature, noise, conditioning, and convexity act on the intrinsic time through the trajectory $\theta_t$, and hence through the sequence $g_t$, by setting how fast $g_t^\top\Sigma_tg_t$ decays.

Hence the heuristic successes of adaptive optimizers (AdaGrad's $1/\sqrt{\sum\|g\|^2}$ step size, Adam's second-moment tracker, RMSProp's EMA of squared gradients) can all be read as tracking the same information quantity. The learner's excess regret tracks the clock $V_T=\sum_tQ_t$ along the realized trajectory.
As an index of loss decay, $V_T$ gives the same forecasts as the iteration count wherever it grows as a power of $T$.
A power law in $V_T$ is then a power law in $T$, which fits the same curve and forecasts the same step at which a target loss is reached.
Gradient norms that decay or stay level as a power of $t$ make $V_T$ such a power law.
Along all $150$ runs of a real-data curriculum sweep $\log V_t$ is affine in $\log t$ to within $0.4\%$ of its variance.
Along real-data training prefixes the logarithm of the step-weighted clock $\sum_{s\le t}\eta_s^2Q_s$ is affine in $\log t$ to within $4\%$, so a clock separates from the step counter only where it bends against $t$ (Appendix~\ref{sec:eval-vclock-fits}).

To define the intrinsic-time fraction, set the comparator at the run's own endpoint, $u=\theta_{T+1}$, so that $I_{T+1}(u)=0$, and write $\Delta_t=\theta_t-\theta_{t+1}=\eta_t\Sigma g_t$ for the realized step, so that $\half\|\Delta_t\|_{\Sigma^{-1}}^2=\eta_t^2Q_t$ and, in the Euclidean metric $\Sigma=I$, $I_1(u)=\half\|\theta_{T+1}-\theta_1\|^2$.
With the comparator at the endpoint the transport telescopes to half the squared net displacement, so for any update sequence the intrinsic-time fraction, the cumulative cost over the cost plus the transport, is
\begin{equation}\label{eq:path-fraction}
\frac{\sum_{t\le T}\half\|\Delta_t\|^2}{\sum_{t\le T}\half\|\Delta_t\|^2+\half\|\theta_{T+1}-\theta_1\|^2}=\frac{1}{1+R},\qquad R=\frac{\|\theta_{T+1}-\theta_1\|^2}{\sum_{t\le T}\|\Delta_t\|^2}
\end{equation}
The intrinsic-time fraction is a statistic of the path's shape.
By Cauchy--Schwarz $R/T\le1$, with equality when every step is the same vector, so $R/T$ measures how consistently the steps point along the net displacement, and $R$ grows with the number of steps at a fixed consistency.
Optimizers that reach the same training loss can do so along paths with different step counts and different consistencies, so the fraction separates trajectories that the achieved losses do not (Figure~\ref{fig:eval-regime-discrimination}(a), \S\ref{sec:eval-regime-discrimination}).
Across a learning-rate sweep it rises monotonically through the test-error optimum.
At a double-descent peak it has no feature (Figure~\ref{fig:eval-regime-discrimination}(b,c)).

\begin{figure}[t]
\centering
\includegraphics[width=\textwidth]{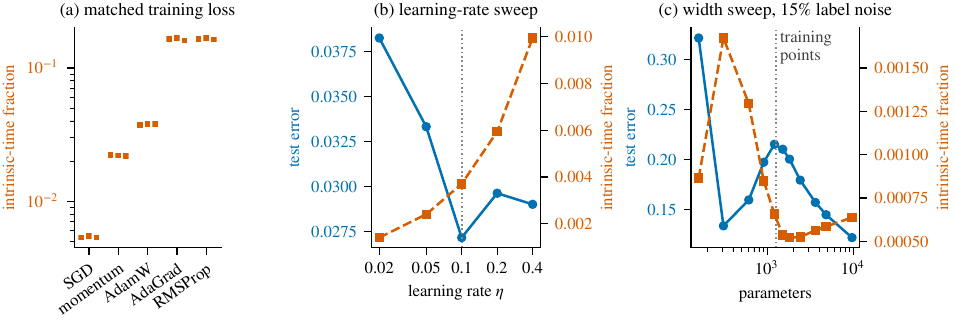}
\caption{\textbf{At a matched training loss the information ledger separates five optimizers by $31\times$ where their losses differ by less than $1.6\times$.}
In the learning-rate and width sweeps the intrinsic-time fraction follows the shape of the path.
(a) Five optimizers stopped at a common training loss, with a $1.04\times$ spread in training loss and a $1.53\times$ spread in test error. Each marker is one seed.
(b) A learning-rate sweep at a fixed budget: the test error (blue) is smallest at $\eta=0.1$, while the fraction (vermillion) rises with $\eta$ as larger steps make the path less consistent.
(c) A width sweep under $15\%$ label noise: the test error's double-descent peak sits at the interpolation threshold (dotted, $1257$ training points), where the fraction has no feature.}
\label{fig:eval-regime-discrimination}
\end{figure}

The intrinsic clock $V_T$ has a relaxed counterpart $W_T$, built from the variances of the scores:
\[
V_T(c)=\sum_{s\le T} Q_s,\qquad W_T(c)=\sum_{s\le T}\Var_{\theta\sim q_s}(c_s(\theta))
\]
The distinction collapses for SGD: the Gaussian posterior makes $Q_s=\half\Var_{\theta\sim q_s}(c_s(\theta))$ at every temperature. Here $\Var_{\theta\sim q_s}(\ip{g_s}{\theta})=g_s^\top\Sigma g_s$, so $V_T\equiv\half W_T$ and both reduce to the cumulative (preconditioned) squared-gradient quantity. For general exponential-family posteriors the two clocks diverge.

The clock $V_T$ can grow linearly, grow sub-linearly, or saturate.
\begin{itemize}[leftmargin=1.5em,itemsep=2pt]
\item \emph{Linear growth $V_T\asymp T$:} gradients remain bounded away from zero (adversarial or iid around a non-degenerate objective).
\item \emph{Sub-linear growth $V_T\asymp T^\beta$, $\beta\in(0,1)$:} gradients decay toward a critical region.
\item \emph{Saturation $V_T\to V_\infty<\infty$:} the optimizer converges fast enough that $\sum\|g_t\|^2$ converges, as with deterministic gradients on a strongly convex problem, where $\|g_t\|^2$ decays geometrically.
\end{itemize}
Regret bounds inherit the regime through $V_T$: the adversarial worst case $V_T\asymp T$ gives $\sqrt T$-type regret.
Under stochastic \emph{luckiness} (condition \eqref{eq:luckiness} below) fixed-rate SGD attains constant cumulative regret with a noiseless oracle, while with gradient noise $O(\log T)$ regret needs the decaying rate $\eta_t=1/(\mu t)$ (Appendix~\ref{app:identities}).

\section{Classical results as readings of the information ledger}\label{sec:readings}

\subsection{Convex convergence}\label{sec:convex}

When $F$ is convex, $B_F(u,\theta_t)\ge0$. Equation \eqref{eq:expected-stochastic} gives the weighted average identity
\begin{equation}\label{eq:convex-exact-average}
\frac{\E\sum_{t=1}^T\eta_t[F(\theta_t)-F(u)]}{S_T}
=\frac{I_1(u)-\E I_{T+1}(u)+\sum_{t=1}^T\eta_t^2\E Q_t-\E\sum_{t=1}^T\eta_tB_F(u,\theta_t)}{S_T},
\quad S_T=\sum_{t=1}^T\eta_t
\end{equation}
Only after dropping the terminal information and the nonnegative curvature gain do we obtain the familiar bound
\begin{equation}\label{eq:convex-bound}
\E[F(\bar\theta_T)-F(u)]
\le
\frac{I_1(u)+\sum_{t=1}^T\eta_t^2\E Q_t}{S_T},
\qquad
\bar\theta_T=\frac{\sum_t\eta_t\theta_t}{S_T}
\end{equation}
where Jensen's inequality is the only additional step. This is Nemirovski--Yudin's classical regret bound and Zinkevich's bound for online gradient descent: in our framework it is the identity minus two nonnegative remainders, the terminal relative entropy and the curvature gain.

Suppose $F$ is $\mu$-strongly convex and $u=\theta_\star\in\arg\min F$. With $\Sigma=I$,
\[
B_F(\theta_\star,\theta_t)\ge \frac{\mu}{2}\|\theta_t-\theta_\star\|^2=\mu I_t(\theta_\star)
\]
Applying \eqref{eq:weighted-ledger} with $u=\theta_\star$ and rearranging to isolate $I_{t+1}(\theta_\star)$ gives $I_{t+1}(\theta_\star)=I_t(\theta_\star)-\eta_t\ip{g_t}{\theta_t-\theta_\star}+\eta_t^2Q_t$.
Since $F$ is convex with minimizer $\theta_\star$, $\ip{\nabla F(\theta_t)}{\theta_t-\theta_\star}\ge F(\theta_t)-F(\theta_\star)\ge0$.  Taking conditional expectation given $\calF_t$, using $\E[\ip{g_t}{\theta_t-\theta_\star}\mid\calF_t]=\ip{\nabla F(\theta_t)}{\theta_t-\theta_\star}$ ($\theta_t$ is $\calF_t$-measurable), and inserting the strong-convexity lower bound $\ip{\nabla F(\theta_t)}{\theta_t-\theta_\star}\ge F(\theta_t)-F(\theta_\star)+B_F(\theta_\star,\theta_t)\ge\mu I_t(\theta_\star)$ (where the last step uses $B_F(\theta_\star,\theta_t)\ge\mu\|\theta_t-\theta_\star\|^2/2=\mu I_t(\theta_\star)$ and $F(\theta_t)-F(\theta_\star)\ge0$) yields
\begin{equation}\label{eq:strong-recursion}
\E[I_{t+1}(\theta_\star)\mid\calF_t]
\le (1-\mu\eta_t)I_t(\theta_\star)+\eta_t^2\E[Q_t\mid\calF_t]
\end{equation}
This is the standard stochastic-approximation recursion written as information contraction (factor $1-\mu\eta_t$) plus intrinsic-time injection. For the canonical Robbins--Monro schedule $\eta_t=1/(\mu t)$ with $\E[Q_t\mid\calF_t]\le Q_{\max}$ it telescopes to $\E I_{T+1}(\theta_\star)\le Q_{\max}\sum_{t\le T}t^{-1}/(\mu^2T)=O(\log T/(\mu^2T))$.
Keeping $F(\theta_t)-F(\theta_\star)\ge\mu I_t(\theta_\star)$ as well makes the factor $1-2\mu\eta_t$ and gives the classical $O(1/(\mu^2T))$ distance rate, without the $\log T$ \citep{arkadi2009robust}.
For $F$ itself the classical last-iterate rate is $O(\log T/(\mu T))$. Tail-averaging or the linearly weighted average removes that rate's $\log T$ factor. The $\eta_t$-weighted average decays only as $1/\log T$, since its weights concentrate on the earliest iterates.

The stochastic-luckiness condition of \cite{balsubramani2026adaptive}, a comparator-centered low-noise condition, specializes to SGD as follows. For i.i.d. stochastic gradients,
\begin{equation}\label{eq:luckiness}
\E[\ip{g_t}{\theta-\theta_*}^2]\le\varkappa\E\ip{\bar g}{\theta-\theta_*},\qquad\bar g=\E[g_t]
\end{equation}
For an $L$-smooth, $\mu$-strongly convex $F$ with $\bar g=\nabla F(\theta_t)$, iterates within distance $D$ of $\theta_*$ and gradient noise of variance $\sigma^2$ along $\theta-\theta_*$, condition \eqref{eq:luckiness} holds with finite $\varkappa=O(LD^2+\sigma^2/\mu)$.
The regional analysis behind the constant, the failure of the condition on an unbounded domain, and the noise floor of fixed-rate SGD are in Appendix~\ref{app:identities}.

The weighted average identity is more informative than the bound it relaxes to. The right side of \eqref{eq:convex-exact-average} shows the quantities normally hidden by analysis: terminal information $I_{T+1}$, curvature gain $\sum_t\eta_tB_F$, and the realized intrinsic-time cost $\sum_t\eta_t^2Q_t$. Two SGD runs can have identical worst-case bounds but very different decompositions, so the decomposition is a diagnostic object in its own right.

On quadratics, against the minimizer, the curvature gain is half the linearized regret, a relation we call the Bregman half-gain. For a quadratic objective $F(\theta)=\half(\theta-\theta_*)^\top A(\theta-\theta_*)$ with $A\succ0$, the first-order Bregman remainder is $B_F(\theta_*,\theta_t)=\half\|\theta_*-\theta_t\|_A^2$, which equals the excess loss $F(\theta_t)-F^*$. Consequently $\sum_t\eta_t[F(\theta_t)-F^*]=\sum_t\eta_t B_F(\theta_*,\theta_t)$ on the LHS of \eqref{eq:loss-path}, so identity \eqref{eq:loss-path} with noiseless gradients makes the realized regret $\sum_t\eta_t[F(\theta_t)-F^*]$ exactly half of $I_1(\theta_*)-I_{T+1}(\theta_*)+\sum_t\eta_t^2Q_t$. Under unbiased gradient noise a zero-mean martingale term is added. Against the bound $I_1(\theta_*)+\sum_t\eta_t^2Q_t$ the ratio is therefore one half less half the terminal-information share, up to that martingale term. On noisy quadratics at stable step sizes it lies between $0.492$ and $0.500$ (Appendix~\ref{app:sanity}). The curvature gain is the half that classical worst-case analyses hide.

\begin{proposition}[Exact tracking identity for a shifting comparator]\label{prop:tracking-ledger}
For any comparator sequence $\{u_t\}_{t=1}^T$ and $\theta_{t+1}=\theta_t-\eta_t\Sigma g_t$,
\begin{equation}\label{eq:tracking-ledger}
\sum_{t=1}^T\eta_t\ip{g_t}{\theta_t-u_t}
=I_1(u_1)-I_{T+1}(u_T)+\sum_{t=1}^T\eta_t^2Q_t+D_T,
\qquad
D_T=\sum_{t=1}^{T-1}\bigl[I_{t+1}(u_{t+1})-I_{t+1}(u_t)\bigr]
\end{equation}
where the comparator drift $D_T=\tfrac12\sum_{t=1}^{T-1}\ip{u_{t+1}-u_t}{u_{t+1}+u_t-2\theta_{t+1}}_{\Sigma^{-1}}$ satisfies $|D_T|\le P_T\max_t\half\|u_{t+1}+u_t-2\theta_{t+1}\|_{\Sigma^{-1}}$ with path length $P_T=\sum_{t}\|u_{t+1}-u_t\|_{\Sigma^{-1}}$.
\end{proposition}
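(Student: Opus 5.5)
The plan is to apply the one-step identity \eqref{eq:weighted-ledger} of Theorem~\ref{thm:sgd-ledger} round by round with the comparator $u=u_t$. This gives $\eta_t\ip{g_t}{\theta_t-u_t}=I_t(u_t)-I_{t+1}(u_t)+\eta_t^2Q_t$ for each $t$. Theorem~\ref{thm:sgd-ledger} holds for every fixed comparator and involves no assumptions, so nothing beyond the recursion $\theta_{t+1}=\theta_t-\eta_t\Sigma g_t$ is needed.

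Summing over $t=1,\dots,T$, the transport terms no longer telescope directly. I would regroup them as
\[
\sum_{t=1}^T\bigl[I_t(u_t)-I_{t+1}(u_t)\bigr]=I_1(u_1)-I_{T+1}(u_T)+\sum_{t=1}^{T-1}\bigl[I_{t+1}(u_{t+1})-I_{t+1}(u_t)\bigr].
\]
To check this, add and subtract $I_{t+1}(u_{t+1})$ for $t\le T-1$. The sum then becomes a telescoping chain $I_1(u_1)-I_{T+1}(u_T)$ plus the drift corrections. This yields \eqref{eq:tracking-ledger} with $D_T$ as defined.

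For the explicit form of the drift, use $I_{t+1}(u)=\half\|u-\theta_{t+1}\|_{\Sigma^{-1}}^2$. A difference of squares in the $\Sigma^{-1}$ inner product gives $\|a\|^2-\|b\|^2=\ip{a-b}{a+b}$ with $a=u_{t+1}-\theta_{t+1}$ and $b=u_t-\theta_{t+1}$. Hence each summand equals $\half\ip{u_{t+1}-u_t}{u_{t+1}+u_t-2\theta_{t+1}}_{\Sigma^{-1}}$.

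For the bound, apply Cauchy--Schwarz in the $\Sigma^{-1}$ norm to each summand. This gives $|\text{summand}|\le\|u_{t+1}-u_t\|_{\Sigma^{-1}}\cdot\half\|u_{t+1}+u_t-2\theta_{t+1}\|_{\Sigma^{-1}}$. Then pull out the maximum of the second factor over $t$ and sum the first factor, which gives $P_T$. The path length as stated sums over $t\le T-1$; including an extra nonnegative term would only loosen the bound.

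There is no real obstacle here. The only step needing care is the index bookkeeping in the regrouping, in particular the correct endpoints $I_1(u_1)$ and $I_{T+1}(u_T)$.
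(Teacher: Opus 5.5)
Your proposal is correct and follows the paper's own proof step for step. Both apply the one-step identity \eqref{eq:weighted-ledger} with comparator $u_t$ and re-index the transport sum to isolate $D_T$, then take a difference of squares around the shared center $\theta_{t+1}$ and finish with Cauchy--Schwarz in the $\Sigma^{-1}$ inner product.
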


Setting $u_t\equiv u$ makes $D_T=0$ and recovers \eqref{eq:sum-weighted-ledger}. The proposition is the SGD instance of the dynamic-regret path-length penalty of shifting-comparator online learning.

The proof is in Appendix~\ref{app:proofs-deferred}.

\subsection{Learning-rate design from intrinsic time}\label{sec:schedules}

The intrinsic clock is
\begin{equation}\label{eq:intrinsic-clock}
V_T=\sum_{t=1}^TQ_t=\half\sum_{t=1}^Tg_t^\top\Sigma_t g_t
\end{equation}
If a comparator budget $\Gamma$ approximates $I_1(u)$, then the fixed-rate part of \eqref{eq:sum-weighted-ledger} balances $\Gamma/\eta+\eta V_T$.
The hindsight equalizer is
\begin{equation}\label{eq:eta-star}
\eta^\star=\sqrt{\frac{\Gamma}{V_T}}
\end{equation}
The predictable online analogue is
\begin{equation}\label{eq:sqrt-schedule}
\eta_t=\min\left\{\eta_{\max}, C\sqrt{\frac{\Gamma}{V_{t-1}+\varepsilon}}\right\},
\qquad
V_{t-1}=\sum_{s<t}Q_s
\end{equation}
For $\Sigma=I$, this is a global AdaGrad-like schedule driven by $\sum_{s<t}\|g_s\|^2$. In the Gaussian-Bayes interpretation it equalizes comparator information with intrinsic time.

AdaGrad is the square-root schedule \eqref{eq:sqrt-schedule} on this clock.
Substituting $V_{t-1}=\half\sum_{s<t}\|g_s\|^2$ into \eqref{eq:sqrt-schedule} gives, in the unclipped regime,
\begin{equation}\label{eq:adagrad-scalar}
\eta_t=C\sqrt{\frac{2\Gamma}{\sum_{s<t}\|g_s\|^2}},\qquad\alpha_t=\frac{C\sqrt{2\Gamma}}{\sqrt{\sum_{s<t}\|g_s\|^2}}
\end{equation}
Comparing with the canonical AdaGrad-Norm schedule \citep{DuchiHazanSinger2011,matthew2010less}, $\alpha_t^{\mathrm{AdaGrad}}=D/\sqrt{\sum\|g_s\|^2}$, we see \eqref{eq:adagrad-scalar} is AdaGrad-Norm with $D=C\sqrt{2\Gamma}$. Per-coordinate AdaGrad arises identically from a diagonal prior $\pi=\calN(0,\diag(\sigma_i^2))$ with $Q_t=\half\sum_i\sigma_i^2g_{t,i}^2$ and a coordinate-wise schedule. Adam's $\beta_2$-EMA replaces the cumulative clock with an exponentially discounted one, $v_{t,i}=(1-\beta_2)\sum_{s\le t}\beta_2^{t-s}g_{s,i}^2$ for coordinate $i$. The $\epsilon$-stabilizer is the clipped regime of \eqref{eq:sqrt-schedule}. Adam's $1/(1-\beta_1^t)$ bias correction cancels leading-order predictable bias.

A local schedule can be obtained by choosing a reference $r_t$ and centering the score $c_t^r(\theta)=\ip{g_t}{\theta-r_t}$. The posterior update is unchanged by the constant shift. Let $S_t=\ip{g_t}{\theta_t-r_t}$, the round's linearized progress toward $r_t$, which we call the \emph{pressure}. The Gaussian mix loss of the centered score at inverse temperature $\eta$ is $\ell^{\mathrm{mix}}_t(\eta)=S_t-\eta Q_t$. Choosing $\eta$ so that this one-round mix loss hits a prescribed target is the \emph{pressure-target} construction of \cite{balsubramani2026adaptive}. If a target $a_t<S_t$ is prescribed, solving the pressure equation $\ell^{\mathrm{mix}}_t(\eta_t)=a_t$ gives
\begin{equation}\label{eq:pressure}
\eta_t=\frac{S_t-a_t}{Q_t}
\end{equation}
A fixed-fraction target $a_t=(1-\lambda_t)S_t$ yields
\begin{equation}\label{eq:polyak-info}
\eta_t=\lambda_t\frac{S_t}{Q_t}=\frac{2\lambda_t\ip{g_t}{\theta_t-r_t}}{g_t^\top\Sigma g_t}
\end{equation}
provided $S_t>0$. For $a_t=f_t(\theta_*)$ (if known) and $\lambda_t=\half$, (\ref{eq:pressure}) collapses to the classical Polyak step size \citep{boris1987introduction}: Polyak-step SGD is the local Bayes update whose current mix loss hits the target $f_t(\theta_*)$. The update calibrates the one-step normalizer, round by round, to a prescribed free-energy level. The half is the information-maximizing choice, since the round transports $\eta_tS_t-\eta_t^2Q_t$ and that is maximized at $\eta_t=S_t/(2Q_t)$; the same step reappears as the greedy schedule \eqref{eq:greedy-descent}.

Appendix~\ref{app:schedules-full} develops further constructions. Matrix clocks extend the intrinsic-time density to predictable diagonal or full-matrix $\Sigma_t$. Parameter-free SGD, the classical subgradient step, and line search run on the same clock. Schedule-Free SGD absorbs the schedule into iterate averaging. Its tight convex rate is a projection of the same identity (\S\ref{sec:schedulefree-ledger}). When the spectrum of a quadratic objective is known, cycling the step size through the reciprocals of the Chebyshev roots on $[\lambda_{\min},\lambda_{\max}]$ attains the accelerated per-round contraction $\rho_{\mathrm{cyc}}=(\sqrt\kappa-1)/(\sqrt\kappa+1)$ in the limit of long cycles (\S\ref{sec:cheby-cyclic}).
Alternating the greedy descent step with a high-temperature escape step, switched on the realized return on intrinsic time, gives pressure-cycled SGD (\S\ref{sec:pressure-cycle}).
On a synthetic two-basin landscape it returns the deep minimum on every seed.
On real networks the trigger fires while the training loss is still falling when it uses mini-batch losses, and after convergence when it uses the full-batch loss, so it separates a stall neither from continued descent nor from convergence (Appendix~\ref{app:interventions}).

\subsection{The stochastic-gradient information ledger}\label{sec:stochastic}

Let $\hat g_t=g_t+\xi_t$ with $\E[\xi_t\mid\calF_t]=0$ and $\E[\|\xi_t\|^2\mid\calF_t]\le\sigma_g^2$. The learner runs the local update $\theta_{t+1}=\theta_t-\eta_t\Sigma\hat g_t$ on $\hat c_t(\theta)=\ip{\hat g_t}{\theta}$. Applying the weighted ledger \eqref{eq:sum-weighted-ledger} to the realized score $\hat g_t$ and splitting the comparator regret of the true gradient $g_t$ off the realized one gives the weighted stochastic decomposition
\begin{equation}\label{eq:stochastic-decomposition}
\sum_{t\le T}\eta_t\ip{g_t}{\theta_t-\theta_*}
=M_T^{\mathrm{est}}+B_T(\rho)+\sum_{t\le T}\eta_t^2\hat Q_t
\end{equation}
where the (weighted) estimation martingale is
\[
M_T^{\mathrm{est}}=\sum_{t\le T}\eta_t\ip{\theta_t-\theta_*}{g_t-\hat g_t},
\qquad \E[M_T^{\mathrm{est}}\mid\calF_0]=0
\]
$B_T(\rho)=I_1(\theta_*)-I_{T+1}(\theta_*)$ is the terminal comparator-information balance.  The quantity $\hat Q_t=\half\hat g_t^\top\Sigma\hat g_t$ is the intrinsic-time density of the stochastic gradient.  With adaptive covariance $\Sigma_t$ an additional weighted covariance-drift term $\sum_t\Xi_{t+1}(\rho)$ \eqref{eq:xi-gaussian} appears on the right, vanishing for fixed $\Sigma$. The regret and the estimation martingale are each weighted by one power of $\eta_t$, the intrinsic-time term by $\eta_t^2$, and the balance $B_T(\rho)$ by none, as in \eqref{eq:sum-weighted-ledger}. Noise enters in two distinct places: (i) as a zero-mean martingale in the regret, $O(\sigma_g\sqrt{\sum_t\eta_t^2\|\theta_t-\theta_*\|^2})$ in typical realizations, zero in expectation; and (ii) as an additive inflation of the intrinsic clock, $\E[\hat V_T]=\E[V_T^{\det}]+\half\sum_t\E\tr\{\Sigma\Cov(\xi_t\mid\calF_t)\}$, where $\hat V_T=\sum_t\hat Q_t$ and $V_T^{\det}=\half\sum_tg_t^\top\Sigma g_t$.

Variance reduction is intrinsic-time reduction.
An estimator $\tilde g_t$ used as the score, with $\E[\|\tilde g_t-g_t\|^2\mid\calF_t]\to0$, has $\hat Q_t\to Q_t^{\det}=\half g_t^\top\Sigma g_t$, recovering the deterministic intrinsic time. This gives an information-theoretic reading of SVRG \citep{rie2013accelerating}, SAGA \citep{aaron2014saga}, and variance-reduced Adam: each reduces the algorithm's intrinsic time, which is the quantity recorded by \eqref{eq:stochastic-decomposition}.

Biased estimators enter through a predictable bias term. For the EMA estimator, Adam-style bias correction cancels the leading-order part of the bias term. A Catoni-robustified score keeps the clock finite under heavy-tailed gradients, close in spirit to gradient clipping. Both readings are developed in Appendix~\ref{app:identities}.

Momentum enters as a predictable forecast. If $m_t\in\R^d$ is any predictable forecast of $g_t$ ($\calF_t$-measurable), the side factor $\exp(\eta_t\ip{m_t}{\theta})$ of \cite{balsubramani2026adaptive} produces the residual composite loss $c_t(\theta)=\ip{g_t-m_t}{\theta}$, so the update becomes optimistic SGD:
\begin{equation}\label{eq:optimistic}
\theta_{t+1}=\theta_t-\eta_t\Sigma(g_t-m_t)-\eta_{t+1}\Sigma m_{t+1}
\end{equation}
The intrinsic time is computed on the residual:
\begin{equation}\label{eq:residual-V}
V_T(c)=\half\sum_{t\le T}\|g_t-m_t\|_{\Sigma}^2
\end{equation}
Well-forecast gradients shrink $V_T$ below $\half\sum\|g_t\|_\Sigma^2$, giving the SGD analogue of optimistic-Hedge improvement \citep{ChiangEtAl2012,rakhlin2013online}. The residual \eqref{eq:residual-V} is the term retained by the classical optimistic bound, obtained through a Young inequality. It is not by itself the cost term of the information ledger. Writing optimistic SGD in its dual form $z_{t+1}=z_t-\eta_t\Sigma g_t$, $\theta_t=z_t-\eta_t\Sigma m_t$, the exact one-step term is
\begin{equation}\label{eq:optimistic-exact}
\eta_t\ip{g_t}{\theta_t-u}=\half\|u-z_t\|_{\Sigma^{-1}}^2-\half\|u-z_{t+1}\|_{\Sigma^{-1}}^2+\eta_t^2\Bigl[\half\|g_t-m_t\|_\Sigma^2-\half\|m_t\|_\Sigma^2\Bigr]
\end{equation}
The correction $-\half\|m_t\|^2_\Sigma$ is not negligible. Dropping it breaks the equality by $\half\eta_t^2\|m_t\|_\Sigma^2$ per round, which equals the intrinsic-time cost $\eta_t^2Q_t$ of plain SGD when $m_t=g_t$. The bracketed term is signed, so a confident forecast can make a round's cost term negative.

The one-step-back forecast $m_t=g_{t-1}$ \citep{ChiangEtAl2012} gives the residual $\half\sum\|g_t-g_{t-1}\|_\Sigma^2$, which is small when gradients drift slowly. The EMA forecast $m_t=(1-\beta)m_{t-1}+\beta g_{t-1}$ averages past gradients with exponentially decaying weights. Nesterov acceleration \citep{yurii1983method} takes its gradient step at the look-ahead point $y_t=\theta_t+\mu(\theta_t-\theta_{t-1})$, at momentum coefficient $\mu$ and constant step $\eta$.
Its iterates are those of optimistic SGD with the forecast $m_{t+1}=m_t-g_t+\nabla f_t(y_t)-\eta^{-1}\mu\Sigma^{-1}(\theta_t-\theta_{t-1})$, which is available before round $t+1$ at the cost of one extra gradient query at $y_t$. The oracle lookahead $m_t=g_t$ drives the residual to zero but is not admissible, since admissibility requires $\calF_t$-measurability.

\subsection{Preconditioning as prior geometry}\label{sec:precon}

A non-isotropic Gaussian prior yields preconditioned SGD.
For $\pi=\calN(0,P)$ the Gaussian bookkeeping gives $q_t=\calN(\theta_t,P)$ and $\theta_{t+1}=\theta_t-\eta_tPg_t$. The intrinsic-time density becomes $Q_t=\half\ip{g_t}{Pg_t}=\half\|g_t\|_P^2$: the intrinsic clock of preconditioned SGD is half the cumulative squared preconditioned gradient norm.

Setting $P=\calF(\theta_t)^{-1}$, where $\calF(\theta_t)$ is the Fisher information matrix, yields natural gradient \citep{amari1998neural}. Within our framework this is an adaptive prior $\pi_t=\calN(0,\calF(\theta_t)^{-1})$. For exponential families with log-likelihood losses, $\half\|g_t\|_{\calF^{-1}}^2$ is, to second order, the per-round relative entropy between neighboring model predictions once multiplied by $\eta_t^2$ (in the information ledger $Q_t$ enters only through the per-round cost $\eta_t^2Q_t$). Composing this preconditioner with a worst-case ascent step gives the Fisher SAM of \citep{kim2022fishersam}. Its entry in the information ledger is a substitution into the SAM identity (\S\ref{sec:sam-ledger}).

Block-structured $P_t$ as in K-FAC \citep{james2015optimizing} or Shampoo \citep{vineet2018shampoo} gives a pathwise identity with intrinsic time $\half\sum\|g_t\|_{P_t}^2$. Minimizing cumulative intrinsic time selects a preconditioner only once the scale of $P_t$ is fixed: $Q_t$ is homogeneous of degree one in $P$, so an unconstrained minimization drives $P\to0$, at which $Q_t$ and the update both vanish. Under a normalization such as $\tr(P_t^{-1})$ fixed, the expected criterion becomes a well-posed design principle (minimize $\E\|g\|_{P_t}^2$ at fixed budget) and a diagnostic (is the realized $\|g\|_{P_t}$ actually small?).

For gradients with a fixed second moment $C=\E[g_tg_t^\top]\succ0$ the expected intrinsic time is $\E[Q_t]=\half\tr(PC)$.
Fixing $\tr(P^{-1})$ gives the minimizer $P\propto C^{-1/2}$, the power that full-matrix AdaGrad and Shampoo apply.
Subtracting the entropy of the Gaussian prior, $\half\log\det P$ up to a constant, turns the criterion into covariance estimation under relative entropy: for every $P\succ0$,
\begin{equation}\label{eq:precon-kl}
\half\tr(PC)-\half\log\det P=\KL\bigl(\calN(0,C)\,\|\,\calN(0,P^{-1})\bigr)+\frac d2+\half\log\det C
\end{equation}
which follows by expanding the Gaussian relative entropy.
Over all $P$ the minimizer is $P=C^{-1}$, the natural-gradient preconditioner.
Over Kronecker-factored $P=A\otimes B$ the right side is the objective by which \cite{lin2025shampoo} recast Shampoo's second-moment estimate, with $S=P^{-1}$ and their damping absorbed into $C$, so the criterion selects their Kronecker estimate $S^\star$.
The criterion's preconditioner is then $(S^\star)^{-1}$, where Shampoo and its relative-entropy variant apply $(S^\star)^{-1/2}$.

Exponential-weights in an ambient geometry other than relative entropy yields mirror descent. For SGD the Euclidean geometry is natural, but we could equally use the entropic geometry on the probability simplex (exponentiated gradients) or a general Bregman divergence (mirror-descent SGD). In every case the intrinsic time is the Bregman divergence $D_\psi(\theta_t,\theta_{t+1})$ between consecutive iterates, which reduces to $\tfrac{\eta_t^2}{2}\|g_t\|^2_\Sigma$ for a quadratic mirror map and is otherwise only approximated by a local-curvature expansion. The identity is exact for unconstrained mirror descent and for constraint sets that are affine within the mirror map's domain, such as the simplex in the entropic geometry, where the projection is a normalization. Under any other Bregman projection the Pythagorean step becomes an inequality, so the account holds in that direction only.

\subsection{Optimizer families as readings of the information ledger}\label{subsec:optimizer-ledger}

Each SGD-class optimizer is one reading of the same information ledger, fixed by its choice of preconditioner $\Sigma_t$, forecast $m_t$, and schedule $\eta_t$ (Table~\ref{tab:optimizer-ledger}). The quantity it records is $Q_t=\half s_t^\top\Sigma_t s_t$ in the score the method actually plays (the vector $s_t$ in its update $\theta_{t+1}=\theta_t-\eta_t\Sigma_ts_t$), which for the first-moment methods below is a moment estimate in place of the raw gradient $g_t$. When $\Sigma_t$ varies with $t$ the pathwise account includes the metric-drift term $\Delta_t^{\mathrm{metric}}(u)$ of \eqref{eq:add-variable-metric-ledger} alongside $Q_t$. The rows marked $\dagger$ below are exact only with that term included.

\begin{table}[ht]
\centering
\small
\renewcommand{\arraystretch}{1.25}
\begin{tabularx}{\textwidth}{@{}>{\raggedright\arraybackslash}p{0.17\textwidth}>{\raggedright\arraybackslash}p{0.32\textwidth}>{\raggedright\arraybackslash}X@{}}
\toprule
\textbf{Optimizer} & \textbf{Choice of $(\Sigma_t,m_t,\eta_t)$} & \textbf{Information-ledger term it incurs} \\
\midrule
Plain SGD & fixed $\Sigma$, no forecast & $Q_t=\half g_t^\top\Sigma g_t$ in fixed $\Sigma$-geometry \\
Natural gradient$^\dagger$~\cite{amari1998neural} & $\Sigma_t=\calF(\theta_t)^{-1}$, $\calF$ the Fisher matrix & Fisher-metric intrinsic time $\half\|g_t\|_{\calF^{-1}}^2$ \\
K-FAC$^\dagger$~\cite{james2015optimizing}, Shampoo$^\dagger$~\cite{vineet2018shampoo} & block-structured $\Sigma_t=P_t$ & $\half\|g_t\|_{P_t}^2$ per round \\
Muon$^\dagger$~\cite{keller2024muon} & momentum buffer $B_t$, orthogonalized (Newton--Schulz) & $\half\langle B_t,(B_tB_t^\top)^{-1/2}B_t\rangle=\half\|B_t\|_*$, the nuclear norm; the Newton--Schulz iterate is a deliberate approximation to this \\
AdaGrad-Norm~\cite{DuchiHazanSinger2011} & $\eta_t\propto V_t^{-1/2}$ (the scalar variant; per-coordinate AdaGrad instead sets $\Sigma_t$) & terminal-remainder-saturating square-root clock \\
RMSProp$^\dagger$~\cite{tijmen2012lecture}, Adam$^\dagger$~\cite{diederik2014adam} & $\Sigma_t=\diag(\sqrt{\hat v_t}+\varepsilon)^{-1}$, score $\hat m_t$, fixed $\eta$ & $\half\|\hat m_t\|^2_{\Sigma_t}$ \\
Schedule-Free SGD~\cite{aaron2024road} & averaging in place of $\eta_t$ & $V_T$-tracking accumulator (\S\ref{sec:schedulefree-ledger}) \\
SAM~\cite{pierre2021sharpness} & ascent-perturbed score & path-averaged-Hessian directional-curvature term (\S\ref{sec:sam-ledger}) \\
ASAM~\cite{kwon2021asam} & ascent metric $\diag(\theta_t)^2$ & SAM entry with an axis-magnitude metric \\
Fisher-SAM~\cite{kim2022fishersam} & natural-gradient ascent & SAM entry with the natural-gradient ascent direction \\
Lion~\cite{xiangning2023symbolic} & score $\sign(c_t)+\lambda\theta_t$, $c_t$ an interpolation of $m_t$ and $g_t$ (two distinct decay rates) & $\half\|\sign(c_t)+\lambda\theta_t\|^2_\Sigma$, which at $\Sigma=I,\lambda=0$ is the constant $d/2$: a fixed cost per step for $\ell_1$ pressure $\|c_t\|_1$ \\
Sophia$^\dagger$~\cite{hong2023sophia} & score $\mathrm{clip}(\hat m_t/\max(\rho\hat h_t,\varepsilon),1)$, $\hat h_t$ a diagonal stochastic Hessian estimate & $\half\|\mathrm{clip}(\cdot)\|^2_\Sigma$; the elementwise clip binds on most coordinates, so the unclipped $\half g_t^\top\hat H_t^{-1}g_t$ overstates the realized term \\
SOAP$^\dagger$~\cite{nikhil2024soap} & Adam in the running second-moment eigenbasis; score is the rotated moment estimate $S_t$ & $\half\|S_t\|^2_\Sigma$ \\
Momentum, Nesterov, optimistic SGD~\cite{boris1964some,yurii1983method} & $m_t$ a predictable forecast of $g_t$ & $\half\|g_t-m_t\|^2_\Sigma-\half\|m_t\|^2_\Sigma$. The classical bound keeps only the residual, while the information ledger also retains $-\half\|m_t\|^2_\Sigma$ \\
\bottomrule
\end{tabularx}
\caption{\textbf{Every SGD-class optimizer is one reading of the same information ledger.} Each row gives an optimizer's choice of preconditioner $\Sigma_t$, forecast $m_t$, and schedule $\eta_t$, and the term incurred by that choice. Rows marked $\dagger$ have a time-varying $\Sigma_t$, for which the pathwise account is exact only with the metric-drift term $\Delta_t^{\mathrm{metric}}(u)$ of \eqref{eq:add-variable-metric-ledger} included alongside $Q_t$. Derivations are in \S\ref{sec:schedules}, \S\ref{sec:stochastic}, \S\ref{sec:precon}, and \S\ref{sec:sam-ledger}.}
\label{tab:optimizer-ledger}
\end{table}

\subsection{Zeroth-order optimization: the gradient-free information ledger}\label{sec:zeroth}

The information ledger so far is driven by the local gradient $g_t$. When the gradient is unavailable and only function values can be queried, the gradient is replaced by a random finite-difference probe \citep{nesterov2017random}: draw a perturbation $\nu_t$ from a law $\calP$ on $\R^d$ and form the \emph{zeroth-order score}
\begin{equation}\label{eq:zo-score}
\hat g_t=\frac{f(\theta_t+\mu \nu_t)-f(\theta_t)}{\mu}\,\nu_t
\xrightarrow[\mu\to0]{}\ip{g_t}{\nu_t}\nu_t
\end{equation}
the finite-difference oracle at smoothing radius $\mu>0$ and its $\mu\to0$ directional limit. The learner runs the ordinary preconditioned update $\theta_{t+1}=\theta_t-\eta_t\Sigma\hat g_t$. The per-step ledger \eqref{eq:weighted-ledger} is an algebraic identity in whatever score drives the update, so it holds with $\hat g_t$ in place of $g_t$. The entire zeroth-order content is the conditional law of $\hat g_t$, which the sampling law $\calP$ fixes.

\begin{proposition}[Exact zeroth-order moments]\label{prop:zo-moments}
Let $\nu_t$ be drawn independently of $\calF_t$ from a law $\calP$ with mean zero and covariance $C=\E[\nu_t\nu_t^\top]$. In the directional limit $\hat g_t=\ip{g_t}{\nu_t}\nu_t$,
\begin{equation}\label{eq:zo-mean}
\E[\hat g_t\mid\calF_t]=Cg_t
\end{equation}
for every such $\calP$. If in addition $\nu_t\sim\calN(0,C)$, then the finite-$\mu$ statement holds as well: $\E[\hat g_t\mid\calF_t]=C\nabla f_\mu(\theta_t)$, the same identity with $g_t$ replaced by the gradient of the smoothed objective $f_\mu(\theta)=\E_{\nu\sim\calP}[f(\theta+\mu \nu)]$. This finite-$\mu$ form is Gaussian-specific (it rests on Stein's identity) and fails for other probe laws; for a Rademacher probe on a cubic the two sides differ by a term of order $\mu^2$. Under the Gaussian probe,
\begin{equation}\label{eq:zo-second}
\E[\hat g_t\hat g_t^\top\mid\calF_t]=(g_t^\top Cg_t)\,C+2\,(Cg_t)(Cg_t)^\top
\end{equation}
and the exact zeroth-order intrinsic-time entry is
\begin{equation}\label{eq:zo-intrinsic}
Q_t^{\mathrm{ZO}}=\half\,\E[\hat g_t^\top\Sigma\hat g_t\mid\calF_t]
=\half\Big[(g_t^\top Cg_t)\tr(\Sigma C)+2\,(Cg_t)^\top\Sigma(Cg_t)\Big]
\end{equation}
\end{proposition}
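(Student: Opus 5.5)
The plan is to condition on $\calF_t$ throughout, so that $\theta_t$ and $g_t$ are fixed vectors and all randomness sits in the probe $\nu_t$, which is independent of $\calF_t$. In the directional limit $\hat g_t=\nu_t\nu_t^\top g_t$ is linear in the random matrix $\nu_t\nu_t^\top$. Taking the conditional expectation and pulling the fixed $g_t$ outside gives $\E[\hat g_t\mid\calF_t]=\E[\nu_t\nu_t^\top]g_t=Cg_t$. This is \eqref{eq:zo-mean}, and it uses only the mean-zero and covariance hypotheses, so it holds for every such $\calP$; the mean-zero assumption is not even needed beyond identifying $C$ as the second moment.

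For the finite-$\mu$ Gaussian statement, I write $\nu_t=C^{1/2}\zeta$ with $\zeta\sim\calN(0,I)$ (working on the range of $C$ if $C$ is singular). The term $f(\theta_t)\nu_t/\mu$ has zero mean, so it drops out. For the remaining term I apply Stein's identity $\E[h(\nu)\nu]=C\,\E[\nabla h(\nu)]$ to $h(\nu)=f(\theta_t+\mu\nu)/\mu$. Its gradient is $\nabla f(\theta_t+\mu\nu)$, whose expectation equals $\nabla f_\mu(\theta_t)$ after differentiating under the integral, which needs a mild integrability assumption. To justify the remark about Rademacher probes, I would expand a cubic $f$ to third order. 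The term $\tfrac{\mu^2}{6}\E[D^3f[\nu,\nu,\nu]\nu]$ involves fourth moments of $\nu$. For the Rademacher law these fourth moments violate the Isserlis pattern because $\E\nu_i^4=1\neq3$, so this term disagrees with $C\nabla f_\mu$ at order $\mu^2$. A single explicit one-dimensional cubic such as $f(x)=x^3$ suffices as a counterexample.

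For \eqref{eq:zo-second}, I write $\E[\hat g\hat g^\top]_{ij}=\sum_{k,l}g_kg_l\E[\nu_k\nu_l\nu_i\nu_j]$ and apply Isserlis' theorem: $\E[\nu_k\nu_l\nu_i\nu_j]=C_{kl}C_{ij}+C_{ki}C_{lj}+C_{kj}C_{li}$. The three pairings contract to $(g^\top Cg)C_{ij}$, $(Cg)_i(Cg)_j$ and $(Cg)_j(Cg)_i$, which gives the stated matrix. Then \eqref{eq:zo-intrinsic} follows by writing $\E[\hat g^\top\Sigma\hat g]=\tr(\Sigma\,\E[\hat g\hat g^\top])$, using linearity and the cyclic property of the trace, and halving.

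The step I expect to need the most care is the Stein-identity part. There I must justify the interchange of derivative and expectation, and handle a degenerate $C$, where $\nu$ lives on a subspace. My plan is to state a growth condition on $f$, for instance a polynomially bounded gradient, under which dominated convergence applies. The degenerate case then reduces to the nondegenerate one on $\mathrm{range}(C)$. The moment computations themselves are routine.
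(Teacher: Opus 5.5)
Your proposal is correct and follows the paper's proof step for step: the second moment $\E[\nu_t\nu_t^\top]g_t=Cg_t$ for the mean, Stein's identity applied to $f(\theta_t+\mu\nu_t)$ for the Gaussian finite-$\mu$ form, Isserlis' theorem for \eqref{eq:zo-second}, and the trace for \eqref{eq:zo-intrinsic}. Your additions go beyond the paper, which asserts the Rademacher remark without proof: the integrability and degenerate-$C$ care, and the explicit $f(x)=x^3$ check, where the two sides differ by $2\mu^2$. One caution is that your general fourth-moment argument does not give a discrepancy for every cubic: the Rademacher mean minus $C\nabla f_\mu$ is $-\tfrac{\mu^2}{3}\partial_i^3 f$ in coordinate $i$, which vanishes for $f=x_1x_2x_3$, so the counterexample needs a nonzero pure third derivative, as yours has.
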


The proof is in Appendix~\ref{app:proofs-deferred}.

The expected step is $-\eta_t\Sigma\,\E[\hat g_t]=-\eta_t\Sigma Cg_t$, the preconditioned SGD step of \S\ref{sec:precon} with preconditioner $\Sigma C$. Isotropic sampling $C=I$ is unbiased for $g_t$; a general sampling covariance $C=B$ realizes the metric $B$ through the distribution of the probe, with no explicit matrix multiply and no extra function queries. The choice of $\calP$'s covariance is the zeroth-order counterpart of choosing a prior covariance in \S\ref{sec:precon}.

At $\Sigma=I$, $C=I$ identity \eqref{eq:zo-intrinsic} reduces to $Q_t^{\mathrm{ZO}}=\half(d+2)\|g_t\|^2$, which is $d+2$ times the first-order intrinsic time $\half\|g_t\|^2$. This dimension penalty is the form the $\Theta(d)$ slowdown of gradient-free methods \citep{nesterov2017random} takes in the information ledger: their convergence rate is the projection of \eqref{eq:sum-weighted-ledger} onto the comparator $\theta_*$.
The dimension factor enters only through $Q_t^{\mathrm{ZO}}$. For the directional-derivative oracle, before any smoothing, the analysis of \cite{nesterov2017random} bounds this factor by $d+4$.

On a quadratic the sampling covariance preconditions the expected dynamics. The sampler $C\propto H^{-1}$, which concentrates probe mass on the low-curvature directions, contracts the expected error by the same factor in every direction, whatever the condition number.

A query-only curvature-probed sampler realizes this covariance from function values alone.
Drawing the probe from a preconditioner and scaling the step by it share the same expected step \eqref{eq:zo-mean}, so the second moment \eqref{eq:zo-second} decides between them (Appendix~\ref{app:zo-full}).
The appendix also extends the zeroth-order moment identities to any sampling law through its cumulant information ledger.
\subsection{Nonconvex objectives and saddle point information}\label{sec:saddle}

The same identity holds for nonconvex functions, but $B_F$ is signed. The sign is the surface's curvature information. For general nonconvex $f_t$ the subgradient inequality fails, so the linearized regret is not an upper bound on $\sum(f(\theta_t)-f(\theta_*))$. But the identity still holds for the linearized regret $\sum\ip{\nabla f(\theta_t)}{\theta_t-\theta_*}$ itself.

Let $s$ be a critical point and suppose locally
\begin{equation}\label{eq:quadratic}
F(s+x)=F(s)+\half x^\top Hx
\end{equation}
For deterministic gradient descent with $\Sigma=I$, $x_{t+1}=(I-\eta H)x_t$. Let $v_j$ be an orthonormal eigenvector of $H$ with eigenvalue $\lambda_j$, and $z_{t,j}=v_j^\top x_t$, $I_{t,j}(s)=\half z_{t,j}^2$.

In an eigendirection of the local Hessian with eigenvalue $\lambda_j$, the squared distance to the critical point evolves under deterministic GD by the multiplicative factor $(1-\eta\lambda_j)^2$ per step, contracting in stable directions ($0<\eta\lambda_j<2$) and amplifying in unstable directions ($\lambda_j<0$). Adding zero-mean stochastic gradient noise injects an additional $\half\eta^2\Var(\xi_{t,j})$ per step into the information distance along that direction. The saddle-escape mechanism has a single-step form: an unstable direction with eigenvalue $-\gamma$ amplifies the eigendirection's information distance by $(1+\eta\gamma)^2$ per deterministic step.
The noise injection supplies the displacement along that direction when it starts at zero.

\begin{proposition}[Exact saddle point information dynamics]\label{prop:saddle}
Under the quadratic model \eqref{eq:quadratic}, deterministic gradient descent satisfies
\begin{equation}\label{eq:saddle-det}
I_{t+1,j}(s)=(1-\eta\lambda_j)^2I_{t,j}(s)
\end{equation}
If the stochastic gradient has the form $g_t=Hx_t+\xi_t$ and $\E[\xi_t\mid\calF_t]=0$, then
\begin{equation}\label{eq:saddle-stoch}
\E[I_{t+1,j}(s)\mid\calF_t]
=(1-\eta\lambda_j)^2I_{t,j}(s)+\frac{\eta^2}{2}\Var(\xi_{t,j}\mid\calF_t)
\end{equation}
where $\xi_{t,j}=v_j^\top\xi_t$.
\end{proposition}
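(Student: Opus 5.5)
The plan is to work in eigencoordinates and reduce each claim to a scalar recursion. Under the quadratic model \eqref{eq:quadratic} the gradient at $\theta_t=s+x_t$ is $\nabla F(s+x_t)=Hx_t$, so deterministic gradient descent with $\Sigma=I$ reads $x_{t+1}=x_t-\eta Hx_t=(I-\eta H)x_t$.

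For the deterministic claim \eqref{eq:saddle-det}, project onto $v_j$. Because $H$ is symmetric, $v_j^\top H=\lambda_jv_j^\top$, and therefore
\[
z_{t+1,j}=v_j^\top(I-\eta H)x_t=(1-\eta\lambda_j)z_{t,j}.
\]
Squaring and halving gives $I_{t+1,j}(s)=(1-\eta\lambda_j)^2I_{t,j}(s)$, with no further assumptions needed.

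For the stochastic claim \eqref{eq:saddle-stoch}, write $g_t=Hx_t+\xi_t$ and $x_{t+1}=x_t-\eta g_t$. Projecting onto $v_j$ gives
\[
z_{t+1,j}=(1-\eta\lambda_j)z_{t,j}-\eta\xi_{t,j}.
\]
Expanding the square, one half of it is
\[
(1-\eta\lambda_j)^2I_{t,j}(s)-\eta(1-\eta\lambda_j)z_{t,j}\xi_{t,j}+\tfrac{\eta^2}{2}\xi_{t,j}^2.
\]
Now condition on $\calF_t$. The iterate $x_t$, and hence $z_{t,j}$, is $\calF_t$-measurable. We also have $\E[\xi_{t,j}\mid\calF_t]=v_j^\top\E[\xi_t\mid\calF_t]=0$, so the cross term vanishes. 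Since the conditional mean of $\xi_{t,j}$ is zero, $\E[\xi_{t,j}^2\mid\calF_t]=\Var(\xi_{t,j}\mid\calF_t)$. Together these give exactly \eqref{eq:saddle-stoch}.

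The argument is direct algebra. The only points needing care are bookkeeping: $H$ must be symmetric so that the eigendecomposition is orthonormal and projection commutes with $I-\eta H$, and $z_{t,j}$ must be $\calF_t$-measurable so that the cross term drops. I do not expect a real obstacle. As a consistency check, one can note that this is the single-coordinate, $u=s$ instance of the one-step identity in Theorem~\ref{thm:sgd-ledger}, read through the Bregman remainder of the quadratic.
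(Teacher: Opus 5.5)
Your proposal is correct and follows essentially the same route as the paper's proof: project the recursion onto the eigenvector $v_j$ to get $z_{t+1,j}=(1-\eta\lambda_j)z_{t,j}-\eta\xi_{t,j}$, then square, halve, and take conditional expectation. You also spell out the two points the paper leaves implicit, namely that $z_{t,j}$ is $\calF_t$-measurable, which kills the cross term, and that the zero conditional mean makes $\E[\xi_{t,j}^2\mid\calF_t]$ the conditional variance.
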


The proof is in Appendix~\ref{app:proofs-deferred}.

\begin{itemize}[leftmargin=1.5em,itemsep=2pt]
\item \emph{Stable direction, $\lambda_j>0$:} without noise, information distance contracts geometrically when $0<\eta\lambda_j<2$ and the eigendirection term in $V_T$ saturates as a geometric series; noise adds $\eta^2\Var(\xi_{t,j})/2$ per step.
\item \emph{Unstable direction, $\lambda_j=-\gamma<0$:} $I_{t+1,j}(s)=(1+\eta\gamma)^2I_{t,j}(s)$, so the saddle point is \emph{information-repelling} in that direction; noise adds $\eta^2\Var(\xi_{t,j})/2$.
\item \emph{Flat direction, $\lambda_j=0$:} without noise, the coordinate is neutral, with $Q_t\equiv0$ along it, so no progress is made and no information cost accrues; noise adds $\eta^2\Var(\xi_{t,j})/2$ per step, so the coordinate diffuses.
\end{itemize}

Escape from a strict saddle point needs a displacement from $s$ along the unstable direction $v_-$, an eigenvector of the most negative eigenvalue $\lambda_{\min}$, either from the start or from the noise.
With $\sigma_\xi^2$ the noise variance along $v_-$, the injection of $\half\eta^2\sigma_\xi^2$ per step in \eqref{eq:saddle-stoch} acts in expectation as an extra $\eta\sigma_\xi^2/(2|\lambda_{\min}|)$ in the squared start displacement, to leading order in $\eta|\lambda_{\min}|$.
The larger of the two displacements sets the leading-order time to reach distance $r_{\mathrm{esc}}$ from $s$ along $v_-$
\begin{equation}\label{eq:escape-time}
T_{\mathrm{esc}}\asymp\frac{1}{|\lambda_{\min}|\eta}\log\frac{r_{\mathrm{esc}}}{\varepsilon_{\mathrm{eff}}},\qquad\varepsilon_{\mathrm{eff}}=\max\Bigl(|v_-^\top(\theta_0-s)|,\ \sigma_\xi\sqrt{\eta/(2|\lambda_{\min}|)}\Bigr)
\end{equation}
The escape phase of \cite{chi2017how} has the same form, $1/(\eta|\lambda_{\min}|)$ times a logarithm, up to an absolute constant, and the escape bound of \cite{rong2015escaping} has the same order in $\eta$.
The dimension factors in these statements, polynomial in \cite{rong2015escaping} and polylogarithmic in \cite{chi2017how}, depend on Hessian-Lipschitz constants and on the shape of the noise covariance.

For a nonlinear objective, the same coordinate recursion gains residual terms $r(x)=\nabla F(s+x)-Hx$. Hessian-Lipschitz assumptions bound those residuals, recovering the usual perturbation or SGD escape analyses. The information identity itself does not require those bounds; they enter when we want a clean escape-time theorem with explicit dimension dependence.

Let $P_-$ project onto negative-curvature eigenspaces (estimable, e.g., via Lanczos on Hessian-vector products). The \emph{unstable intrinsic clock}
\begin{equation}\label{eq:V-minus}
V_T^-=\half\sum_tg_t^\top P_-\Sigma P_-g_t
\end{equation}
should predict escape better than the total gradient norm when the dominant difficulty is a saddle point, as it does on synthetic strict saddle points (Appendix~\ref{app:sanity}). This matches the ``negative curvature detection'' of \cite{hadi2018escaping} but gives it an information-theoretic rationale.

The Polyak--\L ojasiewicz (PL) condition $\half\|\nabla F(\theta)\|^2\ge\mu(F(\theta)-F^*)$ is a self-bounding condition on the intrinsic-time density:
\begin{equation}\label{eq:PL}
Q_t=\half\|\nabla F(\theta_t)\|^2\ge\mu(F(\theta_t)-F^*)
\end{equation}
With the one-step descent inequality of an $L$-smooth $F$, $F(\theta_{t+1})\le F(\theta_t)-\eta Q_t$ at any fixed step $\eta\le1/L$, condition \eqref{eq:PL} gives exponential decay of $F(\theta_t)-F^*$ with rate $\eta\mu$. Smoothness cannot be dropped here: $F(\theta)=|\theta|^{3/2}$ satisfies \eqref{eq:PL} on $|\theta|\le1$ with $\mu=9/8$ but has unbounded curvature at its minimizer, so gradient descent at any fixed $\eta>0$ settles, from almost every starting point, into a two-cycle at $|\theta|=9\eta^2/16$ on which $F(\theta_t)-F^*$ stalls at $27\eta^3/64$. More generally, a \L ojasiewicz exponent $\vartheta\in[0,1/2]$ with $\|\nabla F\|^{1/(1-\vartheta)}\ge c(F-F^*)$ yields polynomial or exponential decay of $F(\theta_t)-F^*$ depending on $\vartheta$. Condition \eqref{eq:PL} is one information-theoretic form of benign non-convexity \citep{hamed2016linear}.

The information ledger also yields operational measures of loss-surface difficulty: the weighted information complexity $\mathsf C_T(u)$, the worst-case, average and least intrinsic time of trajectories, and a policy-optimized surface complexity $\calK_T(\varepsilon)$. For fixed-rate gradient descent on a quadratic objective, $\limsup_{T\to\infty}\calK_T(\varepsilon)$ is bounded by an eigenvalue-by-eigenvalue posterior-information cost plus a prior-distance penalty and a finite net path term (Proposition~\ref{prop:surface-quadratic}, Appendix~\ref{app:surface-full}).

\section{Flatness and generalization}\label{sec:flatgen}

\subsection{Flatness as terminal posterior information}\label{sec:flatness}

Flatness enters naturally through terminal posterior information. Let $q=\calN(\theta,\Sigma)$ and $\pi=\calN(0,\tau^2I)$. Then
\begin{equation}\label{eq:terminal-kl}
\KL(q\|\pi)
=\half\left[\frac{\|\theta\|^2+\tr\Sigma}{\tau^2}-d+
\log\frac{\tau^{2d}}{\det\Sigma}\right]
\end{equation}
Near a local minimum $\theta_\star$, $F(\theta_\star+x)\approx F(\theta_\star)+\half x^\top Hx$ with $H\succeq0$, so
\[
\E_{\calN(\theta_\star,\Sigma)}F
\approx F(\theta_\star)+\half\tr(H\Sigma)
\]
A flat basin permits large posterior covariance, hence large entropy and small relative entropy to the prior, while maintaining small expected loss.

Consider the free energy
\begin{equation}\label{eq:local-free-energy}
\mathcal F_\beta(q)=\E_qF+\beta^{-1}\KL(q\|\pi)
\end{equation}
Restrict to $q=\calN(\theta_\star,\Sigma)$ and use the quadratic approximation. If $H=U\diag(\lambda_i)U^\top$, the minimizing covariance in this approximation is
\begin{equation}\label{eq:free-energy-cov}
\Sigma_\star=U\diag\left(\frac{1}{\beta\lambda_i+\tau^{-2}}\right)U^\top
\end{equation}
Substituting back gives, up to constants independent of the basin,
\begin{equation}\label{eq:logdet-flatness}
\mathcal F_\beta(\theta_\star)
\approx
F(\theta_\star)+\frac{\|\theta_\star\|^2}{2\beta\tau^2}
+\frac{1}{2\beta}\sum_{i=1}^d\log(1+\beta\tau^2\lambda_i)
\end{equation}
Thus flatness is a \emph{log-determinant information penalty}; the largest Hessian eigenvalue alone does not capture it. Zero or near-zero curvature directions contribute little; sharp directions contribute logarithmically once the posterior is optimized.

The basin-evidence form $\mathcal E_\beta(\calB)=-\beta^{-1}\log\int_\calB e^{-\beta F(\theta)}\pi(d\theta)$ makes a basin informationally favorable when it combines low loss, large prior mass, and low effective curvature determinant (Appendix~\ref{app:flatness-full}). A quadratic composite loss lifts the frozen covariance of the linearized identity: precisions add as $\Sigma_{t+1}^{-1}=\Sigma_t^{-1}+\alpha H_t$, so after $T$ rounds near a minimum the posterior tightens to $\Sigma_T\approx(\alpha TH)^{-1}$, sharp directions narrowing while flat directions stay broad. Raw Euclidean Hessian flatness is not invariant under reparameterization (the warning of \cite{laurent2017sharp}): a function-preserving change of coordinates shifts the bare log-determinant by a Jacobian-volume term, while the relative entropy to a prior transformed along with the posterior stays invariant. The truncated estimators practitioners compute inherit a coordinate dependence of their own; estimating the log-determinant, with the coordinate-free observables that survive at network scale, is treated in the same appendix. Sharpness-aware minimization sits inside the identity with no truncation. The fundamental theorem of calculus along the ascent ray writes the SAM ascent gradient as the ordinary gradient plus a path-averaged Hessian correction. The SAM update is the preconditioned SGD update run with the worst-case-perturbed score. Its intrinsic-time density is
\[
Q_t^{\mathrm{SAM}}
=Q_t+\frac{\rho_t}{\|g_t\|}\,g_t^\top\Sigma\bar H_tg_t
+\frac{\rho_t^2}{2\|g_t\|^2}\,g_t^\top\bar H_t\Sigma\bar H_tg_t
\]
for $\bar H_t$ the path-averaged Hessian along the ascent ray, valid at every order in $\rho_t$ (Corollary~\ref{cor:sam-ledger}); Fisher SAM changes only the ascent direction, to the natural-gradient one. The generalization advantage documented for Fisher SAM \citep{kim2022fishersam} is not mediated by the log-determinant penalty alone (Appendix~\ref{sec:sam-ledger}).
\subsection{PAC-Bayes from the information ledger}\label{sec:gen}

The information ledger does not by itself prove test-set generalization. It explains how SGD transports information and accrues intrinsic time along the training trajectory. To obtain a generalization theorem, place a posterior $q_T$ around the final iterate and apply a PAC-Bayes bound. The natural description length becomes
\begin{equation}\label{eq:two-part}
\text{optimization information}+\text{terminal posterior information}
\end{equation}
where the first component is the path term $\sum\eta_t^2Q_t-\sum\eta_tB_t+\text{drift}$ and the second is $\KL(q_T\|\pi)$.

For $q_{T+1}=\calN(\theta_{T+1},\sigma^2I)$ and $\pi=\calN(0,\sigma^2I)$, $\KL(q_{T+1}\|\pi)=\|\theta_{T+1}\|^2/(2\sigma^2)$: the closed-form complexity is the $L^2$ regularizer. Let $\ell$ be a $[0,1]$-valued loss, let the training set consist of $N$ examples drawn i.i.d.\ from the data distribution, and let the prior $\pi$ be fixed before that sample is drawn. A PAC-Bayes bound \citep{mcallester1999pacbayesian,seeger2002pacbayesGP,dziugaite2017computing} then holds with probability at least $1-\delta$ over the draw of the sample, simultaneously for every posterior $q_{T+1}$:
\begin{equation}\label{eq:pac-bound}
\E_{\theta\sim q_{T+1}}\ell^{\mathrm{test}}(\theta)
\le\E_{\theta\sim q_{T+1}}\ell^{\mathrm{train}}(\theta)
+\sqrt{\frac{\KL(q_{T+1}\|\pi)+\log(N/\delta)}{2(N-1)}}
\end{equation}
With the complexity equal to the $L^2$ regularizer, the bound is a generalization certificate for ridge regularization.

The sharpness certificate uses the tightened posterior $\Sigma_T\approx(\alpha TH)^{-1}$ of \S\ref{sec:flatness}. Substituting $\Sigma=(\alpha TH)^{-1}$ into the Gaussian relative entropy \eqref{eq:terminal-kl} ($q_{T+1}=\calN(\theta_{T+1},\Sigma_T)$, $\pi=\calN(0,\sigma^2I)$) gives
\begin{equation}\label{eq:sharpness-kl}
\KL(q_{T+1}\|\pi)
=\half\left[\log\det(\alpha T\sigma^2H)+\frac{\tr(H^{-1})}{\alpha T\sigma^2}-d\right]+\frac{\|\theta_{T+1}\|^2}{2\sigma^2}
\end{equation}
the log-determinant term contributing the sign $+\log\det(\alpha T\sigma^2H)=d\log(\alpha T\sigma^2)+\log\det H$ and the trace term the inverse-Hessian dependence $\tr(H^{-1})/(\alpha T\sigma^2)$, which is $o(1)$ in the large-$T$ regime. The basin-curvature contribution to the complexity is therefore $+\half\log\det H$ at leading order. Sharper terminal curvature (larger $\log\det H$) tightens the posterior along its sharp directions, raising the relative entropy to the broad prior and so loosening the generalization certificate; equivalently, flatness $-\log\det H$ lowers the bound. In a large-scale empirical comparison of generalization measures, sharpness-based measures performed best overall \cite{yiding2020fantastic}. Among them the certificate singles out $\log\det H$, minus twice the log-volume of the ellipsoid $\{x:x^\top Hx\le1\}$ up to a constant: flat minima (small $\log\det H$, large volume) incur the lighter complexity. Trace- or spectral-radius-based sharpness are \emph{not} reparameterization-invariant in the sense of \S\ref{sec:flatness}.
At network scale a data-independent isotropic prior is informative at some scale only if the curvature spectrum is nearly flat.
Writing $r_i=\alpha T\sigma^2\lambda_i$ for the eigenvalues $\lambda_i$ of $H$, the log-determinant and trace terms of \eqref{eq:sharpness-kl} equal $\half\sum_i(\log r_i+1/r_i-1)$, a sum of $d$ nonnegative terms that vanish only at $r_i=1$.
The minimum of that sum over $\alpha T\sigma^2$ is $\tfrac d2\log(\mathrm{AM}/\mathrm{GM})$ for the arithmetic and geometric means of the $1/\lambda_i$.
This value is a floor on the relative entropy at every prior scale.
For a loss with values in $[0,1]$ the bound \eqref{eq:pac-bound} falls below the trivial value $1$ only if $\KL(q_{T+1}\|\pi)+\log(N/\delta)<2(N-1)$, which at $d=1.47$ million and $N=50{,}000$ requires $\mathrm{AM}/\mathrm{GM}<1.15$, an inverse-curvature spectrum nearly flat across the parameters.
A rank-$32$ Lanczos probe of trained Wide-ResNet-28-2 endpoints cannot establish that condition (Appendix~\ref{app:cifar-forensics}).
On its $512$-image probe set the top Hessian eigenvalues run from $0.3$ to $81$ and every endpoint has negative eigenvalues, where the tightened posterior is not defined. For a positive-definite surrogate the floor depends on all $d$ eigenvalues.
On the $N=50{,}000$ training images of $K=10$ classes the Gauss--Newton part of the cross-entropy Hessian has rank at most $N(K-1)=450{,}000$, so on a subspace of at least $69\%$ of the $1.47$ million parameter dimensions the cross-entropy Hessian reduces to the part weighted by the prediction residuals.
The certificate keeps its ordering, monotone in $\log\det H$: along the batch axis it has the sign the flatness bound implies, while across optimizers it orders the optimizers against the gap as the raw log-determinant does ($\rho(\KL,\mathrm{gap})=-0.50$ over $12$ endpoints).
A Gaussian prior fitted to the data lowers the floor to the extent that its covariance tracks the endpoint curvature.
For $\pi=\calN(\mu_\pi,c\,\Sigma_\pi)$ and $q_{T+1}=\calN(\theta_{T+1},(\alpha TH)^{-1})$, minimizing the relative entropy over the scale $c$ gives
\begin{equation}\label{eq:fitted-prior-kl}
\min_{c>0}\KL(q_{T+1}\|\pi)=\frac d2\log\frac{\mathrm{AM}}{\mathrm{GM}}+\frac d2\log\left(1+\frac{\|\theta_{T+1}-\mu_\pi\|^2_{\Sigma_\pi^{-1}}}{(\mathrm{AM}) d}\right)
\end{equation}
with $\mathrm{AM}$ and $\mathrm{GM}$ now the arithmetic and geometric means of the eigenvalues of $(\alpha T\,\Sigma_\pi^{1/2}H\Sigma_\pi^{1/2})^{-1}$.
The first term vanishes when $\Sigma_\pi\propto H^{-1}$ and is invariant to rescaling $H$, so the level of the curvature enters only through the second term, which vanishes when the endpoint sits at the prior mean.
Such a prior is a function of the data, so a valid bound either fits it on a held-out split or accounts for the dependence information-theoretically, through conditional mutual information \cite{thomas2020reasoning} or a mutual-information bound along the training trajectory \cite{peng2025leveraging}.
The information ledger supplies the optimization-side terminal information $I_{T+1}(u)$ that such a bound must localize.

\subsection{Noise and momentum as entries of the information ledger}\label{sec:noise-momentum}

Adding noise or momentum to SGD changes nothing structural in the information ledger: each attaches as one explicit term. The derivations are in Appendix~\ref{sec:addendum-langevin-momentum}, where Table~\ref{tab:noise-momentum} lists the correspondences.

Mini-batch score noise, post-step Gaussian injection, and Langevin heat contribute one additive quadratic entry each. The expected ledger depends on the perturbation only through its covariance, but the three attach at different places and scale differently in $\eta_t$. Finite-cumulant non-Gaussian noise replaces the Gaussian quadratic by the cumulant-generating function $\Lambda_t(v)$ (Proposition~\ref{prop:add-cgf-noise}), of which it is the second-order term. Heavy-tailed and $\alpha$-stable noise preserves the pathwise and channel relative-entropy identities, while the conditional-expectation entry is finite only after truncation, robustification, or a fractional/jump regime. State-dependent preconditioning adds a metric-drift term with an It\^o correction. Phase-space momentum resolves into conservative Hamiltonian transport plus an Ornstein--Uhlenbeck thermostat channel, with heavy-ball a linear phase-space recursion (Proposition~\ref{prop:add-phase-info-recursion}) and Nesterov a lookahead-transport correction.

\section{Experimental evaluation}\label{sec:experiments}

The terms of \eqref{eq:weighted-ledger}, \eqref{eq:loss-path}, and \eqref{eq:stochastic-decomposition} are measured along real training runs and on problems in which a single term has a closed form (settings and protocols in Appendix~\ref{app:experiments-full}).
These terms separate runs whose losses agree.

\subsection{Optimizers at a matched training loss}\label{sec:eval-regime-discrimination}

Five optimizers train a two-layer network full-batch on a handwritten-digit dataset and stop when their training loss first crosses a common target (Appendix~\ref{app:regime-discrimination}).
Their training losses then agree to within $1.04\times$ and their test errors to within $1.53\times$, while their intrinsic-time fractions span $31\times$ (Figure~\ref{fig:eval-regime-discrimination}(a)).
By \eqref{eq:path-fraction} the spread comes from the number of steps and their consistency, neither of which the losses register.

\subsection{The slack of the classical envelopes}\label{sec:eval-slack-vs-bound}

Each classical envelope of \S\ref{sec:readings} is the identity with named nonnegative terms dropped, so on a real network its slack can be measured term by term at every prefix of a run.
The measurements cover ResNet-20 and a character-level language model trained by 10 optimizers, and a vision transformer trained by four of them, with three seeds each.
For SGD the linearized envelope exceeds the realized weighted regret by exactly the comparator information $I_n(u)$ still to be transported (Figure~\ref{fig:slack-vs-bound}(a)).
That term is $45\%$ of the envelope a fifth of the way through training on ResNet-20 and $75\%$ on the language model, and zero at the end.
On the language model the objective-level remainder, which convexity would make nonnegative, is negative over nearly the whole run for every optimizer without a momentum buffer, Schedule-Free included, so along those stretches the objective-level envelope does not apply (Figure~\ref{fig:slack-vs-bound}(b), Figure~\ref{fig:slack-terminal}(b)).
At the deepest training loss reached by every optimizer, the intrinsic-time fraction spans $29\times$ across the 10 optimizers on ResNet-20 and $18\times$ on the language model, while the achieved losses span less than $1.5\times$ (Figure~\ref{fig:slack-vs-bound}(c)).
The optimizers that keep a momentum buffer have the smallest fractions.

\begin{figure}[t]
\centering
\includegraphics[width=\linewidth]{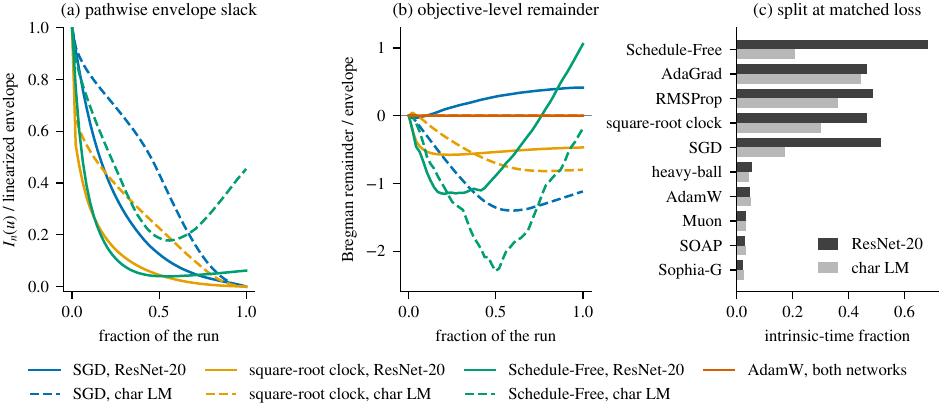}
\caption{\textbf{On real networks the information ledger attributes each classical envelope's slack to the terms dropped by its derivation.} It also separates optimizers that reach the same training loss: their intrinsic-time fractions differ by more than an order of magnitude.
(a) Comparator information $I_n(u)$ still to be transported at prefix $n$, as a share of the linearized envelope $I_0(u)+\sum_t\eta_t^2Q_t$.
(b) Cumulative Bregman remainder $\sum_t\eta_tB_F(u,\theta_t)$ over the same envelope; below zero the objective-level envelope does not apply. AdamW stays within $10^{-3}$ of zero on both networks.
(c) Intrinsic-time fraction at the deepest training loss reached by every optimizer.
Solid lines ResNet-20, dashed lines the character-level language model (char LM); each line is a median over seeds.}
\label{fig:slack-vs-bound}
\end{figure}

\subsection{Terms with a closed form}\label{sec:eval-closed-form}

Where one term of the information ledger has a closed form, the measurement tracks it: the intrinsic-time inflation follows the mini-batch factor $(N-b)/(b(N-1))$, and strict-saddle-point escape times track the leading-order time \eqref{eq:escape-time} (Figure~\ref{fig:single-term}).
Figure~\ref{fig:single-term-checks} shows five further measurements on controlled problems: the Bregman half-gain of the regret, the escape threshold set by the unstable clock, the intrinsic time under variance reduction and under a bounded-influence score, and the noise-energy entry under $\alpha$-stable noise.

\begin{figure}[t]
\centering
\includegraphics[width=\linewidth]{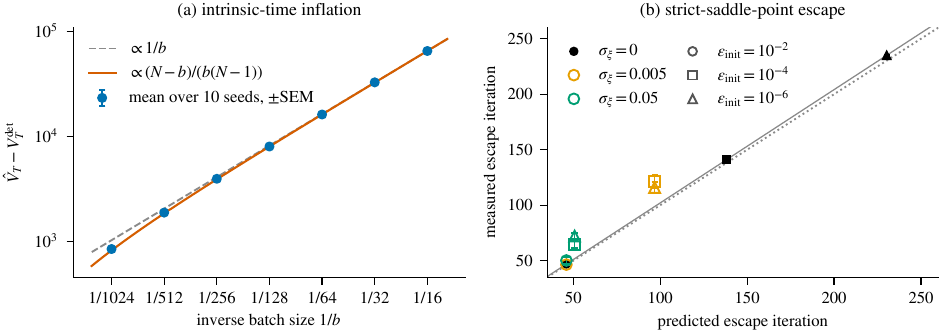}
\caption{\textbf{The intrinsic-time inflation and the strict-saddle-point escape times track their closed forms.}
(a) Realized inflation $\hat V_T-V_T^{\mathrm{det}}$ against inverse batch size on linear regression with mini-batches drawn without replacement from $N=5000$ examples; the fit proportional to $(N-b)/(b(N-1))$ solid and the plain $1/b$ law dashed.
(b) Mean escape iteration against the leading-order time \eqref{eq:escape-time} on the strict saddle point of $F=\half(x^2+y^2)-\half z^2$ ($\lambda_{\min}=-1$), started from $\theta_0=(1,1,\varepsilon_{\mathrm{init}})$; deterministic runs filled, their constant ratio solid, and equality dotted.
Here $\varepsilon_{\mathrm{eff}}$ in \eqref{eq:escape-time} is $\max(\varepsilon_{\mathrm{init}},\sigma_\xi\sqrt{\eta/2})$, so runs with the same $\varepsilon_{\mathrm{eff}}$ share a leading-order time.
Error bars are standard errors over seeds.}\label{fig:single-term}
\end{figure}

\subsection{The information ledger on real problems}\label{sec:eval-practice}

Drawn into the sampling covariance of a zeroth-order method, a query-only curvature probe recovers nearly all of the gain of an oracle that knows the Hessian diagonal, on a real objective with a wide curvature spread (Figure~\ref{fig:real-problems}(a)).
It removes $90\%$ of the initial suboptimality in a median $2{,}754$ queries.
Isotropic sampling does not reach that target within $150{,}000$ queries.
Applied to the step, the same preconditioner inflates the probe noise by the factor that the second moment \eqref{eq:zo-second} gives, so within that budget it closes at most $0.3\%$ of the initial suboptimality.
On Wide-ResNet-28-4 a larger batch gives both a sharper endpoint and a wider generalization gap, but across optimizers the relation turns negative: Lion ends flattest, fits its training set most tightly, and has the widest gap (Figure~\ref{fig:real-problems}(b)).
On ResNet-20 a larger batch also gives a sharper endpoint and a wider generalization gap (Appendix~\ref{app:cifar-forensics}).
A schedule that escapes whenever the return on intrinsic time stalls responds to mini-batch noise on ResNet-20: its stall signal is on for more than half of the steps of cosine annealing, the best-trained schedule, so acting on the signal trains worst of four schedules (Figure~\ref{fig:real-problems}(c)).

\begin{figure}[t]
\centering
\includegraphics[width=\linewidth]{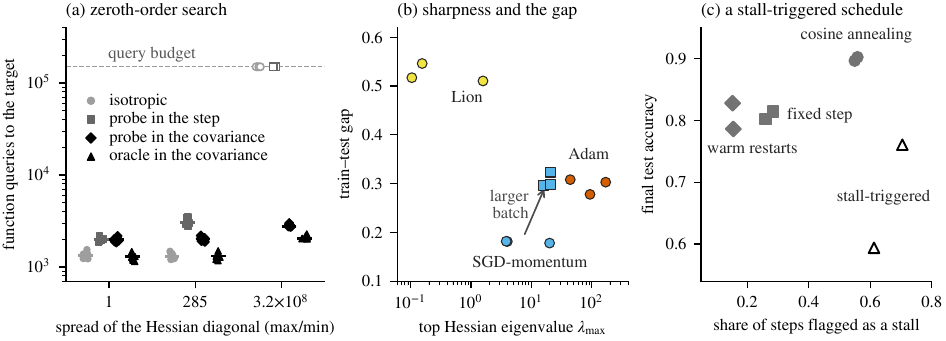}
\caption{\textbf{On a real objective with a wide curvature spread, a curvature probe drawn into the sampling covariance speeds zeroth-order search more than 50-fold, while applied to the step it makes no progress.}
(a) Function queries to close $90\%$ of the initial suboptimality of seven-class logistic regression on $49$ principal components of Covertype, at three spreads of the Hessian diagonal; small markers are seeds, bars are medians, and open markers on the dashed line did not reach the target within the budget.
(b) Top Hessian eigenvalue against the train--test gap at the end of training, Wide-ResNet-28-4 on CIFAR-10, one marker per run; circles batch $128$, squares batch $8192$. Sharpness tracks the generalization gap only along the batch axis.
(c) Final test accuracy of ResNet-20 on CIFAR-10 against the share of logged steps on which the pressure-cycled controller's stall signal is on, one marker per seed. The stall signal responds to mini-batch noise.}
\label{fig:real-problems}
\end{figure}

\subsection{Limits}\label{subsec:findings}

The intrinsic-time fraction separates trajectories by the number and consistency of their steps, but its rank correlation with the train--test gap across optimizers is $+0.41$ on ResNet-20 and $-0.18$ on the language model (Appendix~\ref{app:slack-protocol}).
The intrinsic time accumulated over the first fifth of a run, net of metric drift, ranks the optimizers' final training losses at a mean Spearman correlation of $+0.11$, positive on the language model and negative on ResNet-20, while the training loss at the same prefix reaches $+0.78$ (Appendix~\ref{app:slack-protocol}).
As an index of loss decay, the intrinsic clock gives the same forecasts as the iteration count wherever $V_T$ grows as a power of $T$. Along every real-data run measured, $V_T$ grows nearly as a power of $T$ (Appendix~\ref{sec:eval-vclock-fits}).
Across optimizers the bare Hessian statistics move under a function-preserving rescaling by as much as the distance between optimizers, while function-space curvature does not move.
On ResNet-20 the correlation of function-space curvature with the gap has the implied sign at batch $32$ and the opposite sign at batch $8192$, where $50$ epochs leave the runs far from fitting their training sets (Appendix~\ref{app:cifar-forensics}).
A terminal certificate built on a data-independent isotropic prior is vacuous at network scale unless the curvature spectrum is nearly flat across all parameters, since its minimum over the prior scale grows with the spread of that spectrum (\S\ref{sec:gen}).
On grokking and a vision transformer the pressure-cycled controller slows or degrades training, since its patience trigger fires only after the network fits its training set or while the training loss is still falling (Appendix~\ref{app:interventions}).

\section{Discussion}\label{sec:discussion}

The identity makes SGD analysis modular: any rate, any loss, any predictable side information, and any comparator produce one instance of the same accounting. It does not circumvent lower bounds: the $\Omega(1/\sqrt T)$ lower bound for convex stochastic optimization, lower bounds for gradient oracles, and minimax lower bounds for non-convex smooth optimization all still hold. The identity changes the structural position of known upper bounds and step-size rules: an upper bound becomes an identity plus explicit, quantifiable slack, and an adaptive step size becomes a controller driven by the intrinsic clock.

The work closest to this accounting was offered as a first step \cite{gergely2021informationtheoretic}, with its quantities referred to a population gradient whose meaning blurs across multiple passes, and with no account of why an SGD trajectory would hold those quantities small.
An exact accounting removes the first difficulty by construction: it records the realized update's own spending, and its expected intrinsic-time density refers the mini-batch term to the conditional gradient mean preferred in that work.
The second stands: the identity accounts exactly for the information accrued along a trajectory but supplies no mechanism that keeps that information small along the trajectories followed by SGD.
For SGD with label noise a partial mechanism is known: the algorithm converges to stationary points of the training loss plus an explicit penalty on sharp minimizers \cite{damian2021label}, and near a manifold of minimizers its limiting dynamics are a gradient flow of the Hessian trace along the manifold \cite{li2021what}.

The Gaussian-Bayes construction is local. It uses the observed stochastic gradient as a local score and gives an identity for the resulting mean recursion. It does not assert that an arbitrary network's data likelihood is globally Gaussian in parameters. The proper likelihood of Proposition~\ref{prop:proper-gaussian-likelihood} is local as well: it observes a constructed Gaussian variable at each round and reheats the covariance between rounds. The objective-level identity is exact only when the Bregman remainder is retained. Without the remainder, the linearized loss bounds $f_t(\theta_t)-f_t(\theta_*)$ under convexity (\S\ref{sec:convex}). For a nonconvex objective, smoothness with a Polyak--\L ojasiewicz condition bounds $F(\theta_t)-F^*$ through the intrinsic-time density (\S\ref{sec:saddle}). Under the linearized loss the posterior covariance stays at $\Sigma$. The quadratic composite loss of Appendix~\ref{app:flatness-full} lets the precision accumulate curvature, with heavier bookkeeping. The prior $\Sigma$ is a design parameter, so a reading is informative only for a $\Sigma_t$ whose perturbations are functionally meaningful. The linearized posterior $\calN(\theta_t,\Sigma)$ is specified by its mean, so the accounting costs no more than the update itself. Each empirical result built on the accounting holds within the boundary located for it in \S\ref{subsec:findings}.

SGD is the Gaussian-prior linearized instance of Bayes-rule updating. The identity accounts for the information accrued by this updating along each realized trajectory. The results treated here follow from the one-step identity by Gaussian bookkeeping (\S\ref{sec:readings}, \S\ref{sec:flatgen}), with intrinsic time as their common clock.

\appendix
\appendixletterskipI
\appendixtitle

\appendixcontents

\section{The proper Gaussian likelihood construction}\label{app:gab-likelihood}

The linear score $\exp(-\eta_t\ip{g_t}{\theta})$ is the weak-observation limit of a proper Gaussian likelihood: conditioning on a noisy Gaussian observation of $\theta$ and then re-inflating the posterior covariance to $\Sigma$ with independent process noise returns the SGD iterate.

\begin{proposition}[Gaussian likelihood with shrinkage and reheating]\label{prop:proper-gaussian-likelihood}
Fix $\chi_t\in(0,1)$. Let the prior be $q_t=\calN(\theta_t,\Sigma)$.
Consider the Gaussian observation model $Y_t\mid\theta\sim\calN\left(\theta,\tfrac{\chi_t}{1-\chi_t}\Sigma\right)$ and observe $y_t=\theta_t-\tfrac{\eta_t}{1-\chi_t}\Sigma g_t$.
The exact Bayesian posterior is $q_t^+=\calN(\theta_t-\eta_t\Sigma g_t,\chi_t\Sigma)$.
If after this update one adds independent process noise $\calN(0,(1-\chi_t)\Sigma)$, the resulting predictive distribution is $q_{t+1}=\calN(\theta_t-\eta_t\Sigma g_t,\Sigma)$ whose mean is the SGD iterate.
As $\chi_t\uparrow1$, the likelihood factor converges, up to constants independent of $\theta$, to $\exp(-\eta_t\ip{g_t}{\theta})$.
\end{proposition}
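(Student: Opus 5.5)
The plan is to verify the three claims by direct Gaussian computation in precision form, using the standard conjugate update for a Gaussian prior with a Gaussian observation of $\theta$ itself. Write $R_t=\tfrac{\chi_t}{1-\chi_t}\Sigma$ for the observation covariance. Then the posterior precision is $\Sigma^{-1}+R_t^{-1}=\Sigma^{-1}\bigl(1+\tfrac{1-\chi_t}{\chi_t}\bigr)=\chi_t^{-1}\Sigma^{-1}$, so the posterior covariance is $\chi_t\Sigma$.

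For the mean, I will use the precision-weighted average
\[
\mu^+=\chi_t\Sigma\bigl(\Sigma^{-1}\theta_t+\tfrac{1-\chi_t}{\chi_t}\Sigma^{-1}y_t\bigr)=\chi_t\theta_t+(1-\chi_t)y_t .
\]
Substituting $y_t=\theta_t-\tfrac{\eta_t}{1-\chi_t}\Sigma g_t$ gives $\mu^+=\theta_t-\eta_t\Sigma g_t$, so that $q_t^+=\calN(\theta_t-\eta_t\Sigma g_t,\chi_t\Sigma)$.

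Next comes the reheating step. The predictive distribution is the convolution of $q_t^+$ with the independent noise $\calN(0,(1-\chi_t)\Sigma)$. Means add and covariances add, giving $\chi_t\Sigma+(1-\chi_t)\Sigma=\Sigma$. The mean is unchanged, which is exactly the SGD iterate of Proposition~\ref{prop:gaussian-tilt}.

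The limit claim is the only step needing care. It is also where I expect the main subtlety: the observation $y_t$ itself depends on $\chi_t$ and diverges as $\chi_t\uparrow1$, so the two effects must be balanced. I will expand the log-likelihood in $\theta$:
\[
-\tfrac12(y_t-\theta)^\top R_t^{-1}(y_t-\theta)=-\tfrac{1-\chi_t}{2\chi_t}\|\theta\|_{\Sigma^{-1}}^2+\tfrac{1-\chi_t}{\chi_t}\theta^\top\Sigma^{-1}y_t+\text{const}.
\]
The linear coefficient is
\[
\tfrac{1-\chi_t}{\chi_t}\Sigma^{-1}\theta_t-\tfrac{\eta_t}{\chi_t}g_t\;\to\;-\eta_t g_t ,
\]
while the quadratic coefficient $\tfrac{1-\chi_t}{2\chi_t}\Sigma^{-1}$ and the residual term $\tfrac{1-\chi_t}{\chi_t}\Sigma^{-1}\theta_t$ both vanish. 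Hence, for every fixed $\theta$, the normalized log-likelihood minus its $\theta$-independent constant converges to $-\eta_t\ip{g_t}{\theta}$. I will state this pointwise (and locally uniform) convergence on compacts, since the limit factor is not integrable on its own; only its product with $q_t$ is.
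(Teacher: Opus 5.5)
Your proposal is correct and follows essentially the paper's proof. The paper writes the conjugate update through the Kalman gain $K_t=\Sigma(\Sigma+R_t)^{-1}=(1-\chi_t)I$ rather than by adding precisions, which is the same computation, and its limit argument is your expansion of the log-likelihood with $R_t^{-1}=\tfrac{1-\chi_t}{\chi_t}\Sigma^{-1}$; your explicit remark that the divergent term $-\tfrac12 y_t^\top R_t^{-1}y_t$ goes into the $\theta$-independent constant is a point the paper leaves implicit.
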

The proof is in Appendix~\ref{app:proofs-deferred}.

The finite update contracts the covariance and the process noise reheats it, while the linear-score update preserves the covariance directly; accounting for finite shrinkage and reheating makes the covariance change appear as the relative-entropy drift of \S\ref{sec:cov-drift}.

The linear-score calculus of \S\ref{sec:gab} extends to mixtures of Gaussians with a common covariance.

\begin{proposition}[Gaussian-mixture-prior SGD]\label{prop:mixture-prior}
Let $\pi=\sum_k w_k\,\calN(\mu_k,\Sigma)$ and reweight it by $\exp(-\eta\ip{g}{\theta})$. The posterior is again a mixture $\sum_k w_k'\,\calN(\mu_k-\eta\Sigma g,\Sigma)$ with every component mean shifted by the same SGD step $-\eta\Sigma g$ and re-weighted weights
\begin{equation}\label{eq:mixture-weights}
w_k'=\frac{w_k Z_k}{\sum_j w_j Z_j},\qquad \log Z_k=-\eta\ip{g}{\mu_k}+\tfrac12\eta^2 g^\top\Sigma g
\end{equation}
and the exact cumulative cross-entropy correction over a run is the mixture free energy $-\sum_t\log\!\bigl(\sum_k w_k^{(t)}Z_k^{(t)}\bigr)$, a log-sum-exp of the per-component free energies.
\end{proposition}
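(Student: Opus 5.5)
The plan is to reduce everything to the single-Gaussian calculation of Proposition~\ref{prop:gaussian-tilt}, applied component by component, and then track the normalizers. Write the unnormalized posterior as
\[
e^{-\eta\ip{g}{\theta}}\pi(d\theta)=\sum_k w_k\,e^{-\eta\ip{g}{\theta}}\calN(\mu_k,\Sigma)(d\theta).
\]
For each $k$, Proposition~\ref{prop:gaussian-tilt} with $\theta_t=\mu_k$ gives
\[
e^{-\eta\ip{g}{\theta}}\calN(\mu_k,\Sigma)(d\theta)=Z_k\,\calN(\mu_k-\eta\Sigma g,\Sigma)(d\theta),
\qquad
\log Z_k=-\eta\ip{g}{\mu_k}+\tfrac12\eta^2g^\top\Sigma g .
\]
This is just completing the square in the exponent. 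Summing over $k$, the total normalizer is $Z=\sum_j w_jZ_j$. Dividing by $Z$ exhibits the posterior as the mixture $\sum_k w_k'\,\calN(\mu_k-\eta\Sigma g,\Sigma)$ with $w_k'=w_kZ_k/\sum_jw_jZ_j$, which is \eqref{eq:mixture-weights}. The $w_k'$ are nonnegative and sum to one, so this is a bona fide mixture with a common covariance. Because the shift $-\eta\Sigma g$ does not depend on $k$, every component moves by the same SGD step.

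The class of common-covariance Gaussian mixtures is therefore closed under the linear-score update. This licenses iterating. At round $t$ the current mixture has weights $w_k^{(t)}$ and means $\mu_k^{(t)}$, and the round's gradient is $g_t$ at inverse temperature $\eta_t$. The update produces the next mixture with the per-component normalizers $Z_k^{(t)}$ defined as above, using $\mu_k^{(t)}$, $g_t$ and $\eta_t$. The round's total normalizer is $Z^{(t)}=\sum_k w_k^{(t)}Z_k^{(t)}$.

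Next I will identify the cross-entropy correction. As in \S\ref{sec:gab}, the round's mix loss, times $\eta_t$, is $-\log Z^{(t)}$. The cumulative correction is then the telescoped sum of the round-wise log-normalizers,
\[
-\sum_t\log Z^{(t)}=-\sum_t\log\Bigl(\sum_k w_k^{(t)}Z_k^{(t)}\Bigr).
\]
Exactness comes from the chain rule for sequential Bayes updates: the product of the round normalizers equals the normalizer of the full product likelihood against $\pi$. So this quantity is exactly the negative log-evidence, with no approximation. Each inner term is a log-sum-exp of the component log-normalizers $\log Z_k^{(t)}$, shifted by the $\log w_k^{(t)}$, which is the claimed form.

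The only real obstacle is interpretive rather than technical: fixing what "cross-entropy correction" means so that the displayed formula is an identity and not a bound. I would define it as the sum over rounds of $\eta_t\ell^{\mathrm{mix}}_t$ for the mixture posterior, mirroring the single-Gaussian definition. Optionally I would record the centered version $\sum_t\bigl(\eta_t\E_{q_t}c_t+\log Z^{(t)}\bigr)$ and note that it reduces to $\sum_t\eta_t^2Q_t$ when there is a single component. Everything else is the single-Gaussian algebra applied componentwise.
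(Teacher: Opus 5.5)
Your proposal is correct and follows the paper's proof essentially step for step. You apply Proposition~\ref{prop:gaussian-tilt} componentwise with $\theta_t=\mu_k$, normalize by $\sum_j w_jZ_j$ to obtain the weights \eqref{eq:mixture-weights}, and get the cumulative correction from the multiplicative composition of the per-round normalizers $\sum_k w_k^{(t)}Z_k^{(t)}$; your extra remarks (closure of common-covariance mixtures under the update, the identification with the negative log-evidence, and the centered version reducing to $\sum_t\eta_t^2Q_t$ for a single component) are consistent with the paper and only make explicit what its proof leaves implicit.
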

The proof is in Appendix~\ref{app:proofs-deferred}.

\section{Schedule constructions in full}\label{app:schedules-full}

\subsection{Matrix clocks and classical step-size rules}

For a predictable diagonal or full-matrix $\Sigma_t$ the intrinsic-time density $Q_t=\half g_t^\top\Sigma_tg_t$ already encodes the preconditioner, and every change of $\Sigma_t$ adds the covariance drift \eqref{eq:xi-gaussian}, so a preconditioner helps only when it lowers future intrinsic time or comparator information by more than its drift.
Parameter-free methods \citep{orabona-pal,cutkosky2018colt,zakaria2020lipschitz} attain regret that adapts to the unknown comparator without the bound $D=\|\theta_*\|$ that AdaGrad is tuned with. On the same clock, running the budgets $\Gamma_j=2^j$ in parallel and aggregating them is a dyadic multi-budget controller aimed at the same tuning problem.
The classical subgradient step $\alpha_t\asymp c/\sqrt t$ is \eqref{eq:adagrad-scalar} run with $V_t\approx t$, as if $\|g_t\|\approx1$.
Line search on $f_t$ is the pressure-target variant with $a_t=f_t(\theta_{t+1}(\alpha_t))$.

\subsection{Schedule-Free SGD: the tight convex rate as an identity projection}\label{sec:schedulefree-ledger}

The Schedule-Free family of \cite{aaron2024road} generates a base iterate $z_{t+1}=z_t-\gamma g_t$ at constant step $\gamma$, evaluates gradients at $y_t=(1-\beta)z_t+\beta x_t$ on the segment between $z_t$ and a weighted running average $x_t$, and returns $x_T$; $(\beta=0,w_t=1)$ recovers Polyak--Ruppert averaging, $(\beta=1,w_t=1)$ primal averaging, and $\beta_t=t/T$ linear-decay schedules \citep[\S 2]{aaron2024road}.
Identity \eqref{eq:weighted-ledger} applied to the base recursion at $\Sigma=I$, $\eta_t=\gamma$, summed over $t=1,\dots,T$ and divided by $\gamma T$, gives
\begin{equation}\label{eq:schedulefree-ledger}
\frac{1}{T}\sum_{t=1}^T\ip{g_t}{z_t-u}
=\frac{I_1(u)-I_{T+1}(u)}{\gamma T}+\frac{1}{T}\sum_{t=1}^T\gamma Q_t
\end{equation}
for every $\beta$.
The telescoping inequality in the proof of \cite[Theorem~2]{aaron2024road} drops nonnegative Bregman terms of $F$ among the evaluation point, the average, and the comparator.
At $\gamma=D/(G\sqrt T)$ with $D=\|z_1-u\|$ and $\sup_t\|g_t\|\le G$, dropping the terminal term $I_{T+1}(u)/(\gamma T)$ of \eqref{eq:schedulefree-ledger}, bounding $\gamma Q_t\le\gamma G^2/2$, and applying that inequality deliver the tight non-smooth convex rate $\E[F(x_T)-F(u)]\le DG/\sqrt T$ of \cite[Theorem~1]{aaron2024road}.

\subsection{Chebyshev step-size cycling: a closed-form schedule on quadratics}\label{sec:cheby-cyclic}

When the curvature interval is known, the information ledger \eqref{eq:weighted-ledger} admits a length-$T$ step-size cycle whose reciprocal steps are the Chebyshev nodes of that interval, a construction from numerical linear algebra \citep{manteuffel1977tchebychev,greenbaum1997iterative} treated beside the matching first-order lower bounds in \cite{nemirovski1995informationbased}.
Let $F(\theta)=\half(\theta-\theta_\star)^\top H(\theta-\theta_\star)$ with $H=U\Lambda U^\top$, $\lambda_i\in[\lambda_{\min},\lambda_{\max}]$, $\kappa=\lambda_{\max}/\lambda_{\min}$, and let $T_T$ be the Chebyshev polynomial of the first kind of degree $T$.
Full-gradient descent ($\Sigma=I$) runs the steps $\eta_t=1/r_t$ with
\begin{equation}\label{eq:cheby-roots}
r_t=\frac{\lambda_{\max}+\lambda_{\min}}{2}+\frac{\lambda_{\max}-\lambda_{\min}}{2}\cos\!\left(\frac{(2t-1)\pi}{2T}\right),\qquad t=1,\dots,T
\end{equation}
In eigencoordinates $z_t=U^\top(\theta_t-\theta_\star)$, write $g_t=\Lambda z_t$, $Q_t=\half g_t^\top g_t$, and $I_t(\theta_\star)=\half\|z_t\|^2$, and let $\rho_{\mathrm{cyc}}=(\sqrt\kappa-1)/(\sqrt\kappa+1)$.

\begin{proposition}[Chebyshev-cycle ledger identity on a quadratic]\label{prop:cheby-cyclic}
Under the schedule \eqref{eq:cheby-roots} on the quadratic above, the per-coordinate iterate satisfies the closed-form identity
\begin{equation}\label{eq:cheby-iterate}
z_{T,i}=z_{0,i}\cdot\frac{T_T\!\left(\frac{\lambda_{\max}+\lambda_{\min}-2\lambda_i}{\lambda_{\max}-\lambda_{\min}}\right)}{T_T\!\left(\frac{\lambda_{\max}+\lambda_{\min}}{\lambda_{\max}-\lambda_{\min}}\right)}
\end{equation}
and the cumulative objective-level ledger \eqref{eq:loss-path} evaluates against the comparator $u=\theta_\star$ to
\begin{equation}\label{eq:cheby-ledger}
\sum_{t=1}^T\eta_t\left[F(\theta_{t-1})-F(\theta_\star)+B_F(\theta_\star,\theta_{t-1})\right]
=I_0(\theta_\star)-I_T(\theta_\star)+\sum_{t=1}^T\eta_t^2Q_{t-1}
\end{equation}
where $B_F(\theta_\star,\theta_t)=\half\|z_t\|_\Lambda^2$ is the retained Bregman remainder of \eqref{eq:loss-path}.  The linearized form $\sum_t\eta_t\ip{g_{t-1}}{\theta_{t-1}-\theta_\star}=I_0(\theta_\star)-I_T(\theta_\star)+\sum_t\eta_t^2Q_{t-1}$ of \eqref{eq:weighted-ledger} contains no Bregman term, since the curvature gain is exactly the gap between the linearized regret and the realized excess loss. The geometric-mean per-round contraction of the distance $\|z_t\|=\sqrt{2I_t(\theta_\star)}$ equals $\rho_{\mathrm{cyc}}$:
\begin{equation}\label{eq:cheby-geomean}
\left(\frac{I_T(\theta_\star)}{I_0(\theta_\star)}\right)^{1/(2T)}\to\rho_{\mathrm{cyc}}\quad\text{as }T\to\infty
\end{equation}
whenever $z_0$ has a nonzero component on an eigencoordinate with $\lambda_i\in\{\lambda_{\min},\lambda_{\max}\}$.
\end{proposition}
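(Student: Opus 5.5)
The plan has three parts: diagonalize the recursion, read the ledger off Theorem~\ref{thm:loss-identity}, and then handle the asymptotics through the explicit Chebyshev growth rate.

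\emph{Closed form for the iterate.} With $\Sigma=I$ and the gradient $H(\theta_{t-1}-\theta_\star)$, the update in eigencoordinates is $z_{t,i}=(1-\lambda_i/r_t)z_{t-1,i}$. Hence $z_{T,i}=z_{0,i}\,p_T(\lambda_i)$ with $p_T(\lambda)=\prod_{t=1}^T(1-\lambda/r_t)$. This $p_T$ is the unique degree-$T$ polynomial with $p_T(0)=1$ whose roots are the $r_t$. The map $\lambda\mapsto x(\lambda)=(\lambda_{\max}+\lambda_{\min}-2\lambda)/(\lambda_{\max}-\lambda_{\min})$ is affine and sends $r_t$ to $-\cos((2t-1)\pi/(2T))$. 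Because $\cos$ of those angles is symmetric about $\pi/2$, this set is exactly the zero set $\{\cos((2k-1)\pi/(2T))\}$ of $T_T$. Therefore $T_T(x(\lambda))/T_T(x(0))$ is also a degree-$T$ polynomial in $\lambda$ with these roots and value $1$ at $\lambda=0$. It must equal $p_T$, which gives \eqref{eq:cheby-iterate}.

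\emph{The ledger.} Equation \eqref{eq:cheby-ledger} is Theorem~\ref{thm:loss-identity}, equation \eqref{eq:loss-path}, applied with $f_t\equiv F$, $u=\theta_\star$, $\Sigma=I$, and the time index shifted by one so that round $t$ starts at $\theta_{t-1}$. No further argument is needed. The Bregman form follows from a short quadratic computation: $B_F(\theta_\star,\theta)=F(\theta_\star)-F(\theta)+\ip{H(\theta-\theta_\star)}{\theta-\theta_\star}=\half\|z\|_\Lambda^2$. The linearized form is \eqref{eq:sum-weighted-ledger} directly. The remark that it contains no Bregman term is exactly the rearrangement \eqref{eq:bregman-exact}.

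\emph{The geometric-mean limit.} This is the main step. By \eqref{eq:cheby-iterate},
\[
I_T/I_0=\sum_i w_i\,T_T(x_i)^2/T_T(x(0))^2,\qquad w_i=z_{0,i}^2/\|z_0\|^2 .
\]
Write $x_0=x(0)=(\kappa+1)/(\kappa-1)>1$. On $|x|>1$ we have $T_T(x)=\half(s^T+s^{-T})$ with $s=x+\sqrt{x^2-1}$. For $x_0$ this gives $s_0=(\sqrt\kappa+1)/(\sqrt\kappa-1)=\rho_{\mathrm{cyc}}^{-1}$, so $T_T(x_0)^{1/T}\to\rho_{\mathrm{cyc}}^{-1}$.

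For the numerator, $|T_T(x_i)|\le1$ since $x_i\in[-1,1]$. At the endpoints $\lambda_i\in\{\lambda_{\min},\lambda_{\max}\}$ we have $x_i=\pm1$ and $|T_T(x_i)|=1$ exactly. So whenever such a component has $w_i>0$, the sum lies between $\min_{\text{endpoint}}w_i>0$ and $1$. Both bounds are independent of $T$, so its $1/(2T)$-th power tends to $1$. Combining the two pieces gives
\[
(I_T/I_0)^{1/(2T)}\to\bigl(T_T(x_0)^{-2}\bigr)^{1/(2T)}\to\rho_{\mathrm{cyc}} .
\]

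The delicate point is that the schedule itself depends on $T$, so the iterate sequence is not a single run. I would state explicitly that $I_T$ denotes the endpoint of the length-$T$ cycle, and that the limit is over cycle length. The endpoint hypothesis is exactly what keeps the numerator from decaying. Without it, some $T_T(x_i)$ could vanish or be small, for instance at a Chebyshev node, and the limit could then be strictly less than $\rho_{\mathrm{cyc}}$.
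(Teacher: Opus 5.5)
Your proposal is correct and follows essentially the same route as the paper. You diagonalize, identify $\prod_t(1-\lambda_i/r_t)$ with the normalized Chebyshev ratio, read \eqref{eq:cheby-ledger} off \eqref{eq:loss-path} with the index shift, and sandwich $I_T/I_0$ between an endpoint weight times $T_T(x_0)^{-2}$ and $T_T(x_0)^{-2}$. The differences are cosmetic: you identify the product by matching the $T$ distinct roots and the value $1$ at $\lambda=0$, where the paper tracks the monic normalization constant explicitly, and you use $s_0=x_0+\sqrt{x_0^2-1}=\rho_{\mathrm{cyc}}^{-1}$ where the paper writes $x_0=\cosh\phi$.

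Your remark that the schedule depends on $T$, so the limit is over cycle lengths rather than along one run, is a worthwhile clarification the paper leaves implicit. One small fix: the lower bound should be a positive endpoint weight (or the sum of endpoint weights), not the minimum over endpoint coordinates, which could be zero.
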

The proof is in Appendix~\ref{app:proofs-deferred}.

At $\kappa=100$ the cycle reaches $\|z_T\|/\|z_0\|\le10^{-3}$ in every eigencoordinate after $T=38$ rounds, against $346$ rounds for the fixed step $2/(\lambda_{\max}+\lambda_{\min})$, a $9.1\times$ speedup, while its cumulative intrinsic time stays bounded in $T$.

\subsection{Pressure-cycled SGD: alternating descent and escape schedules}\label{sec:pressure-cycle}

On a nonconvex surface a second objective opposes the acquisition of comparator information: leaving a basin in which descent has trapped the iterate.
The information ledger expresses both as step-size problems in the same intrinsic-time units.

The descent schedule maximizes the gain in comparator information. Against a local reference the one-step gain \eqref{eq:weighted-ledger} is $I_t-I_{t+1}=\eta_tS_t-\eta_t^2Q_t$ with pressure $S_t=\ip{g_t}{\theta_t-r_t}$, maximized at
\begin{equation}\label{eq:greedy-descent}
\eta_t^{\mathrm{desc}}=\frac{S_t}{2Q_t}
\end{equation}
the pressure-target step \eqref{eq:polyak-info} at $\lambda_t=\half$, which transports $S_t^2/(4Q_t)$ per round.

The escape schedule maximizes the information injected along unstable directions. Near a strict saddle point, where negative curvature persists, or in a poor local basin, the pressure $S_t\to0$, so the descent schedule stalls.
In a poor local basin no direction is unstable; there the escape schedule's large step and an isotropic injection move the iterate across the barrier.
By the recursion \eqref{eq:saddle-stoch} an eigendirection of curvature $-\gamma$ amplifies the information distance by $(1+\eta\gamma)^2$ per step, so escape wants $\eta$ as large as the stable directions tolerate,
\begin{equation}\label{eq:escape-schedule}
\eta_t^{\mathrm{esc}}\asymp\frac{c}{\lambda_{\max}}
\end{equation}
together with a thermal injection of variance recorded by the unstable clock $V_T^-$ of \eqref{eq:V-minus}.
This high-temperature schedule incurs intrinsic time deliberately.

A controller switches between the two schedules using the per-round \emph{return on intrinsic time}
\begin{equation}\label{eq:rho-trigger}
\rho_t=\frac{f(\theta_t)-f(\theta_{t+1})}{\eta_t^2Q_t}
\end{equation}
the realized objective decrease per unit of the intrinsic-time cost $\eta_t^2Q_t$ of \eqref{eq:weighted-ledger}.
The controller treats a run of small returns as a stall.

Pressure-cycled SGD tracks the iterate $\theta_t$, the intrinsic clock $V_t$, the unstable clock $V_t^-$, the best objective value $f_{\mathrm{best}}$ found so far, and a mode that is either \textsc{descent} or \textsc{escape}.
It alternates between the two modes.
\begin{enumerate}[leftmargin=1.5em,itemsep=1pt,topsep=2pt]
\item \textsc{descent}: step with $\eta_t^{\mathrm{desc}}=S_t/(2Q_t)$ \eqref{eq:greedy-descent} (in practice the AdaGrad form of \S\ref{sec:schedules} on $V_t$). If $\rho_t$ \eqref{eq:rho-trigger} stays below a threshold for $K$ rounds, switch to \textsc{escape}.
\item \textsc{escape}: step with $\eta_t^{\mathrm{esc}}\asymp c/\lambda_{\max}$ \eqref{eq:escape-schedule} plus a thermal injection, isotropic or along the unstable subspace ($V_t^-$). When a strictly lower basin value $f<f_{\mathrm{best}}$ is found, record it and switch to \textsc{descent}.
\end{enumerate}
It returns the iterate at which $f_{\mathrm{best}}$ was recorded.
Each mode maximizes a quantity the identities make explicit: \textsc{descent} the information gain in \eqref{eq:weighted-ledger}, and \textsc{escape} the injected information in \eqref{eq:saddle-stoch}.

Cyclic learning-rate schedules \citep{smith2017cyclical} and warm restarts \citep{loshchilov2017sgdr} also alternate step-size phases, but on a preset clock; here the alternation is triggered by the measured ratio \eqref{eq:rho-trigger}.
Neither schedule is individually new: \eqref{eq:greedy-descent} is the Polyak-step reading of \S\ref{sec:schedules} and \eqref{eq:escape-schedule} the high-temperature reading of \S\ref{sec:saddle}.
On a synthetic two-basin potential the controller returns an iterate within $10^{-3}$ of the deep minimum on every seed, where no fixed-step or constant-temperature baseline reaches it on any seed.
Its escape keeps running once the minimum is found, since no strictly lower value exists to end it, which is why the output is the recorded best iterate.
The numerator of the switch statistic \eqref{eq:rho-trigger}, which sets its sign, is a difference of objective values.
Estimated from consecutive mini-batch losses on a real network its sign is noise, and on a full-batch run the loss stops improving at convergence just as at a stall, so the trigger separates a stall neither from mini-batch noise nor from convergence (Appendix~\ref{app:interventions}).

\section{Zeroth-order constructions and their measured reach}\label{app:zo-full}

On a quadratic $f=\half x^\top Hx$ with $\Sigma=I$ the expected zeroth-order step is $-\eta_tCHx$, so the sampling covariance $C$ preconditions the expected dynamics.
Isotropic sampling contracts the slowest direction by $1-\eta_t\lambda_{\min}$ per step, while $C=H^{-1}$ contracts every direction by the same factor $1-\eta_t$, whatever the condition number $\kappa$.
Covariance-adapting evolution strategies \citep{hansen2016cma} approach this anisotropy heuristically; on convex quadratics the adapted covariance approximates the inverse Hessian up to a constant factor.
The first-order route to it fails: the per-coordinate estimate $\hat g_{t,i}=\ip{g_t}{\nu_t}\nu_{t,i}$ is a noisy rank-one projection, so an AdaGrad accumulator built from it does not identify the curvature (\S\ref{sec:eval-zo-readings}).

\subsection{A query-only curvature-probed sampler}\label{subsec:zo-curvature}

The query-only sampler estimates curvature from second differences of function values, without using the noisy first-order score.

\begin{proposition}[Second-difference curvature]\label{prop:zo-second-diff}
For twice-differentiable $f$ and any direction $\nu$,
\begin{equation}\label{eq:zo-second-diff}
\frac{f(\theta+\mu \nu)+f(\theta-\mu \nu)-2f(\theta)}{\mu^2}\xrightarrow[\mu\to0]{}\nu^\top\nabla^2f(\theta)\,\nu
\end{equation}
exact for quadratic $f$ at every $\mu$. A coordinate probe $\nu=e_i$ recovers $\nabla^2f(\theta)_{ii}$, so a sweep over the $d$ coordinates recovers $\mathrm{diag}(\nabla^2f)$ in $2d+1$ function queries. For an isotropic random probe $\nu\sim\calN(0,I)$, writing $H=\nabla^2f(\theta)$,
\begin{equation}\label{eq:zo-hutch}
\E\big[(\nu^\top H\nu)\,\nu_i^2\big]=\tr(H)+2H_{ii}
\end{equation}
so $H_{ii}=\half\big(\E[(\nu^\top H\nu)\nu_i^2]-\tr(H)\big)$ with $\tr(H)=\tfrac{1}{d+2}\sum_i\E[(\nu^\top H\nu)\nu_i^2]$.
\end{proposition}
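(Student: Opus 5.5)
The plan is to prove the three claims of Proposition~\ref{prop:zo-second-diff} in order: the second-difference limit, the coordinate sweep, and the Gaussian moment identity \eqref{eq:zo-hutch}.

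\emph{Second-difference limit.} Set $\phi(s)=f(\theta+s\nu)$, which is twice differentiable with $\phi''(0)=\nu^\top\nabla^2f(\theta)\nu$. The numerator of \eqref{eq:zo-second-diff} is $\phi(\mu)+\phi(-\mu)-2\phi(0)$. A second-order Taylor expansion with Peano remainder gives $\phi(\pm\mu)=\phi(0)\pm\mu\phi'(0)+\half\mu^2\phi''(0)+o(\mu^2)$. The first-order terms cancel in the symmetric sum, so dividing by $\mu^2$ yields the limit. For quadratic $f$ the function $\phi$ is an exact quadratic polynomial, the remainder vanishes identically, and the identity holds at every $\mu$. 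Taking $\nu=e_i$ gives $H_{ii}$. The value $f(\theta)$ is shared across all $d$ coordinates and each coordinate needs the two queries $f(\theta\pm\mu e_i)$, so the sweep costs $2d+1$ queries.

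\emph{Gaussian moment identity.} For $\nu\sim\calN(0,I)$, write $\nu^\top H\nu=\sum_{j,k}H_{jk}\nu_j\nu_k$. Then $\E[(\nu^\top H\nu)\nu_i^2]=\sum_{j,k}H_{jk}\E[\nu_j\nu_k\nu_i^2]$, and I would evaluate the fourth moments by Isserlis' theorem. Odd moments vanish, so off-diagonal $j\neq k$ terms contribute zero. For $j=k\neq i$ we have $\E[\nu_j^2\nu_i^2]=1$, and for $j=k=i$ we have $\E[\nu_i^4]=3$. The sum is therefore $\sum_{j\neq i}H_{jj}+3H_{ii}=\tr(H)+2H_{ii}$.

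Summing this identity over $i$ gives $d\,\tr(H)+2\tr(H)=(d+2)\tr(H)$. This yields the stated formula for $\tr(H)$, and solving the per-coordinate identity for $H_{ii}$ gives the inversion formula.

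The only step needing care is the bookkeeping of the fourth moments, in particular the factor $3$ at $j=k=i$ and the vanishing of the off-diagonal terms. Everything else is direct.
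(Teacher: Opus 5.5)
Your proposal is correct and follows the paper's own proof. That proof obtains the limit from the second-order Taylor expansion, with the odd terms cancelling in the antithetic sum and the remainder vanishing for quadratic $f$, and obtains the Gaussian identity from Isserlis' fourth-moment formula, summing over $i$ to get $(d+2)\tr(H)$; your Peano remainder for $\phi(s)=f(\theta+s\nu)$ and your case split $j=k=i$, $j=k\neq i$, $j\neq k$ only spell out what the paper states in one line.
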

The proof is in Appendix~\ref{app:proofs-deferred}.

The random estimator \eqref{eq:zo-hutch} retains the common $\tr(H)$ baseline in its variance, so for ill-conditioned $H$ it needs on the order of $(\tr H/H_{ii})^2$ probes to resolve a small $H_{ii}$, where the coordinate sweep is exact in $2d+1$ queries.
The sampler therefore uses the coordinate sweep and draws its probe with covariance equal to the damped preconditioner $P=\mathrm{diag}(|H|+\lambda)^{-1}$.
Scaling an isotropic probe's step by $P$ gives the same expected step \eqref{eq:zo-mean} and a different second moment \eqref{eq:zo-second}, and on a real objective with a wide curvature spread only the covariance form makes progress (\S\ref{sec:eval-zo-readings}).

Curvature-probed zeroth-order SGD alternates a curvature sweep with probe steps.
\begin{enumerate}[leftmargin=1.5em,itemsep=1pt,topsep=2pt]
\item Every $K$ steps, sweep the coordinates with \eqref{eq:zo-second-diff} to estimate $\mathrm{diag}(H)$ in $2d+1$ queries, and set $P=\mathrm{diag}(|H|+\lambda)^{-1}$.
\item At each step, draw a probe $\nu\sim\calN(0,P)$, form the forward-difference score $\hat g$ of \eqref{eq:zo-score}, and update $\theta\leftarrow\theta-\eta_t\hat g$.
\end{enumerate}
The $O(d)$ sweep is amortized over the $K$ descent steps.
By \eqref{eq:zo-mean} the expected step is $-\eta_t Pg$.

The off-diagonal curvature comes from the same second differences: for $i\ne j$,
\begin{equation}\label{eq:zo-offdiag}
\frac{f(\theta+\delta(e_i+e_j))+f(\theta-\delta(e_i+e_j))-2f(\theta)}{\delta^2}\xrightarrow[\delta\to0]{}H_{ii}+H_{jj}+2H_{ij}
\end{equation}
so $H_{ij}$ follows from \eqref{eq:zo-offdiag} and the diagonal \eqref{eq:zo-second-diff}, recovering the full symmetric $H$ in $d^2+d+1$ queries.
The preconditioner $P=(|H|+\lambda I)^{-1}$, with $|H|$ the absolute value on the Hessian's eigenvalues, is then a query-only realization of the saddle-free Newton method \citep{dauphin2014saddle}.
It stays positive definite when $H$ is indefinite, so along a negative-curvature direction it steps away from a saddle point where a Newton step $H^{-1}$ would converge to it.

\subsection{Measured reach of the zeroth-order samplers}\label{sec:eval-zo-readings}

On $f=\half x^\top Hx$ with $H=\mathrm{diag}(\mathrm{linspace}(1,\kappa,20))$ and $10$ seeds, sampling from the oracle covariance $C=\mathrm{diag}(H)^{-1}$ reaches $f<10^{-3}$ in fewer than $400$ function-evaluation pairs at every $\kappa$ from $10$ to $1000$, while isotropic sampling needs more as $\kappa$ grows (Figure~\ref{fig:zo-queries}(a)).
An AdaGrad accumulator built from the probe scores, $C_t\propto\mathrm{diag}(\sum_{s<t}\hat g_s\hat g_s^\top+\varepsilon I)^{-1/2}$, needs about as many pairs as isotropic sampling.

Panel (b) counts single queries for the step form $\theta\mapsto\theta-\eta P\hat g$, which has the covariance form's expected step.
The quartic bowl is $f=\half\sum_i\lambda_ix_i^2+\tfrac{c}{4}\sum_ix_i^4$ in $d=12$, with $c=0.5$ and $\lambda_i$ spaced evenly from $1$ to $\kappa$.
The Rosenbrock valley is $f=\sum_{i<d}\bigl[100(x_{i+1}-x_i^2)^2+(1-x_i)^2\bigr]$ in $d=8$.
Each sampler's step size is tuned separately, with $10$ seeds on the quadratics and $8$ on the quartic bowl and the Rosenbrock valley.
Run once at initialization and counted in full, the coordinate sweep leaves the diagonal probe $6$--$12\%$ above the oracle on axis-aligned quadratics.
Refreshing the sweep every $10$ steps raises its count to $3.1$--$3.3$ times the oracle's.
The random estimator \eqref{eq:zo-hutch} needs $2{,}276$ and $10{,}463$ queries at $\kappa=10$ and $100$, more than isotropic sampling.

On rotated quadratics the diagonal probe needs $1.4$ times the queries of isotropic sampling at $\kappa=10$ and gains $1.9\times$ at $\kappa=100$.
The full-matrix probe, whose sweep costs $d^2+d+1=273$ queries at $d=16$, needs $1.3$--$1.5$ times the queries of its oracle $(|H|+\lambda I)^{-1}$ at every $\kappa$.

On the quartic bowl the sweep is refreshed along the path and counted.
The diagonal probe's advantage grows with $\kappa$ as on a quadratic.
At $\kappa=30$ the step tuned for both probes diverges on three of eight seeds.
The tuning rule, fewest median queries over three tuning seeds, admits a step one grid point below the size at which every seed diverges.
Admitting only steps that reach the target on every tuning seed selects the next smaller step for both probes and leaves the other samplers unchanged.
Every seed then reaches the target, in a median $786$ queries for the diagonal probe and $1{,}842$ for the full-matrix probe against $2{,}046$ for isotropic sampling.
On the Rosenbrock valley isotropic sampling diverges at every step size of the grid.
The full-matrix probe reaches $f<1$ on $5$ of $8$ seeds in a median $3{,}649$ queries, and under the stricter tuning on every seed, at a step three times smaller, in a median $6{,}240$.
On one seed the diagonal probe stops at a non-global local minimum, where $f=3.99$.
A step three times smaller avoids it on every seed at three times the queries.

\begin{figure}[ht]
\centering
\includegraphics[width=\linewidth]{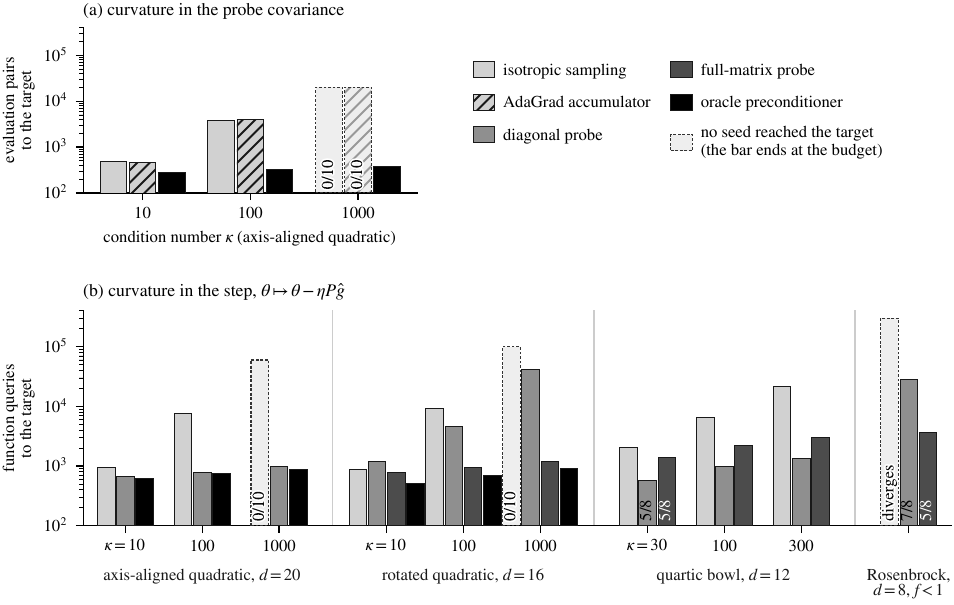}
\caption{\textbf{Queries to the target grow far more slowly with the condition number when the probe captures the curvature.}
(a) Probes on the axis-aligned quadratic drawn from the isotropic covariance, from an AdaGrad accumulator and from the oracle covariance $\mathrm{diag}(H)^{-1}$. The accumulator does not identify the curvature.
(b) Probes that apply the preconditioner to the step. The oracle is $\mathrm{diag}(H)^{-1}$ on the axis-aligned quadratics and $(|H|+\lambda I)^{-1}$ on the rotated ones. The probes' curvature sweeps count toward their queries. On rotated quadratics only the full-matrix probe captures the curvature.
Bars are medians over seeds of the function-evaluation pairs in (a) and the function queries in (b) to $f<10^{-3}$, or to $f<1$ on the Rosenbrock valley.
A pale dashed bar marks a sampler that reached the target on no seed and ends at the budget.
A fraction at the foot of a bar is the share of seeds that reached the target, where that share is below one.}
\label{fig:zo-queries}
\end{figure}

The real objective of Figure~\ref{fig:real-problems}(a) is seven-class logistic regression with an $\ell_2$ floor of $10^{-3}$ on $2{,}000$ Covertype examples written in their $49$ principal-component scores, so its Hessian at the start is block-diagonal in the features.
Three scalings of the same components set the spread of its diagonal: unit variance ($\max/\min$ ratio $1$), the component variances of the standardized features ($285$), and those of the raw features ($3.2\times10^8$).
Every query counts, the coordinate sweep included.
The target removes $90\%$ of the initial suboptimality.
Step sizes are tuned on seeds disjoint from the eight reported.
On the raw components isotropic sampling does not reach the target within $150{,}000$ queries.
The covariance-form probe reaches it in a median $2{,}754$ queries and the exact diagonal oracle in $2{,}039$, speed-ups of more than $54\times$ and $73\times$ that recover at least $93\%$ of the oracle's gain on a log scale (Figure~\ref{fig:real-problems}(a)).
Scaling the step of an isotropic probe by the same preconditioner diverges or stalls at every step size tried, its median best loss staying above $1.944$ from $1.946$ at the start.
The two forms share the mean $-\eta Pg$ of \eqref{eq:zo-mean} and differ in the second moment \eqref{eq:zo-second}.
On a quadratic with Hessian $H$ the probe noise enters the loss as $\half\eta^2[(g^\top Pg)\tr(HP)+2g^\top PHPg]$ in the covariance form and as $\half\eta^2[\|g\|^2\tr(PHP)+2g^\top PHPg]$ in the step form.
The ratio of their leading terms, $\rho=\|g\|^2\tr(PHP)/((g^\top Pg)\tr(HP))$, is $3.9\times10^6$ at the start on the raw components, $2.3$ on the standardized ones, and $1$ at unit variance.
Accordingly the step form needs $1.5$ to $1.8$ times the queries of the covariance form on the standardized components and coincides with it at unit variance.
Where the spread does not slow isotropic sampling, preconditioning has nothing to recover: the oracle gains $1.04\times$ at unit variance and $0.99\times$ on the standardized components, where isotropic sampling needs $1{,}301$ queries.
There the probe spends $700$ queries on its sweep.

At fine-tuning scale only a per-group block-scalar covariance $C=\mathrm{diag}(s_g^2)$ is affordable.
The last two blocks and a freshly initialized head of a pretrained encoder ($14.8$M parameters, $18$ groups) are fine-tuned with forward-pass-only zeroth-order SGD \citep{malladi2023mezo} for $700$ queries on $5$ seeds, every allocation sharing one learning rate and one total probe energy.
Scaling the probe with the square root of each group's curvature, $s_g\propto\sqrt{H_g}$, ends at the lowest validation cross-entropy of four allocations and $s_g\propto1/\sqrt{H_g}$ at the highest (Figure~\ref{fig:zo-allocation}).
The expected step gives group $g$ an effective rate proportional to $s_g^2$, so this design does not separate the allocation from the per-group step size it sets.

\begin{figure}[ht]
\centering
\includegraphics[width=\linewidth]{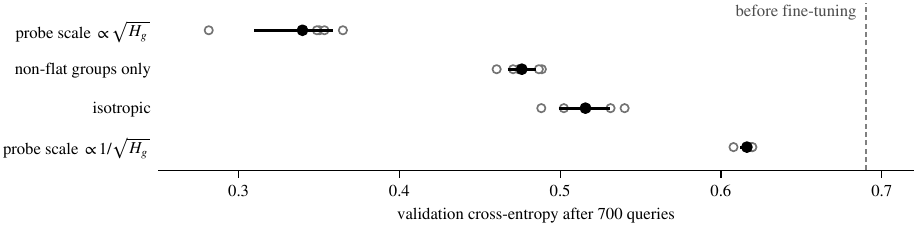}
\caption{\textbf{At fine-tuning scale the probe allocation that follows the curvature ends at the lowest validation loss.}
Validation cross-entropy of the fine-tuned encoder after $700$ queries for four per-group probe scales that share one learning rate and one total probe energy.
Open circles are seeds, filled circles their mean, and bars the $95\%$ percentile-bootstrap interval of the mean.
The dashed line is the cross-entropy before fine-tuning.}
\label{fig:zo-allocation}
\end{figure}

\subsection{Any sampling law via the cumulant information ledger}

The mean identity \eqref{eq:zo-mean} uses only the covariance of $\calP$, so it holds for Gaussian, Rademacher, uniform-on-sphere, and Laplace probes alike.
Under a non-Gaussian probe the normalizer of the adaptive-Bayes update is the cumulant-generating function $\Lambda_t$ of \S\ref{subsec:add-nongaussian} (Proposition~\ref{prop:add-cgf-noise}), of which the Gaussian second moment \eqref{eq:zo-second} is the second-order term.
Another probe law keeps the mean $Cg_t$ and changes only the fourth-moment structure of \eqref{eq:zo-second}, which is finite whenever $\calP$ has finite fourth moments.
\section{Information complexity of loss surfaces}\label{app:surface-full}

For a realized trajectory, the weighted information complexity relative to a comparator $u$ is
\begin{equation}\label{eq:path-complexity}
\mathsf C_T(u)
=I_1(u)-I_{T+1}(u)+\sum_{t=1}^T\eta_t^2Q_t-\sum_{t=1}^T\eta_tB_{f_t}(u,\theta_t)
\end{equation}
which by Theorem~\ref{thm:loss-identity} equals $\sum_{t=1}^T\eta_t[f_t(\theta_t)-f_t(u)]$ and separates the prior description cost $I_1(u)$, the intrinsic-time energy, and the signed curvature gain.
The clock gives measurable path-difficulty notions: the worst case $\sup_{\mathrm{traj}}V_T$ over SGD trajectories, the average $\E_{\mathrm{init}}[V_T]$ over random initializations, and the least intrinsic time $\min_{\mathrm{traj}\to s}V_T$ of a trajectory reaching a given point $s$.
Optimizing over admissible policies and terminal posteriors gives a surface-level complexity: for a prior $\pi$, tolerance $\varepsilon$, horizon $T$, and $B_t=B_F(\theta_\star,\theta_t)$,
\begin{equation}\label{eq:surface-def}
\calK_T(\varepsilon)
=\inf\left\{\sum_{t=1}^T\eta_t^2Q_t-\sum_{t=1}^T\eta_tB_t+\KL(q_T\|\pi):
\E_{q_T}F-F_\star\le\varepsilon\right\}
\end{equation}

\begin{proposition}[Quadratic-objective surface complexity]\label{prop:surface-quadratic}
Let $F(\theta)=\half(\theta-\theta_\star)^\top H(\theta-\theta_\star)$ with $H=U\Lambda U^\top\succ0$, let $\pi=\calN(0,\tau^2I)$, and take the policy class of fixed-rate full-gradient descent with $\Sigma=I$ and Gaussian terminal posteriors. For every $\beta>0$ with $\half\sum_{i=1}^d\lambda_i/(\beta\lambda_i+\tau^{-2})<\varepsilon$,
\[
\limsup_{T\to\infty}\calK_T(\varepsilon)\le \half\sum_{i=1}^d\log\!\left(1+\beta\tau^2\lambda_i\right)+\frac{\|\theta_\star\|^2}{2\tau^2}+R_{\mathrm{path}},
\qquad
R_{\mathrm{path}}=\half\sum_{i=1}^d\frac{\eta_\star\lambda_i-1}{2-\eta_\star\lambda_i}\,z_{0,i}^2
\]
where $\eta_\star=2/(\lambda_{\max}+\lambda_{\min})$ and $z_0=U^\top(\theta_0-\theta_\star)$.
\end{proposition}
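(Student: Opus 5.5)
The plan is to bound the infimum in \eqref{eq:surface-def} from above by one explicit feasible choice, and then let $T\to\infty$. For the policy I take fixed-rate full-gradient descent at $\eta_t\equiv\eta_\star=2/(\lambda_{\max}+\lambda_{\min})$ with $\Sigma=I$. For the terminal posterior I take $q_T=\calN(\theta_T,\Sigma_\star)$, where $\Sigma_\star=U\diag\bigl(1/(\beta\lambda_i+\tau^{-2})\bigr)U^\top$ is the free-energy covariance \eqref{eq:free-energy-cov}. Each of the three pieces of the objective, the path terms, the terminal relative entropy, and the constraint, then separates over the eigencoordinates $z_t=U^\top(\theta_t-\theta_\star)$. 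I handle them in turn.

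\textbf{Path terms.} In eigencoordinates the recursion is $z_{t+1,i}=(1-\eta_\star\lambda_i)z_{t,i}$. Since $0<\lambda_{\min}\le\lambda_i\le\lambda_{\max}$, we have $\eta_\star\lambda_i\in(0,2)$, so $|1-\eta_\star\lambda_i|<1$ in every coordinate. The two path quantities are
\[
Q_t=\half\sum_i\lambda_i^2z_{t,i}^2,
\qquad
B_t=B_F(\theta_\star,\theta_t)=\half\sum_i\lambda_iz_{t,i}^2 .
\]
Hence the path term in coordinate $i$ is $\half(\eta_\star^2\lambda_i^2-\eta_\star\lambda_i)\sum_t(1-\eta_\star\lambda_i)^{2t}z_{0,i}^2$. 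As $T\to\infty$ the geometric series converges to $z_{0,i}^2/\bigl(\eta_\star\lambda_i(2-\eta_\star\lambda_i)\bigr)$. The path term therefore tends to $\half z_{0,i}^2(\eta_\star\lambda_i-1)/(2-\eta_\star\lambda_i)$, and summing over $i$ gives exactly $R_{\mathrm{path}}$. The index offset between $\theta_0$ and $\theta_1$ only shifts a finite number of terms and does not affect the limit, provided the convention is fixed consistently with $z_0$.

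\textbf{Terminal relative entropy.} I substitute $\Sigma_\star$ into \eqref{eq:terminal-kl}. Using $\log\bigl(\tau^2(\beta\lambda_i+\tau^{-2})\bigr)=\log(1+\beta\tau^2\lambda_i)$, this gives
\[
\KL(q_T\|\pi)=\half\sum_i\log(1+\beta\tau^2\lambda_i)+\frac{\|\theta_T\|^2}{2\tau^2}+\half\sum_i\Bigl(\frac{1}{1+\beta\tau^2\lambda_i}-1\Bigr).
\]
Each term of the last sum is nonpositive, so I drop it, which costs only an inequality. Because $\theta_T\to\theta_\star$, the second term tends to $\|\theta_\star\|^2/(2\tau^2)$.

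\textbf{Feasibility.} The constraint reads $\E_{q_T}F-F_\star=\half z_T^\top\Lambda z_T+\half\tr(H\Sigma_\star)$. Here $\half\tr(H\Sigma_\star)=\half\sum_i\lambda_i/(\beta\lambda_i+\tau^{-2})$, which is strictly below $\varepsilon$ by hypothesis, and the first term decays geometrically to zero. So the choice is admissible for every $T$ past some $T_0$. For those $T$, $\calK_T(\varepsilon)$ is at most the sum of the three pieces, and taking $\limsup$ yields the claim.

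The main obstacle is this feasibility step: the constraint must hold at finite $T$, not only in the limit. That is why the hypothesis is a strict inequality. The plan is to use the slack $\varepsilon-\half\tr(H\Sigma_\star)>0$ to absorb the geometrically vanishing term $\half z_T^\top\Lambda z_T$, and to state the threshold $T_0$ explicitly. A secondary point to check is the sign convention on the dropped trace term, which holds for every $\beta>0$.
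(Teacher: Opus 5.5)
Your proposal is correct and follows the paper's proof essentially step for step. It uses the same feasible choice (fixed step $\eta_\star$ with terminal posterior $\calN(\theta_T,\Sigma_\star)$), the same per-coordinate geometric sums giving the net path term $R_{\mathrm{path}}$, the same relative-entropy computation with the nonpositive trace term dropped, and the same use of the strict inequality to make the choice admissible for all large $T$.
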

The proof is in Appendix~\ref{app:proofs-deferred}.

The first term is the eigenvalue-by-eigenvalue posterior-information cost, the log-determinant that reappears as the flatness measure of \S\ref{sec:flatness}.
The second is the prior-distance penalty.
The feasibility condition forces $\beta$ to grow as $\varepsilon\downarrow0$, roughly as $d/(2\varepsilon)$, so certifying a tighter tolerance costs of order $\tfrac d2\log(1/\varepsilon)$.
The remainder $R_{\mathrm{path}}$ is signed and finite.
Since $\eta_\star$ depends on $\lambda_{\max}$, the coefficient of $z_{0,\max}^2$ in it is $(\kappa-1)/4$, growing linearly in the condition number where the log-determinant grows only logarithmically.

\section{Flatness as terminal posterior information in full}\label{app:flatness-full}

\subsection{Basin evidence and posterior contraction}

For a basin $\calB$, the evidence $\mathcal E_\beta(\calB)=-\beta^{-1}\log\int_\calB e^{-\beta F(\theta)}\pi(d\theta)$ has, near $\theta_\star$, the Laplace approximation
\begin{equation}\label{eq:laplace}
\mathcal E_\beta(\calB)
\approx F(\theta_\star)+\frac{1}{2\beta}\log\det H
-\frac{d}{2\beta}\log\frac{2\pi}{\beta}-\frac1\beta\log\pi(\theta_\star)
\end{equation}
Under the linearized update the posterior covariance stays at $\Sigma$.
A quadratic composite loss $c_t(\theta)=\ip{g_t}{\theta}+\half(\theta-\theta_t)^\top H_t(\theta-\theta_t)$, as in cubic regularization and trust-region methods, gives a second-order identity.
Its posterior precision accumulates the curvature,
\[
\Sigma_{t+1}^{-1}=\Sigma_t^{-1}+\alpha H_t,
\qquad\text{equivalently}\qquad
\Sigma_{t+1}=(I+\alpha\,\Sigma_t H_t)^{-1}\Sigma_t
\]
so after $T$ rounds near a minimum with $H_t\approx H$ the posterior is $\Sigma_T\approx(\alpha TH)^{-1}$, narrow along sharp directions and broad along flat ones.
This is the Laplace form of the Bayesian identification of sharpness with a narrow posterior \citep{mackay1992practical,geoffrey1993keeping}.

\subsection{Coordinate dependence and practical estimation}\label{rem:logdet-practical}

Under a change of parameters $\theta'$ with Jacobian $J=\partial\theta'/\partial\theta$, the Fisher matrix transforms as $\mathcal F'=J^{-\top}\mathcal FJ^{-1}$ at every point, and the Hessian transforms the same way at a critical point, or everywhere when the map is linear.
Their log-determinants therefore shift by $-2\log|\det J|$, a shift that differs between basins unless the map is linear.
Nor is $\log\det(I+\beta\tau^2H)$ invariant.
The relative entropy $\KL(q\|\pi)$ is invariant when posterior and prior are transformed together.
At a critical point the spectrum of $\mathcal F^{-1}H$ is also invariant, since it transforms by similarity.
The trace and the spectral radius of the Hessian have neither property \citep{laurent2017sharp}.

On a deep network only a truncated spectrum is computable, typically the top-$K$ Ritz values returned by Lanczos with $K\ll d$, cross-checked with a Hutchinson trace.
Truncation adds failure modes of its own.
A function-preserving rescaling moves which eigenvalues fall in the top-$K$ subspace.
Clamping the negative-curvature tail to a floor $\epsilon$ adds $-\log\epsilon$ per floored eigenvalue, which can dominate the estimate as $K$ grows.
The long tail of small positive eigenvalues is undercounted, the Hutchinson trace and the sum of top-$K$ Ritz values disagreeing by a residual that does not shrink reliably with $K$.
The Fisher matrix is positive semidefinite, so the floor fires only on its near-zero Ritz values, far less often than on the Hessian's.
The Fisher's softer spectrum tightens the trace residual, but its log-determinant shifts under reparameterization as the Hessian's does.

A calibration on one trained ResNet-20 (SGD, batch $128$, $25$ epochs) measures the shift $|\Delta|$ of the truncated log-determinant under the function-preserving rescaling of \cite{laurent2017sharp} at $\alpha=4$ whose full-space prediction is $|{-2\log|\det J|}|\approx6300$ ($32$ batch-normalization parameters scaled by $\alpha$, $2304$ downstream convolution parameters by $1/\alpha$):

\begin{center}
\begin{tabular}{lcccc}
\toprule
kernel & $K=32$ & $K=128$ & $K=512$ & K-share predicted \\
\midrule
Hessian (top-$K$ Lanczos)    & 22 & 56  & 182  & 0.7,\,3.0,\,11.8 \\
Fisher (top-$K$ Lanczos, FVP) & 12 & 44  & 210  & 0.7,\,3.0,\,11.8 \\
\bottomrule
\end{tabular}
\end{center}

The ratio of $|\Delta|$ to the $K$-share of the full prediction falls from about $31$ to $15$ for the Hessian as $K$ grows and stays between $15$ and $18$ for the Fisher. Growing $K$ moves $|\Delta|$ toward the full prediction without reaching a regime of invariance.
The floor fires on $4$, $28$, and $178$ Hessian Ritz values at $K=32$, $128$, and $512$, dominating the Hessian log-determinant at $K=512$, against $0$, $1$, and $4$ for the Fisher, whose Hutchinson-versus-Ritz trace residual at $K=512$ is $5.0\%$ against $8.4\%$.

The eigenvalues of $\mathcal F^{-1}H$ at a critical point, the natural-gradient norm $(\nabla L)^\top\mathcal F^{-1}\nabla L$, and the function-space curvature are coordinate-free without further accounting. The function-space curvature is the input-averaged Hessian of the loss in the network outputs ($H_L=\diag(p)-pp^\top$ for softmax cross-entropy in logit space), which depends on the function alone.
All three are measured in Appendix~\ref{app:cifar-forensics}.

\subsection{Sharpness-aware minimization as an information-ledger entry with path-averaged Hessian}\label{sec:sam-ledger}

Sharpness-aware minimization \citep{pierre2021sharpness} replaces the gradient step at $\theta_t$ by a step at the worst-case neighbor $\theta_t+\rho_tg_t/\|g_t\|$ on a Euclidean $\rho_t$-ball.
For each round fix $\rho_t\ge0$, write $\hat g_t=g_t/\|g_t\|$ (with $\hat g_t=0$ if $g_t=0$), and define the SAM ascent gradient and the path-averaged Hessian of the round's loss $f_t$
\begin{equation}\label{eq:sam-taylor}
\check g_t=\nabla f_t\!\left(\theta_t+\rho_t\hat g_t\right),
\qquad
\bar H_t=\int_0^1\nabla^2 f_t\!\left(\theta_t+s\rho_t\hat g_t\right)ds
\end{equation}
The fundamental theorem of calculus along the ascent ray gives the identity $\check g_t=g_t+\rho_t\bar H_t\hat g_t$. The SAM update is the preconditioned SGD update run with the score $\check g_t$,
\begin{equation}\label{eq:sam-update}
\theta_{t+1}=\theta_t-\eta_t\Sigma\check g_t
\end{equation}

\begin{corollary}[Exact SAM ledger]\label{cor:sam-ledger}
For every $\rho_t\ge0$, every $\eta_t>0$, every $\Sigma\succ0$, and every comparator $u\in\R^d$, the SAM update \eqref{eq:sam-update} satisfies the pathwise identity
\begin{equation}\label{eq:sam-ledger}
\eta_t\ip{\check g_t}{\theta_t-u}
=I_t(u)-I_{t+1}(u)+\eta_t^2 Q_t^{\mathrm{SAM}}
\end{equation}
where the SAM intrinsic-time density admits the closed form
\begin{equation}\label{eq:sam-Q}
Q_t^{\mathrm{SAM}}
=Q_t+\frac{\rho_t}{\|g_t\|}\,g_t^\top\Sigma\bar H_tg_t
+\frac{\rho_t^2}{2\|g_t\|^2}\,g_t^\top\bar H_t\Sigma\bar H_tg_t,
\qquad Q_t=\half g_t^\top\Sigma g_t
\end{equation}
and $\bar H_t$ is the path-averaged Hessian along the ascent ray defined in \eqref{eq:sam-taylor}. \end{corollary}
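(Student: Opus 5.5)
The plan is to reduce the statement to Theorem~\ref{thm:sgd-ledger} and then expand the score. That theorem is an algebraic identity in whatever vector drives the update: its proof uses only $\theta_{t+1}=\theta_t-\eta_t\Sigma g_t$ and the definition $I_t(u)=\half\|u-\theta_t\|_{\Sigma^{-1}}^2$. It never uses that $g_t$ is a gradient at $\theta_t$. So we apply it with $g_t$ replaced by the SAM score $\check g_t$, which is legitimate because \eqref{eq:sam-update} has exactly that form. To keep the step self-contained, we can expand
\[
I_{t+1}(u)=\half\|u-\theta_t+\eta_t\Sigma\check g_t\|_{\Sigma^{-1}}^2=I_t(u)+\eta_t\ip{\check g_t}{u-\theta_t}+\frac{\eta_t^2}{2}\check g_t^\top\Sigma\check g_t .
\]
Rearranging gives \eqref{eq:sam-ledger} with $Q_t^{\mathrm{SAM}}=\half\check g_t^\top\Sigma\check g_t$. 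This step holds for every $\rho_t\ge0$, $\eta_t>0$, $\Sigma\succ0$ and $u$, with no smoothness used.

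The second step identifies the closed form \eqref{eq:sam-Q}. We justify the representation $\check g_t=g_t+\rho_t\bar H_t\hat g_t$ by the fundamental theorem of calculus applied to $s\mapsto\nabla f_t(\theta_t+s\rho_t\hat g_t)$ on $[0,1]$. Its derivative is $\nabla^2 f_t(\theta_t+s\rho_t\hat g_t)\rho_t\hat g_t$, so integrating gives $\rho_t\bar H_t\hat g_t$ with $\bar H_t$ as in \eqref{eq:sam-taylor}. This requires $f_t$ to be twice continuously differentiable along the segment, the implicit hypothesis under which $\bar H_t$ is defined. We then substitute $\hat g_t=g_t/\|g_t\|$ and expand the quadratic form:
\[
\half\check g_t^\top\Sigma\check g_t=\half g_t^\top\Sigma g_t+\frac{\rho_t}{\|g_t\|}g_t^\top\Sigma\bar H_tg_t+\frac{\rho_t^2}{2\|g_t\|^2}g_t^\top\bar H_t\Sigma\bar H_tg_t .
\]
The cross term combines its two halves using the symmetry of $\Sigma$ and of $\bar H_t$; the latter holds because $\bar H_t$ is an average of symmetric Hessians.

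The only delicate point is bookkeeping rather than difficulty. The degenerate case $g_t=0$ makes \eqref{eq:sam-Q} formally $0/0$. Under the convention $\hat g_t=0$ we have $\check g_t=g_t=0$, so $Q_t^{\mathrm{SAM}}=Q_t=0$ and the correction terms are read as zero. We also note that no truncation in $\rho_t$ occurs, since the integral form of the remainder is exact at every order. Finally, the nonsymmetric-looking cross term has to be handled via $\bar H_t^\top=\bar H_t$ and not by a Taylor approximation.
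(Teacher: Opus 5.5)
Your proposal is correct and matches the paper's own proof: apply the one-step identity of Theorem~\ref{thm:sgd-ledger} with the score $\check g_t$, expand $\half\check g_t^\top\Sigma\check g_t$ using $\check g_t=g_t+\rho_t\bar H_t\hat g_t$ from the fundamental theorem of calculus, and use the same convention $\hat g_t=0$ when $g_t=0$. One small refinement: the two halves of the cross term combine using only $\Sigma^\top=\Sigma$, while $\bar H_t^\top=\bar H_t$ is what lets you write the last term as $g_t^\top\bar H_t\Sigma\bar H_tg_t$.
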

The proof is in Appendix~\ref{app:proofs-deferred}.

The identity holds at every order in $\rho_t$, the path-averaged Hessian absorbing the higher-order content of $f_t$ along $[\theta_t,\theta_t+\rho_t\hat g_t]$.
The cross-term $\rho_t\|g_t\|^{-1}g_t^\top\Sigma\bar H_tg_t$, the dominant correction for small $\rho_t$, is the per-round directional sharpness along the realized step that SAM lowers, while the terminal log-determinant $\log\det H$ records the flatness that results.

Fisher SAM and ASAM change only the ascent direction. The proof of Corollary~\ref{cor:sam-ledger} uses only smoothness of $\nabla f_t$ along the ascent ray, so any ascent direction substitutes into \eqref{eq:sam-Q}.
Replacing the Euclidean ball by the Fisher ellipsoid $\{\epsilon:\epsilon^\top\calF(\theta)\epsilon\le\gamma^2\}$, the small-radius form of a relative-entropy ball \citep{amari1998neural}, gives the natural-gradient ascent direction
\begin{equation}\label{eq:fsam-direction}
\hat g_t^{\mathrm{FSAM}}
=\frac{\calF(\theta_t)^{-1}g_t}{\sqrt{g_t^\top\calF(\theta_t)^{-1}g_t}}
=\frac{\calF(\theta_t)^{-1}g_t}{\|g_t\|_{\calF^{-1}}}
\end{equation}
in place of $g_t/\|g_t\|$, with the descent step and its preconditioner $\Sigma$ unchanged \citep{kim2022fishersam}.
Substituting \eqref{eq:fsam-direction} preserves \eqref{eq:sam-ledger}, with
\begin{equation}\label{eq:fsam-Q}
Q_t^{\mathrm{FSAM}}=Q_t
+\frac{\rho_t}{\|g_t\|_{\calF^{-1}}}\,g_t^\top\Sigma\bar H_t^{\mathrm{FSAM}}\calF^{-1}g_t
+\frac{\rho_t^2}{2\|g_t\|_{\calF^{-1}}^2}\,g_t^\top\calF^{-1}\bar H_t^{\mathrm{FSAM}}\Sigma\bar H_t^{\mathrm{FSAM}}\calF^{-1}g_t
\end{equation}
where $\bar H_t^{\mathrm{FSAM}}=\int_0^1\nabla^2 f_t(\theta_t+s\rho_t\hat g_t^{\mathrm{FSAM}})\,ds$.
ASAM \citep{kwon2021asam} is the same algebra with $\calF(\theta_t)$ replaced by $\diag(\theta_t)^{-2}$.
In synthetic convex logistic regression at matched loss and condition numbers up to $100$, Fisher SAM's terminal log-determinant is comparable to or larger than SAM's.
Fisher SAM's ascent has no first-order preference for flat directions: where the Fisher matrix equals the Hessian, the natural-gradient ascent raises a near-quadratic loss by $\rho\sqrt{2(F-F^\star)}$ to first order, which depends on the loss alone, while SAM's Euclidean ascent raises it least along the flattest directions.
The documented Fisher-SAM generalization advantage \citep{kim2022fishersam} is therefore not mediated by the terminal log-determinant alone.

\section{The full noise-and-momentum information ledger}\label{sec:addendum-langevin-momentum}

Added noise, non-Gaussian noise, and momentum, the variants used in optimization and sampling, do not break the accounting; each adds identifiable entries (Table~\ref{tab:noise-momentum}).
Gaussian Langevin noise has an optimization reading, as injected energy in the point-comparator ledger, and a sampling reading, as a heat channel that contracts distinguishability.

\begin{table}[ht]
\centering
\small
\renewcommand{\arraystretch}{1.25}
\begin{tabularx}{\textwidth}{@{}>{\raggedright\arraybackslash}p{0.34\textwidth}>{\raggedright\arraybackslash}X@{}}
\toprule
\textbf{Noise or momentum class} & \textbf{How it enters the information ledger} \\
\midrule
Mini-batch score noise, post-step Gaussian, Langevin thermal & one additive quadratic entry each. The expected ledger depends on the perturbation only through its covariance, but the three attach at different places and scale differently in $\eta_t$: score noise inflates $Q_t$, post-step injection adds a channel term, and the Langevin thermal entry is $\beta^{-1}\eta_td$ (\S\ref{subsec:add-langevin}) \\
Finite-cumulant non-Gaussian (Rademacher, Laplace, mixtures) & the Gaussian quadratic is replaced by the cumulant-generating function $\Lambda_t(v)$ (Proposition~\ref{prop:add-cgf-noise}), of which it is the second-order term \\
Heavy-tailed and $\alpha$-stable & the pathwise and channel relative-entropy identities still hold. The conditional-expectation entry is finite only after truncation, robustification, or a fractional or jump regime \\
State-dependent (Riemannian) preconditioning & a metric-drift term with an It\^o correction \\
Phase-space momentum: stochastic-gradient Hamiltonian Monte Carlo (SGHMC), underdamped Langevin, thermostats & conservative Hamiltonian transport plus an Ornstein--Uhlenbeck thermostat channel; SGHMC's friction is the covariance-balance condition.  Heavy-ball is instead a linear phase-space recursion (Proposition~\ref{prop:add-phase-info-recursion}) and Nesterov a lookahead-transport correction \\
\bottomrule
\end{tabularx}
\caption{\textbf{Every noise class and every momentum scheme resolves to one explicit entry of the information ledger.}}
\label{tab:noise-momentum}
\end{table}

A local stochastic gradient is $g_t=\nabla f_t(\theta_t)$, the objective is $F$, and the preconditioner $\Sigma\succ0$ is fixed unless stated otherwise.
The deterministic preconditioned update is $\bar\theta_{t+1}=\theta_t-\eta_t\Sigma g_t$.
The noisy update is
\begin{equation}
\label{eq:add-noisy-sgd}
        \theta_{t+1}=\theta_t-\eta_t\Sigma g_t+\xi_t
        =\bar\theta_{t+1}+\xi_t
\end{equation}
where $\xi_t$ may be injected by the algorithm, may idealize mini-batch noise, or may combine both.

\subsection{The master decomposition under added noise}
\label{subsec:add-master-noisy}

When a perturbation $\xi_t$ follows the gradient update, the comparator information is evaluated at $\theta_{t+1}=\bar\theta_{t+1}+\xi_t$, and the deterministic ledger \eqref{eq:weighted-ledger} acquires a signed displacement term and a nonnegative energy term.

\begin{theorem}[Exact pointwise noisy-SGD ledger]
\label{thm:add-noisy-ledger}
Fix \(\Sigma\succ0\), a comparator \(u\in\R^d\), and an arbitrary realized sequence \((g_t,\xi_t)\).  Define
\[
        I_t(u)=\half\|u-\theta_t\|_{\Sigma^{-1}}^2,
        \qquad
        Q_t=\half g_t^\top\Sigma g_t,
        \qquad
        \bar\theta_{t+1}=\theta_t-\eta_t\Sigma g_t
\]
For the noisy recursion \eqref{eq:add-noisy-sgd}, the exact one-step identity is
\begin{equation}
\label{eq:add-noisy-ledger}
        \eta_t\ip{g_t}{\theta_t-u}
        = I_t(u)-I_{t+1}(u)+\eta_t^2Q_t
        +\underbrace{\ip{\bar\theta_{t+1}-u}{\Sigma^{-1}\xi_t}}_{M_t^\xi(u)}
        +\underbrace{\half\|\xi_t\|_{\Sigma^{-1}}^2}_{E_t^\xi}
\end{equation}
Equivalently,
\begin{equation}
\label{eq:add-noisy-ledger-expanded}
        \eta_t\ip{g_t}{\theta_t-u}
        = I_t(u)-I_{t+1}(u)+\eta_t^2Q_t
        +\ip{\theta_t-u}{\Sigma^{-1}\xi_t}
        -\eta_t\ip{g_t}{\xi_t}
        +\half\|\xi_t\|_{\Sigma^{-1}}^2
\end{equation}
If \(\xi_t\) is conditionally mean-zero and conditionally independent of \(g_t\) after \(\calF_t\), with conditional covariance \(\Omega_t=\E[\xi_t\xi_t^\top\mid\calF_t,g_t]\), then
\begin{equation}
\label{eq:add-noisy-ledger-conditional}
        \E\!\left[\eta_t\ip{g_t}{\theta_t-u}\mid\calF_t,g_t\right]
        = I_t(u)-\E[I_{t+1}(u)\mid\calF_t,g_t]+
        \eta_t^2Q_t+\half\tr(\Sigma^{-1}\Omega_t)
\end{equation}
\end{theorem}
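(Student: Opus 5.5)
The plan is to reduce everything to the deterministic identity of Theorem~\ref{thm:sgd-ledger} applied to the intermediate point $\bar\theta_{t+1}$, and then to expand a single quadratic. Write $\bar I_{t+1}(u)=\half\|u-\bar\theta_{t+1}\|_{\Sigma^{-1}}^2$. Since $\bar\theta_{t+1}=\theta_t-\eta_t\Sigma g_t$ is exactly the update of \eqref{eq:sgd} with fixed $\Sigma$, identity \eqref{eq:weighted-ledger} gives
\[
\eta_t\ip{g_t}{\theta_t-u}=I_t(u)-\bar I_{t+1}(u)+\eta_t^2Q_t .
\]
No assumption on $g_t$ is used here, so this holds for an arbitrary realized sequence.

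Next, express $\bar I_{t+1}(u)$ through $I_{t+1}(u)$. From $\theta_{t+1}=\bar\theta_{t+1}+\xi_t$ we have $u-\theta_{t+1}=(u-\bar\theta_{t+1})-\xi_t$. Expanding the $\Sigma^{-1}$-norm gives
\[
I_{t+1}(u)=\bar I_{t+1}(u)-\ip{u-\bar\theta_{t+1}}{\Sigma^{-1}\xi_t}+\half\|\xi_t\|_{\Sigma^{-1}}^2 .
\]
Hence $-\bar I_{t+1}(u)=-I_{t+1}(u)+\ip{\bar\theta_{t+1}-u}{\Sigma^{-1}\xi_t}+\half\|\xi_t\|_{\Sigma^{-1}}^2$. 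Substituting this into the display above yields \eqref{eq:add-noisy-ledger}, with $M_t^\xi(u)$ and $E_t^\xi$ as labelled. For \eqref{eq:add-noisy-ledger-expanded}, insert $\bar\theta_{t+1}-u=(\theta_t-u)-\eta_t\Sigma g_t$ into $M_t^\xi$. The cross term $\ip{\eta_t\Sigma g_t}{\Sigma^{-1}\xi_t}$ collapses to $\eta_t\ip{g_t}{\xi_t}$ by symmetry of $\Sigma$.

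For the conditional form \eqref{eq:add-noisy-ledger-conditional}, condition on $(\calF_t,g_t)$. Under this conditioning $\theta_t$, $g_t$, $\bar\theta_{t+1}$ and hence $I_t(u)$, $Q_t$ and the left-hand side are all measurable. Because $\xi_t$ is conditionally mean-zero given $(\calF_t,g_t)$, the term $M_t^\xi(u)$ is linear in $\xi_t$ with a measurable coefficient, so its conditional expectation vanishes. The energy term has conditional expectation
\[
\E[\half\xi_t^\top\Sigma^{-1}\xi_t\mid\calF_t,g_t]=\half\tr(\Sigma^{-1}\Omega_t)
\]
by the trace identity. Taking conditional expectations of \eqref{eq:add-noisy-ledger} term by term then gives the claim, provided $\Omega_t$ has finite trace so that every expectation exists.

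There is no real obstacle; the argument is pure algebra. The one point to check is the conditioning step. The mean-zero property must hold given $g_t$ as well as $\calF_t$, which is what conditional independence of $\xi_t$ and $g_t$ after $\calF_t$, together with $\E[\xi_t\mid\calF_t]=0$, supplies. With that in hand, the cross term drops out even though $\bar\theta_{t+1}$ depends on $g_t$.
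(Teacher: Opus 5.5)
Your proof is correct and is essentially the paper's argument. The paper expands $\tfrac12\|(\theta_t-u)-\eta_t\Sigma g_t+\xi_t\|_{\Sigma^{-1}}^2$ in one step to get \eqref{eq:add-noisy-ledger-expanded} and then regroups it into the compact form. You instead group the first two vectors as $\bar\theta_{t+1}-u$, invoke Theorem~\ref{thm:sgd-ledger} for them, and expand only the remaining two-term square, which is the same computation in a different order. Your conditioning step also matches the paper's. Both tacitly use that $\eta_t$, like $\theta_t$, is $(\calF_t,g_t)$-measurable, so that $M_t^\xi(u)$ has a measurable coefficient and drops out.
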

The proof is in Appendix~\ref{app:proofs-deferred}.

The gradient step still incurs $\eta_t^2Q_t$, and the noise only appends entries: $M_t^\xi(u)$ records whether the perturbation moved the iterate toward or away from the comparator in the information metric, $E_t^\xi$ is its realized energy, and zero-mean noise contributes $\half\tr(\Sigma^{-1}\Omega_t)$ in conditional expectation.
No Gaussianity enters this algebra.

\begin{corollary}[Exact noisy objective ledger]
\label{cor:add-noisy-objective-ledger}
Let \(F\) be differentiable and write the stochastic-gradient error as \(\varepsilon_t=g_t-\nabla F(\theta_t)\).
Then, pathwise,
\begin{align}
\label{eq:add-noisy-objective-ledger}
        \eta_t\left(F(\theta_t)-F(u)+B_F(u,\theta_t)\right)
        &= I_t(u)-I_{t+1}(u)+\eta_t^2Q_t
        +M_t^\xi(u)+E_t^\xi
        -\eta_t\ip{\varepsilon_t}{\theta_t-u}
\end{align}
If \(g_t\) is conditionally unbiased and \(\xi_t\) is conditionally mean-zero and conditionally independent of \(g_t\) after \(\calF_t\), then
\begin{align}
\label{eq:add-noisy-objective-expected}
        \eta_t\E\!\left[F(\theta_t)-F(u)+B_F(u,\theta_t)\mid\calF_t\right]
        &=I_t(u)-\E[I_{t+1}(u)\mid\calF_t]
          +\eta_t^2\bar Q_t
          +\half\tr(\Sigma^{-1}\Omega_t)
\end{align}
where
\begin{equation}
\label{eq:add-expected-gradient-clock}
        \bar Q_t
        =\half\nabla F(\theta_t)^\top\Sigma\nabla F(\theta_t)
        +\half\tr(\Sigma C_t),
        \qquad
        C_t=\Cov(g_t\mid\calF_t)
\end{equation}
\end{corollary}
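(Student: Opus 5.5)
The corollary follows from Theorem~\ref{thm:add-noisy-ledger} combined with the Bregman identity \eqref{eq:bregman-exact}. For the pathwise statement \eqref{eq:add-noisy-objective-ledger}, I would start from the one-step noisy identity \eqref{eq:add-noisy-ledger}. That identity holds for an arbitrary realized pair $(g_t,\xi_t)$, so its left side $\eta_t\ip{g_t}{\theta_t-u}$ may be split using $g_t=\nabla F(\theta_t)+\varepsilon_t$, which gives $\eta_t\ip{\nabla F(\theta_t)}{\theta_t-u}+\eta_t\ip{\varepsilon_t}{\theta_t-u}$. Applying \eqref{eq:bregman-exact} to $F$ at $\theta=\theta_t$ rewrites the first piece as $\eta_t\bigl(F(\theta_t)-F(u)+B_F(u,\theta_t)\bigr)$. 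Moving the estimation term $\eta_t\ip{\varepsilon_t}{\theta_t-u}$ to the right side then yields \eqref{eq:add-noisy-objective-ledger}. No assumption on $F$ beyond differentiability is needed, because $B_F$ is kept as a signed term.

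For the conditional statement \eqref{eq:add-noisy-objective-expected}, I would take $\E[\cdot\mid\calF_t]$ of the pathwise identity term by term, using the tower property through $\sigma(\calF_t,g_t)$ wherever $\xi_t$ appears.
\begin{itemize}
\item The left side and $I_t(u)$ are $\calF_t$-measurable, since $\theta_t$ is.
\item The estimation term has conditional mean zero, because $\theta_t-u$ is $\calF_t$-measurable and $\E[\varepsilon_t\mid\calF_t]=0$.
\item The cross term $M_t^\xi(u)=\ip{\bar\theta_{t+1}-u}{\Sigma^{-1}\xi_t}$ has $\bar\theta_{t+1}$ measurable with respect to $\sigma(\calF_t,g_t)$. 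Conditional independence together with $\E[\xi_t\mid\calF_t]=0$ gives $\E[\xi_t\mid\calF_t,g_t]=0$, so this term vanishes after inner conditioning on $(\calF_t,g_t)$.
\item The energy $E_t^\xi$ has conditional expectation $\half\tr(\Sigma^{-1}\Omega_t)$, by $\E[\xi^\top A\xi]=\tr(A\,\E\xi\xi^\top)$.
\item For the clock, $\E[g_t^\top\Sigma g_t\mid\calF_t]=\nabla F^\top\Sigma\nabla F+\tr(\Sigma C_t)$ is the usual mean-plus-covariance split, and the cross term drops by unbiasedness. This gives $\bar Q_t$ as in \eqref{eq:add-expected-gradient-clock}.
\end{itemize}

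The only delicate point is the energy term. Under conditional independence, $\Omega_t=\E[\xi_t\xi_t^\top\mid\calF_t,g_t]$ equals its $\calF_t$-conditional version, so after the outer expectation $\half\tr(\Sigma^{-1}\Omega_t)$ is well defined as stated. If $\Omega_t$ were allowed to depend on $g_t$, the formula would instead carry $\E[\tr(\Sigma^{-1}\Omega_t)\mid\calF_t]$. I would note this explicitly and appeal to the conditional-independence hypothesis to remove the outer expectation. Integrability of the terms involved (finite second moments of $g_t$ and $\xi_t$) is assumed implicitly, as in Corollary~\ref{cor:stochastic-identity}, and I would state that as well.
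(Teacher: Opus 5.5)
Your proposal is correct and follows the paper's own argument: apply the Bregman identity to $\eta_t\ip{\nabla F(\theta_t)}{\theta_t-u}$, subtract the estimation term $\eta_t\ip{\varepsilon_t}{\theta_t-u}$ from the noisy ledger \eqref{eq:add-noisy-ledger}, then take conditional expectations and expand $\E[g_t^\top\Sigma g_t\mid\calF_t]$ to get $\bar Q_t$. Your term-by-term checks fill in steps the paper leaves implicit: the tower property through $(\calF_t,g_t)$ for $M_t^\xi(u)$, and the remark that conditional independence lets $\Omega_t$ be read as an $\calF_t$-conditional covariance.
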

The proof is in Appendix~\ref{app:proofs-deferred}.

Covariance proportional to $\eta_t\Sigma$ (Langevin) makes the expected energy proportional to $\eta_td$, the usual temperature floor.
Covariance fixed independently of $\eta_t$ makes the noise penalty of \eqref{eq:add-noisy-objective-expected}, divided by $\eta_t$, grow as $1/\eta_t$, so a fixed-variance perturbation penalizes small learning rates.

\subsection{Noise as an information channel}
\label{subsec:add-channel}

Against a fixed comparator, noise adds energy. When the learner and comparator distributions pass through the same noise kernel, they lose distinguishability. This loss of distinguishability has a chain-rule form.
Let $K$ be a Markov kernel from $\mathsf X$ to $\mathsf Y$, write $PK$ for the output law of an input law $P$, and $P\otimes K$ for the joint law of $X\sim P$ and $Y\sim K(X,\cdot)$.

\begin{theorem}[Exact information loss of a noise channel]
\label{thm:add-channel-kl}
Let \(P,Q\) be probability measures with \(P\ll Q\), and let \(K\) be any Markov kernel.  Let \((P\otimes K)_{X\mid Y}\) and \((Q\otimes K)_{X\mid Y}\) be regular conditional laws.  Then, in \([0,\infty]\),
\begin{equation}
\label{eq:add-channel-kl}
        \KL(P\|Q)
        =\KL(PK\|QK)
        +\underbrace{\E_{Y\sim PK}
        \KL\!\left((P\otimes K)_{X\mid Y}\|(Q\otimes K)_{X\mid Y}\right)}_{\mathcal D_K(P,Q)\ge0}
\end{equation}
Consequently, \(\KL(PK\|QK)\le \KL(P\|Q)\), and when \(\KL(P\|Q)<\infty\) the exact contraction gap is \(\KL(P\|Q)-\KL(PK\|QK)=\mathcal D_K(P,Q)\).
\end{theorem}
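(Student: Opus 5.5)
The plan is to compute the relative entropy of the two joint laws $P\otimes K$ and $Q\otimes K$ on $\mathsf X\times\mathsf Y$ in two ways, using the chain rule for relative entropy under each of the two disintegrations of the joint law. First I disintegrate along $X$. Both joints share the same conditional law $K(x,\cdot)$ of $Y$ given $X=x$. The chain rule therefore gives
\[
\KL(P\otimes K\,\|\,Q\otimes K)=\KL(P\|Q)+\E_{X\sim P}\KL\bigl(K(X,\cdot)\|K(X,\cdot)\bigr)=\KL(P\|Q).
\]
Concretely, if $P\ll Q$ with density $p=dP/dQ$, then $P\otimes K\ll Q\otimes K$ with density $(x,y)\mapsto p(x)$. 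This is checked on rectangles and extended by a monotone class argument, so the joint relative entropy is $\int p\log p\,dQ=\KL(P\|Q)$.

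Next I disintegrate along $Y$. The $Y$-marginals are $PK$ and $QK$, and $PK\ll QK$ because $Q(A)=0$ forces $P(A)=0$. By hypothesis, regular conditional laws $(P\otimes K)_{X\mid Y}$ and $(Q\otimes K)_{X\mid Y}$ exist. The chain rule then gives
\[
\KL(P\otimes K\,\|\,Q\otimes K)=\KL(PK\|QK)+\E_{Y\sim PK}\KL\bigl((P\otimes K)_{X\mid Y}\|(Q\otimes K)_{X\mid Y}\bigr).
\]
Equating the two expressions yields \eqref{eq:add-channel-kl}. Since each conditional relative entropy is nonnegative, $\mathcal D_K(P,Q)\ge0$, and the data-processing inequality follows. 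When $\KL(P\|Q)<\infty$, both summands on the right are finite, so subtracting $\KL(PK\|QK)$ is legitimate and gives the stated contraction gap.

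The main obstacle is justifying the chain rule in $[0,\infty]$ for general measurable spaces, with no densities with respect to a common reference measure. I would write the joint density as $\frac{d(P\otimes K)}{d(Q\otimes K)}(x,y)=r(y)\,s_y(x)$, where $r=dPK/dQK$ and $s_y$ is the density of the conditional under $P$ relative to the conditional under $Q$. I would then verify this factorization by integrating against product test sets and using the defining property of regular conditional laws. A measurable selection of $s_y$ comes from the Radon--Nikodym derivative of the joint divided by $r(y)$ on $\{r>0\}$. With the factorization in hand, $\log$ of the density splits into $\log r+\log s_y$. Integrating against $P\otimes K$ requires care with infinite values: both terms have integrable negative parts, because $t\log t\ge t-1$ bounds them. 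Tonelli on the positive and negative parts separately then gives the identity in $[0,\infty]$ without any finiteness assumption.
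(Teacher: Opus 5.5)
Your proof is correct and follows the paper's argument: you compute $\KL(P\otimes K\|Q\otimes K)$ by the chain rule once conditioning on $X$, where the shared kernel $K$ makes the conditional term vanish, and once conditioning on $Y$, and then equate the two. The paper simply invokes the chain rule, so your factorization $r(y)\,s_y(x)$ with the negative-part bound and Tonelli is added justification of that step rather than a different route (one caveat: identifying $s_y$ as the density of the given conditional law simultaneously for a.e.\ $y$, rather than set by set, implicitly uses a countably generated $\sigma$-algebra on $\mathsf X$, as standard statements of the chain rule do).
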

The proof is in Appendix~\ref{app:proofs-deferred}.

\begin{corollary}[Gaussian heat contraction]
\label{cor:add-gaussian-contraction}
Let
\[
        P=\calN(u,\Sigma),\qquad Q=\calN(m,\Sigma),\qquad
        K(x,\cdot)=\calN(x,\Omega)
\]
where \(\Sigma\succ0\) and \(\Omega\succeq0\).  Then
\begin{align}
\label{eq:add-gaussian-kl-contraction}
        \KL(PK\|QK)
        &=\half(u-m)^\top(\Sigma+\Omega)^{-1}(u-m) \nonumber\\
        \mathcal D_K(P,Q)
        &=\half(u-m)^\top\left[\Sigma^{-1}-(\Sigma+\Omega)^{-1}\right](u-m)
\end{align}
\end{corollary}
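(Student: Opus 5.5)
The plan is to compute the channel output laws directly, then read off the contraction gap from Theorem~\ref{thm:add-channel-kl}, so that no conditional laws need to be worked out.

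First, the output laws. Passing $P=\calN(u,\Sigma)$ through $K(x,\cdot)=\calN(x,\Omega)$ means adding an independent $\calN(0,\Omega)$ variable to $X\sim P$. Hence $PK=\calN(u,\Sigma+\Omega)$, and likewise $QK=\calN(m,\Sigma+\Omega)$. Because $\Sigma\succ0$ and $\Omega\succeq0$, the sum $\Sigma+\Omega$ is positive definite, so both outputs are nondegenerate Gaussians with a common covariance.

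Second, the first formula. For two Gaussians with the same covariance $S$, the relative entropy is $\half(u-m)^\top S^{-1}(u-m)$; this is the identity $\KL(\rho_{u,\Sigma}\|q_{\theta,\Sigma})=D_\Sigma(u,\theta)$ of \S\ref{sec:prelims}. Applying it with $S=\Sigma+\Omega$ gives the stated expression for $\KL(PK\|QK)$.

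Third, the gap. The same identity with covariance $\Sigma$ gives $\KL(P\|Q)=\half(u-m)^\top\Sigma^{-1}(u-m)$, which is finite. Theorem~\ref{thm:add-channel-kl} then states that $\mathcal D_K(P,Q)=\KL(P\|Q)-\KL(PK\|QK)$. Subtracting the two quadratic forms yields $\half(u-m)^\top[\Sigma^{-1}-(\Sigma+\Omega)^{-1}](u-m)$. This expression is nonnegative, consistent with the theorem, since $\Sigma\preceq\Sigma+\Omega$ implies $(\Sigma+\Omega)^{-1}\preceq\Sigma^{-1}$.

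The only delicate point is degenerate $\Omega$, in particular $\Omega=0$, where the kernel is a point mass. This costs nothing: Theorem~\ref{thm:add-channel-kl} holds for any Markov kernel, and the output laws stay nondegenerate because $\Sigma\succ0$. As a cross-check one could compute $\mathcal D_K$ directly. The posteriors of $X$ given $Y=y$ share the covariance $\Sigma-\Sigma(\Sigma+\Omega)^{-1}\Sigma$, and their means differ by $(I-\Sigma(\Sigma+\Omega)^{-1})(u-m)$. When $\Omega$ is singular that posterior covariance is singular, which is exactly why the argument routes through the chain rule instead.
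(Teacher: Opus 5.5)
Your proposal is correct and follows the paper's proof step for step. You convolve to get $PK=\calN(u,\Sigma+\Omega)$ and $QK=\calN(m,\Sigma+\Omega)$, apply the equal-covariance Gaussian relative-entropy formula, and read off $\mathcal D_K(P,Q)=\KL(P\|Q)-\KL(PK\|QK)$ from Theorem~\ref{thm:add-channel-kl}, with nonnegativity from $(\Sigma+\Omega)^{-1}\preceq\Sigma^{-1}$; your optional cross-check of the conditional means and covariances is also right.
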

The proof is in Appendix~\ref{app:proofs-deferred}.
The Gaussian smoothing bound of \cite{gergely2021informationtheoretic} is tight at point masses, where it agrees with Corollary~\ref{cor:add-gaussian-contraction}.
For laws of shared covariance the corollary gives the post-channel relative entropy in closed form.
Theorem~\ref{thm:add-channel-kl} supplies the contraction gap that a data-processing step discards.

The gradient step maps $q_t=\calN(\theta_t,\Sigma)$ to $\bar q_{t+1}=\calN(\theta_t-\eta_t\Sigma g_t,\Sigma)$, and the heat kernel $K_t(x,\cdot)=\calN(x,\Omega_t)$ then maps it to $q_{t+1}=\calN(\theta_t-\eta_t\Sigma g_t,\Sigma+\Omega_t)$.
If the comparator $\rho_t=\calN(u,\Sigma)$ passes through the same kernel, the deterministic SGD identity becomes
\begin{equation}
\label{eq:add-tilt-heat-ledger}
        \eta_t\ip{g_t}{\theta_t-u}
        =\KL(\rho_t\|q_t)-\KL(\rho_tK_t\|q_{t+1})+
        \eta_t^2Q_t-\mathcal D_{K_t}(\rho_t,\bar q_{t+1})
\end{equation}
where the discount $\mathcal D_{K_t}$ is the comparator information erased by the noise after the likelihood update.

\subsection{Gaussian Langevin dynamics}
\label{subsec:add-langevin}

For a potential $U$, inverse temperature $\beta>0$, and constant $\Sigma\succ0$, overdamped Langevin dynamics
\begin{equation}
\label{eq:add-overdamped-sde}
        \dd\theta_s=-\Sigma\nabla U(\theta_s)\dd s+
        \sqrt{2\beta^{-1}\Sigma}\,\dd B_s
\end{equation}
has, under standard regularity and integrability assumptions, the Gibbs invariant law $\Gibbs_\beta(\dd\theta)=Z_\beta^{-1}\exp(-\beta U(\theta))\dd\theta$.
Its Euler--Maruyama discretization with an unbiased mini-batch gradient is stochastic gradient Langevin dynamics \citep{welling2011bayesian}; nonasymptotic analyses relate the chain to the diffusion under regularity and spectral assumptions \citep{maxim2017nonconvex,pan2018global}.
Fixed-step SGLD has a persistent discretization and stochastic-gradient bias \citep{sebastian2016exploration}.

\begin{corollary}[Exact expected Langevin ledger]
\label{cor:add-langevin-ledger}
Let \(\widehat g_t=\nabla U(\theta_t)+\varepsilon_t\), with \(\E[\varepsilon_t\mid\calF_t]=0\) and covariance \(C_t\).  Run the SGLD step
\begin{equation}
\label{eq:add-sgld-step}
        \theta_{t+1}=\theta_t-\eta_t\Sigma\widehat g_t+
        \sqrt{2\beta^{-1}\eta_t\Sigma}\,\zeta_t,
        \qquad \zeta_t\sim\calN(0,I_d)
\end{equation}
with \(\zeta_t\) independent of \(\widehat g_t\) conditional on \(\calF_t\).  Then, for every comparator \(u\),
\begin{align}
\label{eq:add-sgld-expect-ledger}
        \eta_t\E\!\left[U(\theta_t)-U(u)+B_U(u,\theta_t)\mid\calF_t\right]
        &=I_t(u)-\E[I_{t+1}(u)\mid\calF_t]\nonumber\\
        &\quad+\frac{\eta_t^2}{2}\nabla U(\theta_t)^\top\Sigma\nabla U(\theta_t)
        +\frac{\eta_t^2}{2}\tr(\Sigma C_t)
        +\beta^{-1}\eta_t d
\end{align}
Equivalently, after division by \(\eta_t\),
\begin{align}
\label{eq:add-sgld-expect-ledger-divided}
        \E\!\left[U(\theta_t)-U(u)+B_U(u,\theta_t)\mid\calF_t\right]
        &=\frac{I_t(u)-\E[I_{t+1}(u)\mid\calF_t]}{\eta_t}
        +\frac{\eta_t}{2}\|\nabla U(\theta_t)\|_\Sigma^2
        +\frac{\eta_t}{2}\tr(\Sigma C_t)
        +\beta^{-1}d
\end{align}
\end{corollary}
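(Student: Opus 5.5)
The plan is to read the SGLD step \eqref{eq:add-sgld-step} as an instance of the noisy recursion \eqref{eq:add-noisy-sgd}, with score $\widehat g_t$ and injected perturbation $\xi_t=\sqrt{2\beta^{-1}\eta_t\Sigma}\,\zeta_t$, and then invoke Corollary~\ref{cor:add-noisy-objective-ledger} with $F=U$. Its hypotheses have to be checked first. The stochastic-gradient error is $\varepsilon_t=\widehat g_t-\nabla U(\theta_t)$, which is conditionally mean-zero by assumption, so $\widehat g_t$ is conditionally unbiased. The perturbation $\xi_t$ is conditionally mean-zero because $\zeta_t$ is, and it is conditionally independent of $\widehat g_t$ given $\calF_t$ by hypothesis. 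Since $\theta_t$ is $\calF_t$-measurable and the square root is a fixed matrix, $\xi_t$ has conditional covariance $\Omega_t=2\beta^{-1}\eta_t\Sigma$.

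Substituting into \eqref{eq:add-noisy-objective-expected} gives the left side of \eqref{eq:add-sgld-expect-ledger}, together with the terms $I_t(u)-\E[I_{t+1}(u)\mid\calF_t]$ and $\eta_t^2\bar Q_t$. By \eqref{eq:add-expected-gradient-clock}, $\eta_t^2\bar Q_t$ splits into $\tfrac{\eta_t^2}{2}\nabla U^\top\Sigma\nabla U$ and $\tfrac{\eta_t^2}{2}\tr(\Sigma C_t)$. The only remaining computation is the noise energy,
\[
\half\tr(\Sigma^{-1}\Omega_t)=\half\tr\bigl(\Sigma^{-1}\cdot 2\beta^{-1}\eta_t\Sigma\bigr)=\beta^{-1}\eta_t\,d,
\]
which is the thermal entry. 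Dividing by $\eta_t>0$ then gives \eqref{eq:add-sgld-expect-ledger-divided}.

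The one step needing care is a bookkeeping issue rather than an obstacle. In the expected-noisy identity, the cross term $-\eta_t\ip{\widehat g_t}{\xi_t}$ and the displacement term $\ip{\theta_t-u}{\Sigma^{-1}\xi_t}$ of \eqref{eq:add-noisy-ledger-expanded} must vanish in conditional expectation. This requires conditioning on $(\calF_t,\widehat g_t)$ first and then applying the tower property, which is exactly where conditional independence of $\zeta_t$ from $\widehat g_t$ enters.

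Integrability also has to hold: $\E[\|\widehat g_t\|^2\mid\calF_t]<\infty$ is implicit in $C_t$ being defined, and $\zeta_t$ is Gaussian. If Corollary~\ref{cor:add-noisy-objective-ledger} is not taken as given, I would instead start from the pathwise \eqref{eq:add-noisy-objective-ledger}, take $\E[\cdot\mid\calF_t]$, and verify these cancellations directly.
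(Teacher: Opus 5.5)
Your proposal is correct and is essentially the paper's proof: specialize Corollary~\ref{cor:add-noisy-objective-ledger} to $F=U$, $g_t=\widehat g_t$, $\Omega_t=2\beta^{-1}\eta_t\Sigma$, read the two gradient terms off \eqref{eq:add-expected-gradient-clock}, and evaluate $\half\tr(\Sigma^{-1}\Omega_t)=\beta^{-1}\eta_t d$. One small refinement: only the cross term $-\eta_t\ip{\widehat g_t}{\xi_t}$ needs the conditional independence, since the displacement term $\ip{\theta_t-u}{\Sigma^{-1}\xi_t}$ vanishes from $\E[\xi_t\mid\calF_t]=0$ alone because $\theta_t$ is $\calF_t$-measurable.
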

The proof is in Appendix~\ref{app:proofs-deferred}.

Alongside the gradient and mini-batch intrinsic time, Langevin incurs the thermal term $\beta^{-1}\eta_td$, the per-step floor $\beta^{-1}d$ in the divided form \eqref{eq:add-sgld-expect-ledger-divided}.
As $\beta\to\infty$ Langevin reduces to SGD; at finite $\beta$ it samples a Gibbs neighborhood whose objective excess is set by the temperature.
As a sampler the overdamped dynamics obey the free-energy identity: for the law $q_s$ of $\theta_s$ under \eqref{eq:add-overdamped-sde},
\begin{equation}
\label{eq:add-fp-entropy-dissipation}
        \frac{\dd}{\dd s}\KL(q_s\|\Gibbs_\beta)
        =-\beta^{-1}\int
        \left\|\nabla\log\frac{q_s(\theta)}{\Gibbs_\beta(\theta)}\right\|_\Sigma^2
        q_s(\theta)\dd\theta
\end{equation}
so the same noise that is positive energy in the point-comparator ledger dissipates relative entropy to the Gibbs law at the relative Fisher-information rate.
With $\eta_t\downarrow0$ and injected covariance $2\beta^{-1}\eta_t\Sigma$ the discretization and stochastic-gradient biases can vanish under summability and regularity conditions, which is why SGLD anneals. At a fixed step the invariant law is a perturbation of $\Gibbs_\beta$, computed in closed form below for a quadratic.

\subsection{Quadratic example: the stationary covariance of noisy SGD and SGLD}
\label{subsec:add-quadratic-noisy}

For $U(\theta)=\half(\theta-\theta_\star)^\top H(\theta-\theta_\star)$ with $H\succ0$, the error $e_t=\theta_t-\theta_\star$ of noisy SGD follows $e_{t+1}=(I-\eta\Sigma H)e_t+\xi_t$.
If $\xi_t$ is mean-zero with covariance $\Omega$, independent of $e_t$, and $A=I-\eta\Sigma H$ has spectral radius below one, the stationary covariance $S$ solves $S=ASA^\top+\Omega$ and $\E_\infty U(\theta)=\half\tr(HS)$.

\begin{proposition}[One-dimensional quadratic noise floor]
\label{prop:add-1d-noise-floor}
In dimension one, let \(U(e)=\half h e^2\), \(\Sigma=1\), and \(e_{t+1}=(1-\eta h)e_t+\xi_t\), with \(\xi_t\) independent of \(e_t\), \(\E\xi_t=0\), and \(\Var(\xi_t)=\omega^2\).  If \(0<\eta h<2\), then
\begin{equation}
\label{eq:add-1d-var}
        \Var_\infty(e)=\frac{\omega^2}{1-(1-\eta h)^2}
        =\frac{\omega^2}{\eta h(2-\eta h)}
\end{equation}
and $\E_\infty U(e)=\tfrac{\omega^2}{2\eta(2- \eta h)}$.
For Langevin noise \(\omega^2=2\beta^{-1}\eta\),
\begin{equation}
\label{eq:add-1d-langevin-excess}
        \E_\infty U(e)=\frac{\beta^{-1}}{2-\eta h}\xrightarrow[\eta\downarrow0]{}\frac{1}{2\beta}
\end{equation}
\end{proposition}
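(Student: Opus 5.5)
The argument is a direct computation on the scalar linear recursion, in three steps: a variance recursion, its fixed point, and the substitution of the Langevin noise level.

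\emph{Variance recursion.} Since $\xi_t$ is independent of $e_t$ and has mean zero, the cross term vanishes when we square $e_{t+1}=(1-\eta h)e_t+\xi_t$ and take expectations. Writing $a=1-\eta h$ and $v_t=\E e_t^2$, this gives
\[
v_{t+1}=a^2v_t+\omega^2 .
\]
The condition $0<\eta h<2$ is exactly $|a|<1$, so the affine map $v\mapsto a^2v+\omega^2$ is a contraction. Hence $v_t\to v_\infty=\omega^2/(1-a^2)$ from any finite $v_1$.

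The same $|a|<1$ also shows the mean decays: $\E e_{t+1}=a\,\E e_t$, so $\E e_t\to0$. Therefore the stationary second moment equals the stationary variance, $\Var_\infty(e)=v_\infty$.

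\emph{Closed forms.} Factoring gives $1-a^2=(1-a)(1+a)=\eta h(2-\eta h)$, which yields \eqref{eq:add-1d-var}. Then
\[
\E_\infty U(e)=\tfrac{h}{2}\Var_\infty(e)=\frac{\omega^2}{2\eta(2-\eta h)},
\]
because the factor $h$ cancels.

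\emph{Langevin case.} Substituting $\omega^2=2\beta^{-1}\eta$ cancels $\eta$ and gives $\beta^{-1}/(2-\eta h)$. The limit $\eta\downarrow0$ is then immediate and equals $1/(2\beta)$, which matches the Gibbs value $\E_{\Gibbs_\beta}U=1/(2\beta)$ for a one-dimensional quadratic.

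\emph{Interpretation of the stationary law.} The only point needing care is what ``stationary'' means. I would read it as the limit of the second moments, which the contraction argument above handles without any further assumption. Alternatively, it can be read as the second moment of an invariant law satisfying $S=a^2S+\omega^2$ from the preceding paragraph of the paper. In that case the fixed-point equation has the unique solution $S=v_\infty$, so either reading gives the same formula.
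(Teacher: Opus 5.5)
Your proposal is correct and takes essentially the same approach as the paper: the stationary variance is the fixed point of $s=(1-\eta h)^2s+\omega^2$, then $\E_\infty U=hs/2$, and then the substitution $\omega^2=2\beta^{-1}\eta$. Your extra checks fill in details the paper leaves implicit, namely convergence from any finite initial second moment and the decay of the mean, so that the second moment equals the variance.
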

The proof is in Appendix~\ref{app:proofs-deferred}.

The limit $1/(2\beta)$ is the Gibbs value for a one-dimensional quadratic well.
Noise of fixed variance $\omega^2$ leaves a floor that diverges as $\eta\downarrow0$, while Langevin noise, with $\omega^2\propto\eta$, leaves a finite floor set by the temperature.

\subsection{State-dependent preconditioning and Riemannian Langevin corrections}
\label{subsec:add-preconditioned-langevin}

For a smooth matrix field $G(\theta)\succ0$, the preconditioned Langevin diffusion targeting $\Gibbs_\beta$ is
\begin{equation}
\label{eq:add-riemannian-langevin}
        \dd\theta_s=
        \left[-G(\theta_s)\nabla U(\theta_s)+\beta^{-1}\Gamma(\theta_s)\right]\dd s
        +\sqrt{2\beta^{-1}G(\theta_s)}\,\dd B_s,
        \qquad
        \Gamma_i(\theta)=\sum_{j=1}^d \frac{\partial G_{ij}(\theta)}{\partial \theta_j}
\end{equation}
The It\^o correction $\Gamma$ vanishes for constant $G$ and is the sampling analogue of the metric-drift term of the discrete ledger.
Preconditioned SGLD \citep{chunyuan2016preconditioned} combines adaptive preconditioners with Langevin noise. The information ledger separates the sampler's gradient intrinsic time $\half g_t^\top G_tg_t$, the drift from replacing $G_t$ by $G_{t+1}$, and the correction required by a state-dependent $G$, whose omission biases its invariant law \citep{yian2015complete}.
For the discrete deterministic part $\theta_{t+1}=\theta_t-\eta_t\Sigma_tg_t$ with $I_t(u)=\half\|u-\theta_t\|_{\Sigma_t^{-1}}^2$,
\begin{equation}
\label{eq:add-variable-metric-ledger}
        \eta_t\ip{g_t}{\theta_t-u}
        =I_t(u)-I_{t+1}(u)+\frac{\eta_t^2}{2}g_t^\top\Sigma_tg_t
        +\Delta_t^{\mathrm{metric}}(u),
        \qquad
        \Delta_t^{\mathrm{metric}}(u)=\half(u-\theta_{t+1})^\top(\Sigma_{t+1}^{-1}-\Sigma_t^{-1})(u-\theta_{t+1})
\end{equation}
A metric that expands in the comparator direction, $\Sigma_{t+1}^{-1}\preceq\Sigma_t^{-1}$, gives favorable negative drift, and one that contracts in that direction increases terminal information.

\subsection{Survival and change under non-Gaussian noise}
\label{subsec:add-nongaussian}

Without Gaussianity, posteriors no longer stay Gaussian after reweighting and convolution, nor is the heat channel the discretized Brownian dynamics with Gibbs invariant law, but the pathwise and channel ledgers survive.
For finite-variance noise Theorem~\ref{thm:add-noisy-ledger} is already complete.
The conditional mean ledger depends on the noise only through its first two moments.

\begin{corollary}[Finite-variance noise enters the mean ledger through its first two moments]
\label{cor:add-finite-variance-noise}
Assume that conditional on \((\calF_t,g_t)\), the perturbation \(\xi_t\) has mean \(\bar\xi_t\) and covariance \(\Omega_t\).  Then
\begin{align}
\label{eq:add-biased-noise-ledger}
        \E\!\left[\eta_t\ip{g_t}{\theta_t-u}\mid\calF_t,g_t\right]
        &=I_t(u)-\E[I_{t+1}(u)\mid\calF_t,g_t]
        +\eta_t^2Q_t\nonumber\\
        &\quad+\ip{\bar\theta_{t+1}-u}{\Sigma^{-1}\bar\xi_t}
        +\half\tr(\Sigma^{-1}\Omega_t)
        +\half\|\bar\xi_t\|_{\Sigma^{-1}}^2
\end{align}
\end{corollary}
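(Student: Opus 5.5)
The plan is to start from the pathwise identity \eqref{eq:add-noisy-ledger} of Theorem~\ref{thm:add-noisy-ledger}. It is an algebraic identity in the realized pair $(g_t,\xi_t)$ and uses no assumption on the law of $\xi_t$, so it holds verbatim here. Conditioning on $(\calF_t,g_t)$, the quantities $\theta_t$, $g_t$, $\bar\theta_{t+1}=\theta_t-\eta_t\Sigma g_t$, $Q_t$ and $I_t(u)$ are all measurable. Hence the left side and the terms $I_t(u)$ and $\eta_t^2Q_t$ pass through the conditional expectation unchanged, while $I_{t+1}(u)$ becomes $\E[I_{t+1}(u)\mid\calF_t,g_t]$. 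What remains is to compute the conditional expectations of the displacement term $M_t^\xi(u)$ and the energy term $E_t^\xi$.

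For the displacement term, linearity and measurability of $\bar\theta_{t+1}-u$ give
\[
\E[M_t^\xi(u)\mid\calF_t,g_t]=\ip{\bar\theta_{t+1}-u}{\Sigma^{-1}\bar\xi_t},
\]
which is exactly the new first-moment entry of \eqref{eq:add-biased-noise-ledger}.

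For the energy term, write $\xi_t=\bar\xi_t+(\xi_t-\bar\xi_t)$ and expand $\half\|\xi_t\|_{\Sigma^{-1}}^2$. The cross term has conditional mean zero. The centered quadratic gives $\half\tr(\Sigma^{-1}\Omega_t)$ through the trace trick $\E[\zeta^\top\Sigma^{-1}\zeta]=\tr(\Sigma^{-1}\E\zeta\zeta^\top)$, and the mean part leaves $\half\|\bar\xi_t\|_{\Sigma^{-1}}^2$. Summing the pieces yields \eqref{eq:add-biased-noise-ledger}. Setting $\bar\xi_t=0$ recovers \eqref{eq:add-noisy-ledger-conditional}, which serves as a check.

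The only point needing care is integrability. I would note that finite conditional covariance implies that $\xi_t$ has finite conditional second moments, so $E_t^\xi$ and $M_t^\xi(u)$ are conditionally integrable. Since $I_{t+1}(u)$ is then the sum of a measurable quantity and integrable terms in $\xi_t$, its conditional expectation is finite and the rearrangement is legitimate. Here $\Omega_t$ denotes the centered covariance; if it were meant as the raw second moment the mean-square term would be absorbed into it, so I would fix the centered reading, which is the one consistent with the displayed formula.
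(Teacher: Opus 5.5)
Your proposal is correct and is the paper's own argument: take conditional expectations given $(\calF_t,g_t)$ in the pathwise identity \eqref{eq:add-noisy-ledger} and use $\E\|\xi_t\|_{\Sigma^{-1}}^2=\tr(\Sigma^{-1}\Omega_t)+\|\bar\xi_t\|_{\Sigma^{-1}}^2$ together with $\E[M_t^\xi(u)\mid\calF_t,g_t]=\ip{\bar\theta_{t+1}-u}{\Sigma^{-1}\bar\xi_t}$. Your integrability remark and your reading of $\Omega_t$ as the centered covariance are details the paper leaves implicit. Your argument also assumes, as the paper does without stating it, that $\eta_t$ is $(\calF_t,g_t)$-measurable.
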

The proof is in Appendix~\ref{app:proofs-deferred}.

Perturbations with the same conditional mean and covariance therefore give the same expected point-comparator ledger, Gaussian, Rademacher, or uniform alike; higher distributional features reappear in tail probabilities, exit times, and exponential-moment normalizers.
The adaptive-Bayes identity is driven by log-normalizers.
For noise with a finite moment-generating function the normalizer is its cumulant-generating function $\Lambda_t(a)=\log\E[\exp(\ip{a}{\xi_t})\mid\calF_t]$.

\begin{proposition}[Exact non-Gaussian cumulant insertion]
\label{prop:add-cgf-noise}
Let \(\xi_t\) be conditionally independent of \(g_t\) given \(\calF_t\).  For any \(v\in\R^d\) with \(\Lambda_t(v)<\infty\),
\begin{equation}
\label{eq:add-cgf-noise-normalizer}
        \log\E\left[\exp\left(\ip{v}{\bar\theta_{t+1}+\xi_t}\right)\mid\calF_t,g_t\right]
        =\ip{v}{\bar\theta_{t+1}}+\Lambda_t(v)
\end{equation}
\end{proposition}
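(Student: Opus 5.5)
The identity follows by pulling out the factor that is fixed by the conditioning and applying the definition of $\Lambda_t$ to what is left. The first step is to check that $\bar\theta_{t+1}=\theta_t-\eta_t\Sigma g_t$ is measurable with respect to $\sigma(\calF_t,g_t)$. This holds because $\theta_t$ is $\calF_t$-measurable, $\Sigma$ and $\eta_t$ are fixed (or predictable), and $g_t$ is among the conditioning variables. So, conditionally on $(\calF_t,g_t)$, the factor $\exp(\ip{v}{\bar\theta_{t+1}})$ is a known positive constant. It can be taken out of the conditional expectation, giving
\[
\E\left[\exp\left(\ip{v}{\bar\theta_{t+1}+\xi_t}\right)\mid\calF_t,g_t\right]
=\exp(\ip{v}{\bar\theta_{t+1}})\,\E\left[\exp(\ip{v}{\xi_t})\mid\calF_t,g_t\right].
\]
This step needs no integrability beyond nonnegativity, since conditional expectations of nonnegative variables are defined in $[0,\infty]$.

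The second step uses the conditional independence of $\xi_t$ and $g_t$ given $\calF_t$. For any nonnegative measurable $h$ this gives $\E[h(\xi_t)\mid\calF_t,g_t]=\E[h(\xi_t)\mid\calF_t]$ almost surely. Applying it with $h(\xi)=\exp(\ip{v}{\xi})$, and reading the definition of $\Lambda_t$ with $v$ fixed, I obtain
\[
\E[\exp(\ip{v}{\xi_t})\mid\calF_t,g_t]=\exp(\Lambda_t(v)).
\]
By hypothesis $\Lambda_t(v)<\infty$, so this quantity is finite and positive. Taking logarithms of the product from the first step then yields \eqref{eq:add-cgf-noise-normalizer}.

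The only delicate point is measure-theoretic. Conditional independence must be used in its standard form, namely that conditioning additionally on $g_t$ does not change the conditional law of $\xi_t$ given $\calF_t$. One also has to make sure that $v$ is deterministic, or at least $\calF_t$-measurable, so that $\Lambda_t(v)$ is read pathwise. If $v$ were allowed to depend on $g_t$, as in the adaptive-Bayes use where $v=-\eta_t\Sigma^{-1}(\cdot)$ or a tilt direction built from the gradient, I would instead take the regular conditional law of $\xi_t$ given $\calF_t$. Independence makes it also a version of the conditional law given $(\calF_t,g_t)$, and $v$ can then be substituted after integrating. I expect this substitution to be the main point needing care, and the rest is immediate.
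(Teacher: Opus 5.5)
Your proof is correct and follows the same argument as the paper's. You pull the $(\calF_t,g_t)$-measurable factor $\exp(\ip{v}{\bar\theta_{t+1}})$ out of the conditional expectation, use conditional independence to drop $g_t$ from the conditioning of $\exp(\ip{v}{\xi_t})$, and take logarithms; your added measure-theoretic care (nonnegativity, measurability of $\bar\theta_{t+1}$, and the aside on a $g_t$-dependent $v$) is sound but goes beyond what the statement, with $v\in\R^d$ fixed, requires.
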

The proof is in Appendix~\ref{app:proofs-deferred}.

The function is explicit for the common laws:
\[
\Lambda_t(v)=
\begin{cases}
\half v^\top\Omega_tv & \xi_t\sim\calN(0,\Omega_t)\\
\sum_i\log\cosh(a_iv_i) & \text{independent Rademacher coordinates }\xi_{t,i}=a_i\epsilon_i\\
\sum_i\log(1-b_i^2v_i^2)^{-1}\ \text{on }|b_iv_i|<1 & \text{independent centered Laplace coordinates with scales }b_i\\
\log\sum_kw_k\exp(\half v^\top\Omega_kv) & \text{a Gaussian mixture }\sum_kw_k\calN(0,\Omega_k)
\end{cases}
\]
The mixture's function sees the average covariance in small directions and the broadest component in large ones.
Where the moment-generating function is finite near zero, $\Lambda_t(v)=\half v^\top\Omega_tv+O(\|v\|^3)$ for centered noise, so the Gaussian analysis is the second-order approximation to the cumulant information ledger.
Truncating a heavy-tailed jump at a finite radius, or tempering a stable law, restores a finite $\Lambda_t$ on its domain, the omitted jump contribution appearing as a bias.

Heavy-tailed gradient noise has been observed in deep learning, and $\alpha$-stable models have been proposed in place of Gaussian diffusion approximations \citep{umut2019tailindex,thanh2019first}, their jumps linked to escape from narrow minima \citep{umut2019heavytailed}.
The pathwise identity \eqref{eq:add-noisy-ledger} holds for every realized jump, since the algebra needs no moments.
The conditional expectation \eqref{eq:add-noisy-ledger-conditional} fails to exist when $\E\|\xi_t\|_{\Sigma^{-1}}^2=\infty$.
For a symmetric $\alpha$-stable perturbation with $0<\alpha<2$ the cumulant $\Lambda_t(v)$ is infinite in every direction $v$ along which $\ip{v}{\xi_t}$ is not almost surely zero, so the score must be truncated or robustified (Appendix~\ref{app:identities}) or the dynamics treated with fractional-generator tools.
The channel identity of Theorem~\ref{thm:add-channel-kl} remains valid whenever its relative entropies are defined.
The L\'evy-driven continuous-time analogue does not in general keep $\exp(-\beta U)$ as its invariant law, so it models exploration and does not sample the posterior.
On a character-level transformer the per-coordinate gradient noise at batch $32$ is not heavy-tailed (Figure~\ref{fig:native-tails}).
A tail index pooled over coordinates of different scales can fall below two there even for Gaussian noise, so such an index is not by itself evidence of a heavy tail.
Heavy tails in the iterates have been associated with a large ratio of step size to batch size \citep{barsbey2021heavy}, so a measurement at one such ratio does not settle the question at another.
Where such a regime does arise, stability analyses of heavy-tailed dynamics extend to general non-convex losses \citep{raj2023algorithmic}.

\subsection{Learning-rate design with noise}
\label{subsec:add-noisy-learning-rates}

By \eqref{eq:add-noisy-ledger-conditional}, noisy optimization has the enlarged one-step cost $\bar Q_t^{\mathrm{noisy}}(\eta_t)=Q_t+\tfrac{1}{2\eta_t^2}\tr(\Sigma^{-1}\Omega_t)$.
The square-root schedule applies with this increment in place of $Q_t$.
For Langevin noise $\Omega_t=2\beta^{-1}\eta_t\Sigma$ the enlarged one-step cost becomes
\begin{equation}
\label{eq:add-langevin-noisy-clock}
        \bar Q_t^{\mathrm{SGLD}}(\eta_t)
        =Q_t+\frac{\beta^{-1}d}{\eta_t}
\end{equation}
whose thermal term is chosen to preserve a temperature; optimization anneals $\beta^{-1}$ or the injected noise, while sampling keeps $\beta$ fixed and accepts the floor.

\begin{proposition}[Exact conditional pressure target for noisy steps]
\label{prop:add-noisy-pressure}
Fix \(u\), \(\Sigma\), and a target \(a_t\).  Let the additive noise be conditionally mean-zero and conditionally independent of \(g_t\) given \(\calF_t\), with covariance \(\Omega_t=\E[\xi_t\xi_t^\top\mid\calF_t,g_t]\), and let \(\eta_t\) be \((\calF_t,g_t)\)-measurable.  The conditional expected point-comparator ledger hits the target \(\E\left[\eta_t\ip{g_t}{\theta_t-u}-I_t(u)+I_{t+1}(u)\mid\calF_t,g_t\right]=a_t\) if and only if
\begin{equation}
\label{eq:add-noisy-pressure}
        \eta_t^2Q_t+
        \half\tr(\Sigma^{-1}\Omega_t)=a_t
\end{equation}
When \(Q_t>0\) and \(\Omega_t\) does not depend on \(\eta_t\), this gives
\begin{equation}
\label{eq:add-noisy-pressure-solution}
        \eta_t=
        \sqrt{\frac{a_t-\half\tr(\Sigma^{-1}\Omega_t)}{Q_t}}
\end{equation}
provided the numerator is positive.  For Langevin noise \(\Omega_t=2\beta^{-1}\eta_t\Sigma\), \eqref{eq:add-noisy-pressure} is the quadratic \(\eta_t^2Q_t+\beta^{-1}d\,\eta_t=a_t\), with positive root \(\eta_t=\bigl(-\beta^{-1}d+\sqrt{\beta^{-2}d^2+4Q_ta_t}\bigr)/(2Q_t)\).
\end{proposition}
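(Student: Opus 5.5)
The plan is to reduce everything to the conditional ledger \eqref{eq:add-noisy-ledger-conditional} of Theorem~\ref{thm:add-noisy-ledger}, which already holds under exactly these hypotheses. The noise is conditionally mean-zero and conditionally independent of $g_t$ given $\calF_t$, and its covariance is $\Omega_t$. Since $\eta_t$ is $(\calF_t,g_t)$-measurable, it can be pulled outside the conditional expectation together with $\theta_t$, $g_t$ and $I_t(u)$. I first check that the proof of Theorem~\ref{thm:add-noisy-ledger} used only that $\eta_t$ is known given $(\calF_t,g_t)$, not that it is deterministic. The pathwise identity \eqref{eq:add-noisy-ledger-expanded} is algebraic. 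Its cross terms $\ip{\theta_t-u}{\Sigma^{-1}\xi_t}$ and $-\eta_t\ip{g_t}{\xi_t}$ vanish in conditional expectation because $\theta_t,u,g_t,\eta_t$ are all measurable with respect to the conditioning $\sigma$-field and $\E[\xi_t\mid\calF_t,g_t]=0$. Conditional independence is what makes this mean vanish given $g_t$ as well as given $\calF_t$. The energy term gives $\half\E[\|\xi_t\|^2_{\Sigma^{-1}}\mid\calF_t,g_t]=\half\tr(\Sigma^{-1}\Omega_t)$.

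Rearranging \eqref{eq:add-noisy-ledger-conditional} gives
\[
\E\bigl[\eta_t\ip{g_t}{\theta_t-u}-I_t(u)+I_{t+1}(u)\mid\calF_t,g_t\bigr]=\eta_t^2Q_t+\half\tr(\Sigma^{-1}\Omega_t).
\]
Setting the left side equal to $a_t$ is therefore literally equivalent to \eqref{eq:add-noisy-pressure}. This proves the ``if and only if'' claim.

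For the closed forms there are two cases.

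\emph{Fixed covariance.} When $\Omega_t$ does not depend on $\eta_t$, \eqref{eq:add-noisy-pressure} reads $\eta_t^2=(a_t-\half\tr(\Sigma^{-1}\Omega_t))/Q_t$ with $Q_t>0$. A positive real solution exists exactly when the numerator is positive, and the positive root is \eqref{eq:add-noisy-pressure-solution}.

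\emph{Langevin covariance.} For $\Omega_t=2\beta^{-1}\eta_t\Sigma$ we have $\tr(\Sigma^{-1}\Omega_t)=2\beta^{-1}\eta_t d$. The condition becomes $Q_t\eta_t^2+\beta^{-1}d\,\eta_t-a_t=0$. The quadratic formula gives the stated root. It is the unique positive root when $a_t>0$, since the product of the roots is $-a_t/Q_t<0$.

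The main subtlety, though not a real obstacle, is the Langevin case. There $\Omega_t$ depends on $\eta_t$, so I should confirm that the conditional covariance is still $2\beta^{-1}\eta_t\Sigma$ given $(\calF_t,g_t)$. This holds because $\eta_t$ is measurable with respect to that $\sigma$-field and $\zeta_t$ is independent of it. With that confirmed, the substitution into \eqref{eq:add-noisy-pressure} is legitimate.
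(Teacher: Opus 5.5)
Your proposal is correct and follows the paper's proof. It reads off from \eqref{eq:add-noisy-ledger-conditional} that the conditional expected ledger equals $\eta_t^2Q_t+\half\tr(\Sigma^{-1}\Omega_t)$ whatever the target, then solves the pure quadratic, and, after substituting $\half\tr(\Sigma^{-1}\Omega_t)=\beta^{-1}d\,\eta_t$, the Langevin quadratic. Your added checks are sound refinements that the paper leaves implicit: that $(\calF_t,g_t)$-measurability of $\eta_t$ is all the conditional ledger needs; that in the Langevin case only the conditional mean and covariance given $(\calF_t,g_t)$ matter, even though a $g_t$-dependent $\eta_t$ makes $\xi_t$ depend on $g_t$ and so breaks literal conditional independence; and that the positive root is unique when $a_t>0$.
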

The proof is in Appendix~\ref{app:proofs-deferred}.

If the numerator of \eqref{eq:add-noisy-pressure-solution} is not positive, the noise energy alone already meets or exceeds the target, so no positive learning rate makes the expected ledger hit the target.
The proposition is the noisy analogue of the pressure-target rule of \cite{balsubramani2026adaptive}.

\subsection{Saddles under noisy dynamics}
\label{subsec:add-saddles-noisy}

Near a critical point with $U(\theta)=U(0)+\half\theta^\top H\theta$ and a negative eigenvalue $-\gamma$ of $H$, gradient descent with scalar step $\eta$ satisfies, along that eigenvector,
\begin{equation}
\label{eq:add-saddle-1d}
        x_{t+1}=(1+\eta\gamma)x_t+\xi_t
\end{equation}

\begin{proposition}[Exact unstable-direction second moment]
\label{prop:add-saddle-second-moment}
For \eqref{eq:add-saddle-1d}, assume \(\E\xi_t=0\), \(\Var(\xi_t)=\omega^2\), and independent increments.  Then
\begin{equation}
\label{eq:add-saddle-second-moment}
        \E x_t^2=(1+\eta\gamma)^{2t}x_0^2
        +\omega^2\sum_{s=0}^{t-1}(1+\eta\gamma)^{2s}
        =(1+\eta\gamma)^{2t}x_0^2
        +\omega^2\frac{(1+\eta\gamma)^{2t}-1}{(1+\eta\gamma)^2-1}
\end{equation}
\end{proposition}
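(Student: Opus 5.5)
The plan is to prove the closed form by unrolling the linear recursion \eqref{eq:add-saddle-1d}, or equivalently by induction on $t$ using a one-step second-moment recursion. Write $a=1+\eta\gamma$. Squaring \eqref{eq:add-saddle-1d} gives
\[
x_{t+1}^2=a^2x_t^2+2a\,x_t\xi_t+\xi_t^2 .
\]
The key step is to take expectations and show the cross term vanishes. Here I read ``independent increments'' as saying that $\xi_t$ is independent of $(x_0,\xi_0,\dots,\xi_{t-1})$, and hence of $x_t$, which is a deterministic function of those variables. Then
\[
\E[x_t\xi_t]=\E[x_t]\,\E[\xi_t]=0 ,
\]
so that
\[
\E x_{t+1}^2=a^2\,\E x_t^2+\omega^2 .
\]
This is precisely the one-dimensional specialization of \eqref{eq:saddle-stoch} in Proposition~\ref{prop:saddle}, with $\lambda_j=-\gamma$. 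Taking full expectations there gives the same recursion for $2I_{t,j}$, so I could cite that result instead.

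Next, solve the recursion. Treat $x_0$ as deterministic; if it is random, read $x_0^2$ as $\E x_0^2$. Induction on $t$ gives
\[
\E x_t^2=a^{2t}x_0^2+\omega^2\sum_{s=0}^{t-1}a^{2s}.
\]
The base case $t=0$ is trivial, with an empty sum. For the inductive step, multiplying by $a^2$ shifts the sum's indices up by one, and adding $\omega^2$ restores the $s=0$ term.

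Finally, sum the geometric series to get $(a^{2t}-1)/(a^2-1)$. This is valid because $a^2-1=\eta\gamma(2+\eta\gamma)>0$ whenever $\eta\gamma>0$. The degenerate case $\eta\gamma=0$ needs only a remark: there the sum is $t$, which is the limit of the closed form.

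No step is a genuine obstacle. The only point needing care is stating exactly which independence is used to kill the cross term $2a\,\E[x_t\xi_t]$, namely $\xi_t$ independent of the past, with finite second moments so that all expectations exist.
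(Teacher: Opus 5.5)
Your argument is correct and is essentially the paper's. The paper unrolls the recursion to $x_t=(1+\eta\gamma)^tx_0+\sum_{k=0}^{t-1}(1+\eta\gamma)^{t-1-k}\xi_k$ and takes second moments. You instead solve the equivalent one-step recursion $\E x_{t+1}^2=(1+\eta\gamma)^2\E x_t^2+\omega^2$ by induction. Both use mean-zero independent noise to remove the cross terms before summing the geometric series. One small correction to your aside: Proposition~\ref{prop:saddle} applies only after you identify the additive noise here with $-\eta\xi_{t,j}$ there, so that $\omega^2=\eta^2\Var(\xi_{t,j}\mid\calF_t)$; it is not a literal specialization.
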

The proof is in Appendix~\ref{app:proofs-deferred}.

For Langevin noise, $\omega^2=2\beta^{-1}\eta$, the injected unstable-direction information grows geometrically at the same rate.
Under heavy-tailed perturbations with $\text{Pr}(|\xi_t|>r)\asymp r^{-\alpha}$, escape may instead be a single jump.
The waiting time for a jump exceeding a width $w$ is of order $w^\alpha$, with no Arrhenius exponential in a barrier height.

\subsection{Momentum: the information ledger in phase space}
\label{subsec:add-momentum}

Momentum enlarges the state to position and velocity.
The same accounting applies in phase space: Hamiltonian transport preserves information, friction contracts it, and injected noise adds a thermostat channel.
The heavy-ball update $v_{t+1}=\mu_tv_t-\eta_t\Sigma g_t+\xi_t$, $\theta_{t+1}=\theta_t+v_{t+1}$ is, for a quadratic with $g_t=H(\theta_t-u)$ and $s_t=(\theta_t-u,v_t)$, the affine recursion
\begin{equation}
\label{eq:add-heavy-ball-noise-matrix}
        s_{t+1}=A_Hs_t+
        \begin{pmatrix}\xi_t\\ \xi_t\end{pmatrix},
        \qquad
        A_H=
        \begin{pmatrix}
        I-\eta\Sigma H & \mu I\\
        -\eta\Sigma H & \mu I
        \end{pmatrix}
\end{equation}

\begin{proposition}[Exact phase-space information recursion]
\label{prop:add-phase-info-recursion}
Let \(P\succ0\) be a phase-space metric and \(\mathcal I_t=\half s_t^\top Ps_t\).  For the affine recursion \(s_{t+1}=A_ts_t+b_t\),
\begin{equation}
\label{eq:add-phase-exact}
        \mathcal I_t-\mathcal I_{t+1}
        =\half s_t^\top(P-A_t^\top PA_t)s_t
        -s_t^\top A_t^\top Pb_t
        -\half b_t^\top Pb_t
\end{equation}
If \(s_t\) and \(A_t\) are \(\calF_t\)-measurable and \(b_t\) is conditionally mean-zero with covariance \(\Xi_t\), then
\begin{equation}
\label{eq:add-phase-expect}
        \E[\mathcal I_t-\mathcal I_{t+1}\mid\calF_t]
        =\half s_t^\top(P-A_t^\top PA_t)s_t
        -\half\tr(P\Xi_t)
\end{equation}
\end{proposition}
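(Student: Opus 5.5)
The plan is to expand the quadratic form directly, in the same way as the Euclidean expansion behind Theorem~\ref{thm:sgd-ledger}, with $P$ in place of $\Sigma^{-1}$. Write $\mathcal I_{t+1}=\half(A_ts_t+b_t)^\top P(A_ts_t+b_t)$. Because $P$ is symmetric, this expands into three pieces:
\[
\mathcal I_{t+1}=\half s_t^\top A_t^\top PA_ts_t+s_t^\top A_t^\top Pb_t+\half b_t^\top Pb_t .
\]
Subtracting it from $\mathcal I_t=\half s_t^\top Ps_t$ and grouping the two pure-$s_t$ terms into $\half s_t^\top(P-A_t^\top PA_t)s_t$ gives \eqref{eq:add-phase-exact}. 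This step uses no assumptions beyond symmetry of $P$, so the pathwise identity holds for every realized $b_t$. This matches the moment-free pathwise statement of Theorem~\ref{thm:add-noisy-ledger}.

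For \eqref{eq:add-phase-expect}, take conditional expectations given $\calF_t$ term by term. The first term is $\calF_t$-measurable, since $s_t$ and $A_t$ are, so it passes through unchanged. The cross term is $s_t^\top A_t^\top P\,\E[b_t\mid\calF_t]$, with the measurable vector $PA_ts_t$ pulled out; it vanishes because $b_t$ is conditionally mean-zero. For the last term, use the trace trick $b_t^\top Pb_t=\tr(Pb_tb_t^\top)$ together with linearity of trace and conditional expectation. Since the mean is zero, the conditional second moment equals the covariance $\Xi_t$, so $\E[\half b_t^\top Pb_t\mid\calF_t]=\half\tr(P\Xi_t)$.

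The only point needing care is integrability: the conditional expectations must be defined. I would note that $\E[\|b_t\|^2\mid\calF_t]<\infty$ is implicit in $\Xi_t$ being finite, and that this also makes the cross term integrable by Cauchy--Schwarz. With that in place, the result is routine algebra rather than a genuine obstacle. Finally, I would remark that specializing to $A_H$ and $b_t=(\xi_t,\xi_t)$ from \eqref{eq:add-heavy-ball-noise-matrix} gives $\Xi_t$ as the block matrix with $\Omega_t$ in all four blocks. This recovers the heavy-ball reading of the phase-space information ledger.
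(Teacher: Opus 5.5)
Your proposal is correct and matches the paper's proof: expand $\mathcal I_{t+1}=\half(A_ts_t+b_t)^\top P(A_ts_t+b_t)$ using the symmetry of $P$, then take conditional expectations, so that the cross term vanishes and $b_t^\top Pb_t$ is replaced by $\tr(P\Xi_t)$. Your integrability remark and the heavy-ball specialization, in which $\Xi_t$ has $\Omega_t$ in all four blocks, are accurate additions that the paper leaves implicit.
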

The proof is in Appendix~\ref{app:proofs-deferred}.

For a Hessian eigenvalue $h$ in the scalar unpreconditioned case, the heavy-ball transition matrix has characteristic polynomial
\begin{equation}
\label{eq:add-heavy-ball-eigenvalues}
        \lambda^2-(1+\mu-\eta h)\lambda+\mu=0
\end{equation}
For $h>0$ stability requires both roots inside the unit disk, the usual heavy-ball constraints.
For a saddle-point direction $h=-\gamma<0$ the polynomial evaluated at $1+\eta\gamma$ equals $-\mu\eta\gamma<0$, so for every $\mu>0$ the larger root exceeds the no-momentum factor $1+\eta\gamma$: momentum amplifies negative-curvature directions in phase space.
Velocity noise is amplified by the same transition matrix.

With a momentum $r$, mass matrix $M\succ0$, and friction $C\succeq0$, underdamped Langevin dynamics $\dd\theta_s=M^{-1}r_s\dd s$, $\dd r_s=-\nabla U(\theta_s)\dd s-CM^{-1}r_s\dd s+\sqrt{2\beta^{-1}C}\,\dd B_s$ has invariant law $\Pi_\beta\propto\exp[-\beta(U(\theta)+\half r^\top M^{-1}r)]$.
Its Hamiltonian part conserves energy and transports information without spending it. The law $q_s$ in phase space satisfies
\begin{equation}
\label{eq:add-underdamped-entropy}
        \frac{\dd}{\dd s}\KL(q_s\|\Pi_\beta)
        =-\beta^{-1}\int
        \left\|\nabla_r\log\frac{q_s}{\Pi_\beta}\right\|_{C}^2
        q_s(\theta,r)\dd\theta\dd r
\end{equation}
the Ornstein--Uhlenbeck thermostat dissipating it at the relative Fisher-information rate.
Stochastic-gradient Hamiltonian Monte Carlo replaces $\nabla U$ by a mini-batch estimate, whose noise already heats the momentum \citep{tianqi2014stochastic,yian2015complete}.
For the momentum update $r_{t+1}=r_t-\eta\nabla U(\theta_t)-\eta CM^{-1}r_t-\eta\varepsilon_t+\zeta_t$, with gradient error of covariance $B_t$ and injected noise of covariance $R_t$, the fluctuation--dissipation relation requires, at first order in $\eta$,
\begin{equation}
\label{eq:add-sghmc-balance}
        \eta^2B_t+R_t\approx2\beta^{-1}\eta C
\end{equation}
so $R_t\approx2\beta^{-1}\eta C-\eta^2B_t$ must be positive semidefinite; underestimating the mini-batch noise overheats the sampler.
Stochastic-gradient thermostats \citep{nan2014bayesian} add a variable that adapts the friction until the mean kinetic energy reaches its equilibrium value $d/(2\beta)$, a pressure-target controller in phase space whose information ledger follows from Proposition~\ref{prop:add-phase-info-recursion} on the enlarged state.
Nesterov momentum evaluates the gradient at $y_t=\theta_t+\mu_tv_t$, so the objective conversion is $\ip{\nabla F(y_t)}{y_t-u}=F(y_t)-F(u)+B_F(u,y_t)$ and $\ip{\nabla F(y_t)}{\theta_t-u}=\ip{\nabla F(y_t)}{y_t-u}-\mu_t\ip{\nabla F(y_t)}{v_t}$ records the lookahead transport correction.

\subsection{Mini-batch gradient noise as intrinsic time}
\label{subsec:add-minibatch-noise}

The stochastic part of the expected intrinsic-time density \eqref{eq:add-expected-gradient-clock} is $\half\tr(\Sigma C_t)$.
For a mini-batch of size $b_t$ drawn independently it is $\tfrac{1}{2b_t}\tr(\Sigma C_t^{(1)})$, with $C_t^{(1)}$ the single-example covariance.
Larger batches reduce this term and the exploration it drives, since, to leading order in $\eta_t$, the trajectory's stochastic temperature is set by the product $\eta_t\Sigma C_t$.
In a quadratic basin with constant step $\eta$ the stationary covariance solves
\begin{equation}
\label{eq:add-sgd-lyapunov}
        S=(I-\eta H)S(I-\eta H)^\top+\eta^2 C
\end{equation}
which for small $\eta$ is $HS+SH\approx\eta C$, giving $S_i\approx\eta C_i/(2h_i)$ along common eigendirections of $H$ and $C$, so at comparable noise, flat directions retain the larger variance.
Matching a Bayesian posterior requires a covariance relation between gradient noise and curvature that a scalar learning rate cannot supply \citep{stephan2017stochastic}.
Flatness has a sampling notion, small $\mathcal E_\beta$ in \eqref{eq:laplace}, and a constant-step SGD notion, large stationary covariance under \eqref{eq:add-sgd-lyapunov}. It also has a channel notion, a small contraction gap in Theorem~\ref{thm:add-channel-kl} after the algorithm's noise channel. These notions coincide only under special covariance matching.
\subsection{Loss-surface complexity with noise and momentum}
\label{subsec:add-complexity-noise-momentum}

Summing \eqref{eq:add-noisy-ledger} gives the information ledger of a noisy trajectory to a comparator,
\begin{equation}
\label{eq:add-noisy-ledger-total}
        \sum_{t=1}^T\eta_t\ip{g_t}{\theta_t-u}
        =I_1(u)-I_{T+1}(u)+
        \sum_{t=1}^T\eta_t^2Q_t+
        \sum_{t=1}^TE_t^\xi+\sum_{t=1}^TM_t^\xi(u)
\end{equation}
so a surface is difficult for an algorithm and comparator when reaching the target needs large gradient intrinsic time, injected-noise energy, unfavorable displacement, metric drift, or terminal comparator information.
Phase-space methods replace $I_t$ by $\mathcal I_t$ and add the entries of Proposition~\ref{prop:add-phase-info-recursion}.

\section{Additional identities}\label{app:identities}

\subsection{The luckiness constant on a bounded region}

For a smooth strongly-convex $F$ with $\bar g=\nabla F(\theta_t)$, condition \eqref{eq:luckiness} holds with a constant $\varkappa$ provided the iterates stay in a bounded region: writing $\sigma_\parallel^2(\theta)=\Var(\ip{g_t}{\theta-\theta_*})$, the smallest admissible constant is $\varkappa=\sup_\theta\bigl[\ip{\nabla F(\theta)}{\theta-\theta_*}+\sigma_\parallel^2(\theta)/\ip{\nabla F(\theta)}{\theta-\theta_*}\bigr]$, so two quantities require control. On a ball $\|\theta-\theta_*\|\le D$ the smoothness bound $\langle\nabla F(\theta),\theta-\theta_*\rangle\le L\|\theta-\theta_*\|^2\le LD^2$ controls the first term at $O(LD^2)$. The second term is the gradient-noise contribution. It vanishes for a noiseless oracle, where $\varkappa=O(LD^2)$ is the whole account; for an oracle with variance $\sigma^2$ along $\theta-\theta_*$ it contributes $\sigma^2/\mu$, which no function of $(L,\mu,D)$ bounds and which does not vanish as $\theta\to\theta_*$. On an unbounded domain the gradient inner product $\langle\nabla F(\theta),\theta-\theta_*\rangle$ grows at least like $\mu\|\theta-\theta_*\|^2$, without bound, so no constant $\varkappa$ depending only on $(L,\mu)$ suffices; the bounded-iterate hypothesis (or a projection or bounded-gradient assumption that implies bounded iterates) is required. Under the bounded-iterate hypothesis and bounded gradient noise on the realized region, condition \eqref{eq:luckiness} holds with finite $\varkappa=O(LD^2+\sigma^2/\mu)$.
The loss identity of Theorem~\ref{thm:loss-identity} then controls the comparator-centered regret by $\varkappa$-controlled terms on the realized region. The bounded-iterate hypothesis makes $\varkappa$ a regional quantity. With gradient noise, fixed-rate SGD on a strongly convex objective does \emph{not} attain constant regret: the iterate settles into an $O(\sqrt{\eta\sigma^2/\mu})$ ball and the cumulative regret grows $\Theta(\eta\sigma^2 T)$. $O(\log T)$ regret follows from the decaying Robbins--Monro rate $\eta_t=1/(\mu t)$, under which the contraction \eqref{eq:strong-recursion} telescopes to $\E I_{T+1}(\theta_\star)=O(\log T/(\mu^2T))$, and to $O(1/(\mu^2T))$ with the factor $1-2\mu\eta_t$ (\S\ref{sec:convex}). Constant regret at a fixed rate needs a noiseless oracle.

\subsection{Biased estimators and Adam-style bias correction}

A biased estimator (EMA, momentum, clipped gradient) introduces a predictable offset $\bar{\hat g}_t-g_t\ne0$ and with it the extra term $\Bias_T=\sum_{t\le T}\ip{\theta_t-\theta_*}{\bar{\hat g}_t-g_t}$.
Adam's $1/(1-\beta_1^t)$ correction cancels the leading-order bias in $\bar{\hat g}_t$, and $\Bias_T$ records the remaining bias.

\subsection{Heavy-tailed gradients and Catoni transforms}

For heavy-tailed gradient noise, replace $\hat g_t$ by the coordinate-wise robustified score $g_t^{\mathrm{Cat}}=c^{-1}\psi_{\mathrm{Cat}}(c\,\hat g_t)$ at a scale $c>0$ \citep{catoni2012ann}. The accounting applies with a modified bias term.
The robustified clock has finite variance even when $\E\|\hat g_t\|^4=\infty$, close in spirit to gradient clipping \citep{razvan2013difficulty}.
Below stability index two the realized noise-energy entry develops a heavy tail while the pathwise identity is unaffected.
Truncation and Catoni robustification each restore a finite entry (Figure~\ref{fig:single-term-checks}(e)).

\subsection{Comparator-centered PAC-Bayes regret}
Replacing the Dirac comparator by a Gaussian $\rho=\calN(\theta_*,\tau^2I)$ with $\tau\le\sigma$ yields
\begin{equation}\label{eq:pac-bayes-regret}
\KL(\rho\|\pi)=\half\left(d\log\tfrac{\sigma^2}{\tau^2}+d\tfrac{\tau^2}{\sigma^2}-d+\tfrac{\|\theta_*\|^2}{\sigma^2}\right)
\end{equation}
and Theorem~\ref{thm:loss-identity} gives a PAC-Bayesian regret identity against any such comparator. Shrinking $\tau$ approaches the point-comparator regret; holding $\tau$ at $\sigma$ produces the ``self-certified'' posterior that matches SGD's own Bayesian state. See \S\ref{sec:gen} for generalization implications.

\subsection{Taylor descent for smooth nonconvex functions}

For deterministic gradient descent $\theta_{t+1}=\theta_t-\eta_t\Sigma g_t$ with $g_t=\nabla F(\theta_t)$, the objective descent is exact with an integral remainder,
\[
F(\theta_{t+1})-F(\theta_t)
=-\eta_t\nabla F(\theta_t)^\top\Sigma\nabla F(\theta_t)
+\eta_t^2\int_0^1(1-s)(\Sigma g_t)^\top
\nabla^2F(\theta_t-s\eta_t\Sigma g_t)(\Sigma g_t)\,ds
\]
and the descent lemma follows by bounding the integral with a smoothness constant.

\subsection{AdaGrad two-sided envelope}\label{app:adagrad-envelope}

For the square-root step run on the clock including the current round, $\eta_t=C\sqrt{\Gamma/V_t}$ with $V_t=\sum_{s\le t}Q_s$, the clock identity gives the two-sided envelope $C\sqrt{\Gamma V_T}\le\sum_{t\le T}\eta_tQ_t\le 2C\sqrt{\Gamma V_T}$.
Its left end is attained when a single round accounts for the whole clock and its right end is approached by many equal increments, so the AdaGrad gradient regret is an identity plus a pathwise slack of at most a factor of two.
The predictable schedule \eqref{eq:sqrt-schedule}, which uses $V_{t-1}$, need not satisfy the envelope.

\subsection{Fixed-point partition-function identity for Polyak-step SGD}\label{app:polyak-partition}

For Polyak-step SGD satisfying \eqref{eq:pressure} with $a_t=f_t(\theta_*)$, each round has $-\log Z_t=\eta_ta_t$, and the normalizers of the local update compose multiplicatively, so
\[
\sum_{t\le T}\eta_t a_t=-\log\int\calN(\theta;\theta_1,\Sigma)\exp\left(-\sum_{t\le T}\eta_t\ip{g_t}{\theta-r_t}\right)d\theta
\]
the cumulative log-partition function of the linearized problem.

\section{Experimental evaluation in full}\label{app:experiments-full}

\subsection{The optimizer panel on a real network}\label{app:regime-discrimination}

A two-layer network is trained full-batch in double precision on a real handwritten-digit dataset ($1257$ training points) with SGD, SGD with momentum, AdamW, RMSProp, and AdaGrad, three seeds each.
Each step's realized update $\Delta_t=\theta_t-\theta_{t+1}$ is split exactly against the run's final iterate $u$:
\[
\langle \Delta_t,\theta_t-u\rangle
= \underbrace{\tfrac12\!\left(\|\theta_t-u\|^2-\|\theta_{t+1}-u\|^2\right)}_{\text{comparator-information transport}}
+ \underbrace{\tfrac12\|\Delta_t\|^2}_{\text{intrinsic-time cost}}
\]
Over $400$ full-batch steps the intrinsic-time fraction \eqref{eq:path-fraction} rises from plain SGD ($0.6\%$) through momentum and AdamW to RMSProp ($7.0\%$) and AdaGrad ($8.4\%$).

Stopping each optimizer the first time its full-batch training loss crosses a common target matches the training losses by construction.
Plain SGD then has the smallest fraction and AdaGrad and RMSProp the largest (Figure~\ref{fig:eval-regime-discrimination}(a)).
SGD takes $567$ steps against RMSProp's $29$.
Its steps are about twice as consistent ($R/T$ of $0.33$ against $0.17$).

Sweeping the learning rate at a fixed budget puts the smallest training loss at $\eta=0.4$ and the smallest test error at $\eta=0.1$.
Larger steps make the path less consistent, with $R/T$ falling from $0.47$ to $0.066$, so the fraction rises monotonically with $\eta$, from $0.0014$ to $0.0099$, through the test optimum (Figure~\ref{fig:eval-regime-discrimination}(b)).
Widening the network under $15\%$ label noise reproduces double descent, with the test error peaking at $1210$ parameters against $1257$ training points.
The fraction has no feature at the peak (Figure~\ref{fig:eval-regime-discrimination}(c)).
On the $14$ runs that interpolate, the fraction and the training loss rank the test error equally well (Spearman $-0.91$ and $+0.91$).

\subsection{Slack of the classical envelopes on real networks}\label{app:slack-protocol}

Each (network, optimizer, seed) run records one logged trajectory with weight decay zero, so that no part of the update falls outside the optimizer's metric and the identity in that metric needs no extra term.
The networks are CIFAR-10 ResNet-20 ($8{,}000$ steps, batch $128$), a four-layer character transformer of width and context $128$ on a bundled real-text corpus ($4{,}000$ steps, batch $32$ sequences), and a small vision transformer on CIFAR-10 ($8{,}000$ steps, batch $128$) for AdamW, Muon, SOAP and Sophia-G only.
The optimizers are SGD, heavy-ball momentum, SGD on the realized square-root clock ($\eta_t=c_0/\sqrt{V_t}$, $V_t=\sum_{s\le t}\|g_s\|^2/2$), AdaGrad, RMSProp, AdamW \citep{ilya2019decoupled}, Muon, SOAP, Sophia-G, and Schedule-Free SGD, each implemented against its reference code.
Each runs three seeds at a learning rate chosen per (network, optimizer) by a sweep on seed $0$ at a quarter of the full budget, held constant except on the square-root clock.
Heavy-ball momentum on the character LM diverged at seed $1$ and is excluded wherever a measurement needs it.
The information ledger tracks the parameter vector, except for Schedule-Free, whose information ledger runs on its base sequence $z_{t+1}=z_t-\gamma g_t$.
The comparator $u$ is each run's returned final iterate.
At $48$ grid points and the terminal point, float64 accumulators record the realized weighted regret and $I_t(u)$ in the Euclidean metric and, for the diagonal-metric optimizers, in the optimizer's own metric together with its drift term.
At every prefix $n$ each envelope's slack is attributed to the terms dropped by its derivation.
For SGD's linearized envelope the attribution is $I_n(u)$.
At the objective level it adds the cumulative Bregman remainder $\sum_t\eta_tB_F(u,\theta_t)$ as a trapezoid quadrature over the full-batch grid evaluations.
For Schedule-Free on $z$ it adds the gradient-ceiling term $\gamma^2(G^2n-\sum_t\|g_t\|^2)/2$, with $G$ the largest gradient norm seen up to the prefix.
For the square-root clock it records the position of $\sum_t\eta_tQ_t$ inside the band $[c_0\sqrt{V_n},2c_0\sqrt{V_n}]$ of Appendix~\ref{app:adagrad-envelope}.

There is no slack in the identity itself, at any prefix of the $71$ finite runs, in the Euclidean metric for every optimizer and in each diagonal-metric optimizer's own metric.
At the objective level the cumulative Bregman remainder, which convexity would make nonnegative, is negative over $94$--$98\%$ of the run for every optimizer without a momentum buffer on the character LM and for AdaGrad and the square-root clock on ResNet-20 (Figure~\ref{fig:slack-terminal}(b)).
Schedule-Free's gradient-ceiling relaxation costs $2.4\times$ the linearized envelope on ResNet-20 and $2.7\times$ on the LM by the end of the run.
The square-root clock's realized $\sum_t\eta_tQ_t$ sits $0.96$ (ResNet-20) and $0.79$ (LM) of the way up its two-sided band, so the band's upper end, the one used by the classical bound, is nearly tight there.

\begin{figure}[ht]
\centering
\includegraphics[width=\linewidth]{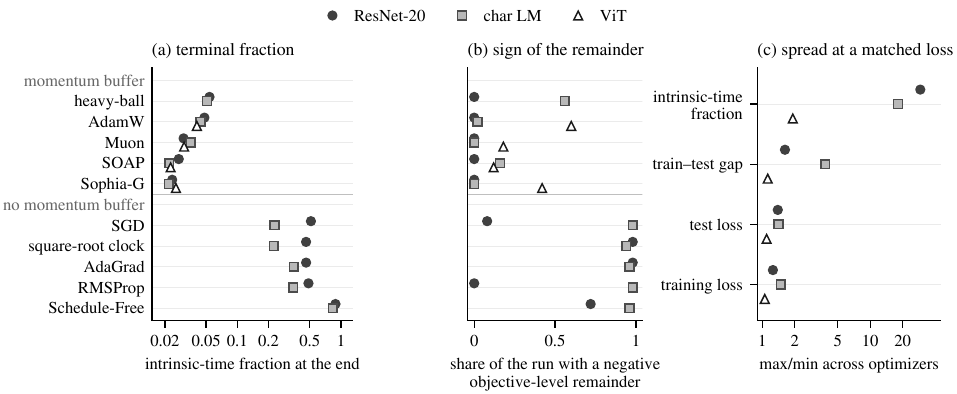}
\caption{\textbf{The information ledger separates the optimizers that keep a momentum buffer from those that do not.}
(a) Intrinsic-time fraction \eqref{eq:path-fraction} at the end of the run, in the Euclidean metric against the final iterate.
(b) Share of the logged grid points at which the cumulative Bregman remainder is negative. There the objective-level envelope does not apply.
(c) Max/min ratio across optimizers at the deepest training loss reached by every optimizer of a network. The fraction spreads by an order of magnitude on ResNet-20 and the char LM, where the training and test losses spread by less than $1.5\times$.
Markers are medians over seeds.
The vision transformer runs only AdamW, Muon, SOAP and Sophia-G.}
\label{fig:slack-terminal}
\end{figure}


%
%
%
The same trajectories are also measured at the first grid point at which each optimizer's training loss is at or below the deepest target every optimizer in the (network, seed) panel reaches, the comparator being that stopping iterate.
The crossing is located on the $48$-point grid, so the achieved training losses match only to within the spreads of Figure~\ref{fig:slack-terminal}(c).
The order-of-magnitude separation of Appendix~\ref{app:regime-discrimination} persists at loss spreads of $1.25$--$1.48\times$.
On both $10$-optimizer networks the optimizers that keep a momentum buffer sit at fractions from $0.02$ to $0.05$, below every optimizer without one (Figure~\ref{fig:slack-vs-bound}(c)).
The fraction's rank correlation with the train--test gap across optimizers averages $+0.41$ over seeds on ResNet-20 (permutation $p=0.035$).
On the LM its average, $-0.18$ ($p=0.37$), is indistinguishable from zero.
The two networks' correlations differ ($p=0.03$).
Within the buffer optimizers and within the rest no correlation is reliable.
On ResNet-20 the buffer optimizers' gap is below the others' on every seed, by $0.08$ on average.
On the LM it is above theirs on every seed, by $0.014$ on average.
Muon and SOAP account for the excess.

A ranking of the optimizers from a prefix uses a statistic chosen before training: the cumulative native-metric intrinsic time over the first $20\%$ of the steps, net of the cumulative metric drift against the prefix iterate.
For Muon and SOAP, whose metrics are not diagonal, it is the intrinsic time alone.
The baseline is the training loss at the same prefix.
Across optimizers within each (network, seed) panel, the net intrinsic time ranks the final training losses at a mean Spearman $\rho$ of $+0.11$ over the six panels of the two $10$-optimizer networks, against $+0.78$ for the prefix loss.
On the character-LM panel of seed $1$, where heavy-ball diverged, nine runs are ranked.
The rank correlation of the net intrinsic time is negative on every ResNet-20 panel ($-0.73$ to $-0.20$) and positive on every character-LM panel ($+0.57$ to $+0.64$).
For the final test losses both correlations are negative on every character-LM panel and positive on every ResNet-20 panel, so their means are $+0.07$ for the net intrinsic time and $-0.13$ for the prefix loss.
At the end of the run the intrinsic-time fraction ranks the final training losses at a mean $+0.71$ on the character LM and $+0.05$ on ResNet-20.
On ResNet-20 its rank correlation reaches $+0.44$ without Schedule-Free, whose fraction is the highest and whose final loss the lowest.
The fraction at the prefix and at the end splits the optimizers by whether they keep a momentum buffer.
On the character LM the five buffer optimizers end at a lower final training loss than the five optimizers without one in $24$ of $25$ pairs on each of seeds $0$ and $2$, the exception being heavy-ball against RMSProp.
On seed $1$, where heavy-ball diverged, they do so in all $20$ pairs.
On ResNet-20 the final training losses of the two groups overlap.

\subsection{Single terms of the information ledger}\label{app:sanity}
Figure~\ref{fig:single-term}(a) uses $d=50$ linear regression at seven batch sizes, with $10$ seeds per size.
The fit proportional to the mini-batch factor lies within $1.4$ standard errors of every mean.
Its slope $1.05{\times}10^6$ matches $\half\sum_t\tr(\Sigma C_t^{(1)})$ for the single-example covariance $C_t^{(1)}$.
Figure~\ref{fig:single-term}(b) uses $F(x,y,z)=\half(x^2+y^2)-\half z^2$ from $\theta_0=(1,1,\varepsilon_{\mathrm{init}})$, with $\eta=0.05$, $r_{\mathrm{esc}}=0.1$ and $30$ seeds per configuration.
Without noise the escapes take $1.021$ times the leading-order time; with noise the ratios lie between $1.009$ and $1.421$.

With $20$ seeds per configuration, the realized regret over $I_1+\sum_t\eta_t^2Q_t$ lies between $0.492$ and $0.500$ in the eight stable configurations of Figure~\ref{fig:single-term-checks}(a), the half-gain of \S\ref{sec:convex}.
All $144$ runs of panel (b) escape, with $8$ seeds per configuration.
At escape the coefficient of variation of $V_T^-$ \eqref{eq:V-minus} is $0.54$, against $1.19$ for $V_T$.
In panel (c), with $5$ seeds, SVRG's ratio $\hat V_T/V_T^{\det}$ of \S\ref{sec:stochastic} at inner loop $50$ is $0.98$ on Adult and $0.99$ on Covertype, against $1.44$ and $5.58$ for SGD.
The deterministic reference sums every fifth step, which lowers each ratio by $1.6$--$1.9\%$.
Against an every-step reference SVRG removes all but $0.2\%$ and $0.1\%$ of the mini-batch excess.
In panel (d), plain SGD stays finite on all $50$ seeds.
Catoni's score (Appendix~\ref{app:identities}) and clipping at $\tau=1$ lower the per-step intrinsic time on every seed.
In panel (e), with $10$ seeds, the 99th percentile of the noise-energy entry of \S\ref{subsec:add-nongaussian} rises from $2.9$ to $5.6\times10^4$ as $\alpha$ falls from $2$ to $1$.
At $\alpha=1$ truncation at $R=2$ brings its mean from $1.5\times10^6$ to $2.2$; Catoni's score brings it to $25$.

\begin{figure}[ht]
\centering
\includegraphics[width=\linewidth]{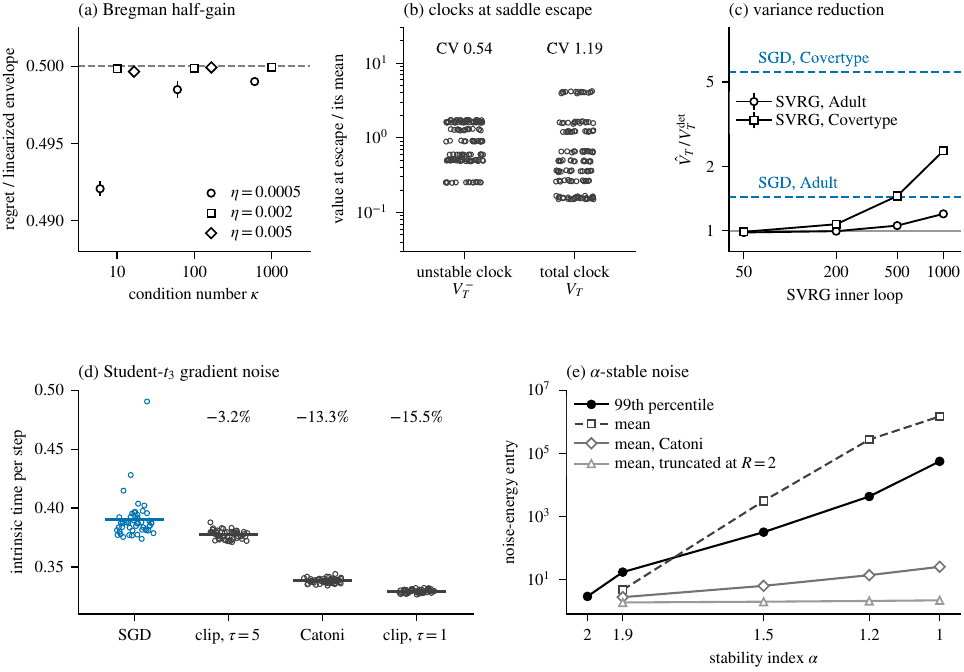}
\caption{\textbf{Further terms of the information ledger behave as its identities imply.}
(a) Realized regret over $I_1+\sum_t\eta_t^2Q_t$ over the late half of each run, for noisy SGD on quadratics with $d=100$ and $\sigma_g=0.5$, by condition number and step size. The dashed line is one half. Error bars are standard errors over seeds.
(b) The unstable clock $V_T^-$ and the total clock $V_T$ at escape from strict saddle points in $d=50$ with $\lambda_{\min}\in\{-0.1,-0.3,-1\}$, three noise scales and two initializations, each divided by its mean over the runs. CV is the coefficient of variation.
(c) Realized over deterministic intrinsic time for SVRG on logistic regression (UCI Adult and Covertype) against its inner-loop length, with SGD dashed. Error bars are $95\%$ bootstrap intervals.
(d) Per-step intrinsic time for logistic regression under Student-$t_3$ gradient noise over $T=20{,}000$ steps, for plain SGD, clipping and Catoni's score. Circles are seeds, bars their mean, and each percentage is the change of the mean against SGD.
(e) The realized noise-energy entry on a $d=10$ quadratic against the stability index of $\alpha$-stable noise: its 99th percentile, and its mean without robustification, under Catoni's score and truncated at $R=2$.}
\label{fig:single-term-checks}
\end{figure}

In Figure~\ref{fig:native-tails}, three transformer architectures ($2$ blocks at width $128$, $4$ at width $96$ and $4$ at width $128$) train on $400{,}000$ characters of text at batch $32$ for $1{,}200$ steps, with $3$ seeds each.
Each coordinate of the gradient noise is divided by its own robust scale, estimated on a held-out half of $256$ probes.
The Hill tail index of the scaled noise, pooled over coordinates, stays above $3.9$ at every tail fraction.
Along the mean-gradient direction the index, $5.39$ to $6.05$ over $2{,}048$ mini-batches, lies inside the $5.11$ to $7.23$ band of Gaussian samples of that size.
Without the scaling the pooled index falls to about $2$, as it does for Gaussian noise with the measured scales, which span $1.2$ decades.
On this substrate the per-coordinate gradient noise at batch $32$ is not heavy-tailed.
A tail index pooled over coordinates of different scales is not by itself evidence of a heavy tail.
A Gaussian with the measured covariance reproduces the variance ratio of Figure~\ref{fig:native-tails}(b) to within $18\%$, so the ratio measures correlation across coordinates.

\begin{figure}[ht]
\centering
\includegraphics[width=\linewidth]{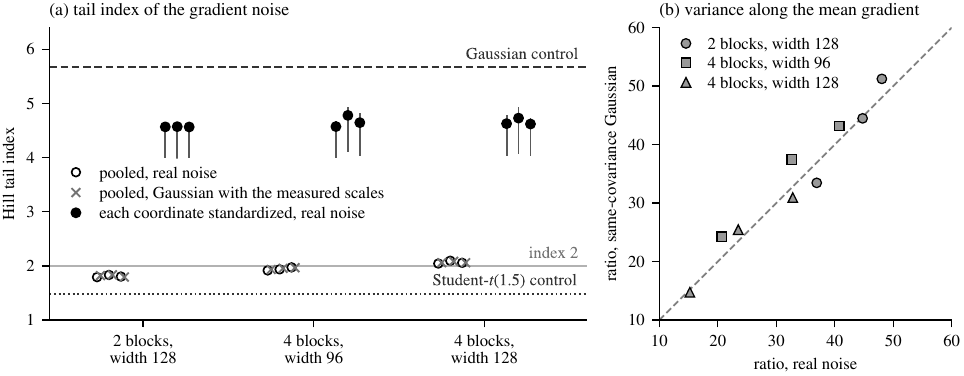}
\caption{\textbf{Scaled coordinate by coordinate, the gradient noise of a character-level transformer at batch 32 is not heavy-tailed.}
(a) Hill tail index at a $5\%$ tail fraction for three architectures, one marker per seed. Open circles pool the raw noise over coordinates, and crosses pool Gaussian noise with the measured per-coordinate scales. Filled circles pool the noise after each coordinate is divided by its own robust scale, with bars over tail fractions from $1\%$ to $10\%$. Lines mark the standardized Gaussian and Student-$t$(1.5) controls and index $2$.
(b) Ratio of the noise variance along the mean-gradient direction to its diagonal approximation, for the real noise and for Gaussian noise with the same covariance. The dashed line is equality.}
\label{fig:native-tails}
\end{figure}

\subsection{The intrinsic clock against iteration count}\label{sec:eval-vclock-fits}

If $\log V_t=a+\beta\log t$ on a fit window, a regression of log-loss on $\log V_t$ is an affine reparameterization of the regression on $\log t$ with the same $R^2$, and an offset power law fitted against $V$ returns the same forecast as one fitted against $t$.
The clock can differ from the step counter only through the part of $\log V_t$ that is not affine in $\log t$, measured on each window by the defect $1-r^2$ of the regression of $\log V_t$ on $\log t$.

\begin{figure}[ht]
\centering
\includegraphics[width=\linewidth]{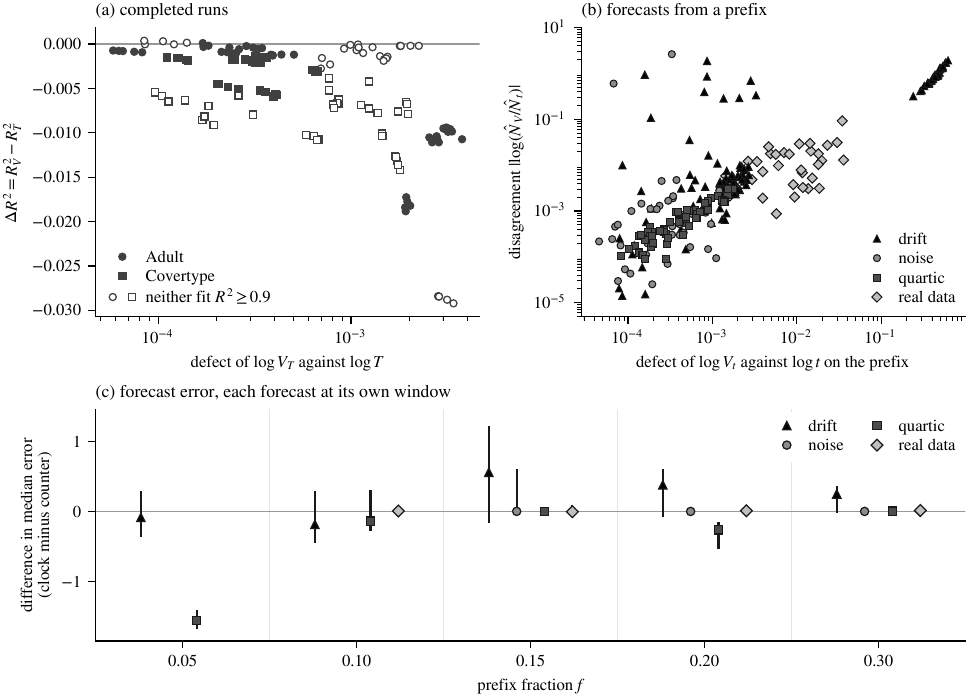}
\caption{\textbf{The intrinsic clock and its step-weighted form seldom fit or forecast the loss better than the step counter.}
Each forecast $\hat N$ estimates the step $N^\star$ at which the run reaches a target loss.
In (b) and (c) the clock is the step-weighted sum $\sum_{s\le t}\eta_s^2Q_s$.
(a) Completed runs of the curriculum sweep: $\Delta R^2=R^2_V-R^2_T$ of the two log-loss fits versus the defect $1-r^2$ of $\log V_T$ regressed on $\log T$.
Open markers are runs that neither fit describes with $R^2\ge0.9$.
(b) Held-out prefix instances: the disagreement $|\log(\hat N_V/\hat N_t)|$ of the two forecasts at a common window versus the defect on the prefix.
(c) The difference in median forecast error $|\log(\hat N/N^\star)|$, clock minus step counter, with each forecast at its own fit fraction; below zero the clock is closer to the truth.
Bars are $95\%$ paired intervals, hidden where narrower than the marker.}
\label{fig:clock-affinity}
\end{figure}

Each completed run's log-loss is fitted against $\log V_T$ and against $\log T$ over the late three-quarters of the trajectory.
The fits are compared by $\Delta R^2=R^2_V-R^2_T$.
Heteroscedastic linear regression ($d=50$, $n=10{,}000$) is trained with SGD, AdaGrad, and Adam at $\eta\in\{10^{-3},10^{-2},10^{-1}\}$, with $10$ seeds per configuration.
The mean $\Delta R^2$ is $-0.062$, $-0.007$, and $-0.041$ at the three step sizes.
Iteration count fits better in the four configurations where either fit reaches $R^2\ge0.9$.

On UCI Adult and a 50K subsample of UCI Covertype, SGD, AdaGrad-Norm, Adam, and Polyak-step at batch sizes $\{32,128,512\}$ with $5$ seeds give $120$ runs of $4{,}000$ iterations.
The median $\Delta R^2$ over the $75$ runs that either fit describes is $-0.0018$.

A curriculum knob on the same two datasets orders the examples from an i.i.d.\ shuffle to a smooth easy-to-hard ramp, under SGD, AdaGrad-Norm, and Adam with $5$ seeds, $150$ runs in all.
It raises the median of the non-stationarity statistic $S_{\mathrm{ns}}$, the coefficient of variation of the windowed gradient-norm mean, from $0.18$ to $0.74$.
On the $80$ runs that either fit describes, the median $\Delta R^2$ lies between $-0.0035$ and $-0.0015$ at every knob setting.
Re-run with its original seeds, the sweep has a defect of $\log V_T$ against $\log T$ of at most $0.004$ on all $150$ runs, with a median of $0.0004$ on Adult and $0.0003$ on Covertype over the $80$ described runs (Figure~\ref{fig:clock-affinity}(a)).

A practical form of the comparison fits $L=L_\infty+Ax^{-b}$ to a short prefix of a run, the first fraction $f$ of its logged steps, against $x=V$ and against $x=t$, and inverts each fit for the step $\hat N$ at which a target loss is reached.
In these prefix comparisons $V$ denotes the step-weighted clock $\sum_{s\le t}\eta_s^2Q_s$, the ledger's cumulative intrinsic-time cost.
Under a constant step it is proportional to the intrinsic clock of \eqref{eq:intrinsic-clock}, so the two give the same forecasts.
The score is $|\log(\hat N/N^\star)|$, where $N^\star$ is the step at which the run reaches the target loss.
It is measured on a held-out split.
The fit fraction, the share at the end of the prefix that forms the fit window, is frozen on a development split that shares no condition numbers or seeds (synthetic) and no dataset-architecture pairing (real) with the held-out split.
A comparison at a common window fits both forecasts on the window selected for the clock.
The synthetic instances come from three regimes: quadratic bowls under a decaying step with small gradient noise (drift), quadratic bowls under a constant step with heavy gradient noise (noise), and quartic bowls under a decaying step (quartic).
The real-data instances train multinomial logistic regression and a one-hidden-layer network of width $64$ on a $40{,}000$-point subsample of Covertype and on the $1{,}797$-image digits set.
Across $241$ (instance, prefix fraction) pairs from $81$ held-out instances of the synthetic drift, noise, and quartic regimes and the real-data configurations, the two forecasts at a common window disagree more where the defect is larger, at Spearman $0.81$ ($95\%$ interval $[0.72,0.88]$, resampling instances; Figure~\ref{fig:clock-affinity}(b)).
Some instances with a small defect still disagree by more than $0.1$, since the fit selects its asymptote from a discrete grid on which a small defect can change the selected value.
On the real data and in the noise and quartic regimes the defect stays below $0.04$.
In those regimes the median errors of the two forecasts at a common window agree within $0.012$ at every prefix fraction where both forecasts answer on four or more instances.
The defect is large only at the shortest drift prefix, where the window selected for the clock spans the whole prefix and $V_t$ gains most of its prefix value in the first three steps (defect $0.24$ to $0.62$).
There, at the clock's window, the clock's forecast is closer to the truth on all $32$ instances, by $0.88$ in median error ($95\%$ interval $[0.66,1.15]$).
With each forecast's fit fraction chosen separately on the development split, the paired comparisons include zero at every drift and real-data prefix (Figure~\ref{fig:clock-affinity}(c)).
On quartic bowls the clock is then ahead at $f=0.05$ and $f=0.20$, and by at most $0.003$ at $f=0.15$.
At $f=0.20$ the two forecasts coincide at a common window wherever both answer, so the difference comes from the wider window selected for the step counter.
At $f=0.05$ the step-counter fit at the clock's window declares a reachable target unreachable on every instance.
In the drift regime at $f=0.05$, the step-counter fit on the window selected for it on the development split declares a reachable target unreachable on $21$ of $32$ instances.

Each of six real-data sweeps trains logistic regression on Covertype or on the digits set, at one seed, under six optimizer settings of SGD, momentum and Adam.
Ranking the optimizers of a sweep at an equal step-weighted clock $\sum_s\eta_s^2Q_s$ runs against their final-loss order.
Over six sweeps and prefix fractions from $0.05$ to $0.30$, the median Spearman correlation lies between $-0.40$ and $-0.26$.
The median correlation of six random rankings falls that low with probability $0.05$ and $0.15$.
The common budget, the largest value of that clock reached by every optimizer, is set in every sweep by Adam at $\eta=0.003$, the setting with the smallest step.
That optimizer is measured at the end of its prefix and most others at their first logged step, so it ranks first on every sweep at $f=0.05$ and first or second at longer prefixes, but finishes fifth or sixth of six.
At an equal intrinsic clock $V_t$ of \eqref{eq:intrinsic-clock}, computed in each optimizer's own metric, the same ranking follows the final-loss order.
Its median Spearman correlation is $+0.89$ to $+0.94$.
From $f=0.10$ its top-ranked setting is the best in five of six sweeps.
Ranking by prefix loss reaches a median correlation of $+0.94$ to $+1.00$.

\subsection{Optimizers derived from the information ledger}\label{app:interventions}

The pressure-cycled controller of \S\ref{sec:pressure-cycle} runs on a synthetic potential and three real networks (Figures~\ref{fig:derived-optimizers} and~\ref{fig:real-problems}(c)).
On the potential and ResNet-20 its escape takes the large step of \eqref{eq:escape-schedule}, with an isotropic injection on the potential and without one on ResNet-20.
On the two transformers the escape is a one-shot isotropic kick.
On grokking, a curvature-targeted kick along the bottom Hessian eigenvectors is compared with the isotropic one.
The potential ($d=8$) is a two-dimensional negative-log Gaussian mixture with a shallow and a deep basin plus an ill-conditioned quadratic.
Every run starts in the shallow basin.
On grokking, a two-layer transformer learns addition modulo $97$ from half the table, the setting of \cite{power2022grokking}, with AdamW at the learning rate and weight decay of that setting as the baseline.
A vision transformer of six blocks at width $192$ trains from scratch on CIFAR-10 for $25$ epochs, and ResNet-20 on CIFAR-10 for $200$ epochs after a high-learning-rate start shared by every schedule.
On both transformers every optimizer clips gradients at unit norm.
Intervals are paired percentile-bootstrap intervals over seeds.

On the potential the controller's output iterate reaches $f-f^\star<10^{-3}$ on all $16$ seeds, in a median $553$ steps, while no baseline's best iterate reaches it on any seed.
The controller's first escape finds the deep basin.
Its second begins at the minimum, where no lower value exists to end it, so the last iterate sits in escape mode on every seed, at a mean $f-f^\star$ of $4.44$.
On grokking ($5$ seeds, evaluated every $100$ steps) the escape delays AdamW by one evaluation interval on four seeds and leaves the fifth unchanged.
SGD at the fixed step $\eta=0.05$ groks $720$ steps earlier than AdamW ($[-920,-540]$).
Replacing the fixed step by the pressure-target rule moves the grokking step by at most $200$ steps on any seed (mean $+20$).
On the vision transformer ($3$ seeds) the escape lowers AdamW's best validation accuracy on every seed, by $0.009$ to $0.061$. On the same network pressure-target SGD matches the fixed step $\eta=0.1$.
Kicks along the Lanczos bottom eigenvectors do not speed grokking against isotropic kicks or none ($3$ seeds, Figure~\ref{fig:derived-optimizers}(d)).
On ResNet-20 ($2$ seeds) the controller escapes $82$ and $53$ times and trains worst of the four schedules (Figure~\ref{fig:real-problems}(c)).

\begin{figure}[ht]
\centering
\includegraphics[width=\linewidth]{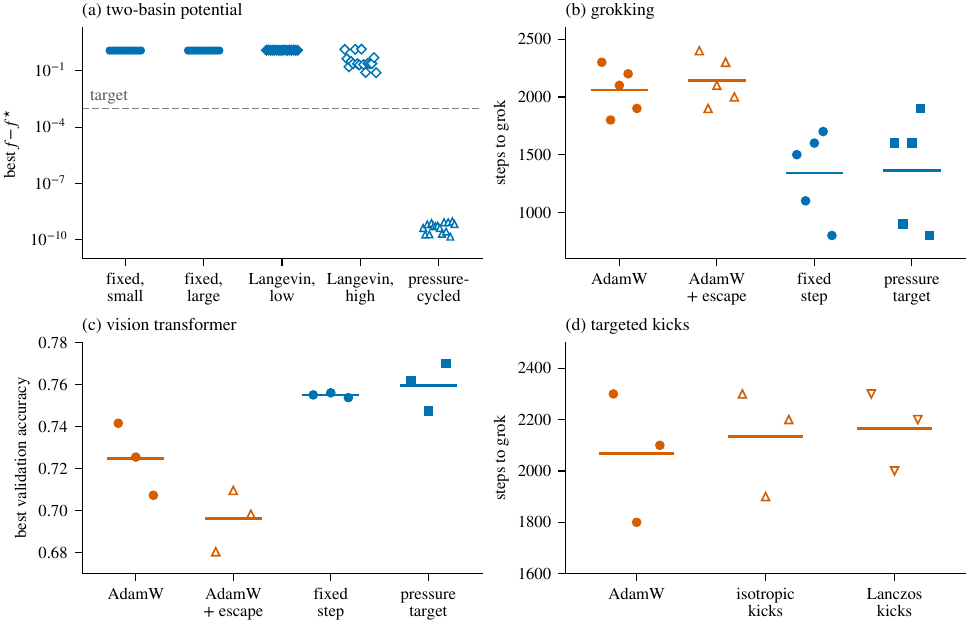}
\caption{\textbf{The pressure-cycled controller finds the deep basin of a synthetic potential on every seed but slows or degrades training on real networks.}
(a) Best-iterate $f-f^\star$ on the two-basin potential for fixed-step and constant-Langevin SGD and for the controller. The dashed line is the target.
(b) Steps to grok for AdamW with and without the escape, and for SGD at a fixed and at a pressure-target step.
(c) Best validation accuracy of the vision transformer on CIFAR-10 for the same four.
(d) Steps to grok for AdamW without kicks and with isotropic or curvature-targeted kicks.
Markers are seeds and bars their means. Color gives the base optimizer, and an open triangle marks a run with the escape phase.}
\label{fig:derived-optimizers}
\end{figure}

On ResNet-20 the trigger computes $\rho_t$ \eqref{eq:rho-trigger} from smoothed consecutive mini-batch losses, so its sign is noise.
The per-step return is negative on $46$--$49\%$ of logged steps for every schedule.
On the two transformers the trigger is a patience rule on the running best training loss.
On the vision transformer it fires although the mean training loss falls in every epoch.
On full-batch grokking it first fires only after the network fits its training set, and more often after grokking than before ($3.9$--$4.9$ against $0$--$3.5$ kicks per thousand steps).
On a deterministic landscape the trigger measures a true objective decrease, but there it does not distinguish a stuck run from a finished one: the objective stops decreasing at a converged minimum just as on a plateau.
The pressure-target step reduces to its stability cap: on grokking it sits at the cap on $90$--$97\%$ of steps, so at matched scale it coincides with the fixed step on both transformers.
Without the unit-norm clip neither the fixed step nor the pressure-target rule groks on any seed.
On ResNet-20 the controller's descent step sits at its floor of $10^{-4}$ on $70\%$ and $81\%$ of descent steps after step $1{,}000$.

\subsection{Sharpness and the generalization gap}\label{app:cifar-forensics}

CIFAR-10 \citep{krizhevsky2009} is used with its standard split, per-channel normalization, random crops with $4$-pixel padding, and horizontal flips.
ResNet-20 \citep{kaiming2016deep} ($272{,}474$ parameters) and Wide-ResNet-28-2 and 28-4 ($1.47$M and $5.85$M) train for $200$ epochs on a cosine schedule unless a row of Figure~\ref{fig:sharpness-gap} says otherwise.
Each endpoint is probed on a fixed image subset, $512$ images on the Wide-ResNets, with a Lanczos top eigenvalue, a Hutchinson trace, and $-\log\det\widetilde H$, the negative log-determinant over the top-$K$ Ritz values with negative values clamped to a small floor (\S\ref{rem:logdet-practical}).
The gap is test minus training cross-entropy.

Along the batch axis every statistic has the sign the flatness bound of \S\ref{sec:flatness} implies, at both depths (Figure~\ref{fig:sharpness-gap}).
On ResNet-20 the batch-$8192$ endpoints are sharper than the batch-$32$ ones ($\lambda_{\max}$ $613$ against $94$) and have the larger gap ($0.399$ against $0.229$).
On Wide-ResNet-28-4, batch $8192$ against $128$ gives $\lambda_{\max}$ $19.0$ against $9.3$ and gap $0.306$ against $0.180$, with learning rates scaled with the square root of the batch size.
On ResNet-20 at one batch size, $20$ seeds show no correlation.
Across them the gap has a standard deviation of $2.5\%$ of its mean and the test error a standard deviation of $16$ of the $10{,}000$ test images, below one model's binomial sampling error of $26$ images.
A two-sided $5\%$ Spearman test on $20$ runs detects a correlation of $0.6$ with probability $0.8$.
In the pooled ResNet-20 sweep of $45$ runs the bootstrap intervals of all three statistics exclude zero.
That sweep varies batch size together with the optimizer, so its correlations do not isolate the optimizer axis.

\begin{figure}[ht]
\centering
\includegraphics[width=\linewidth]{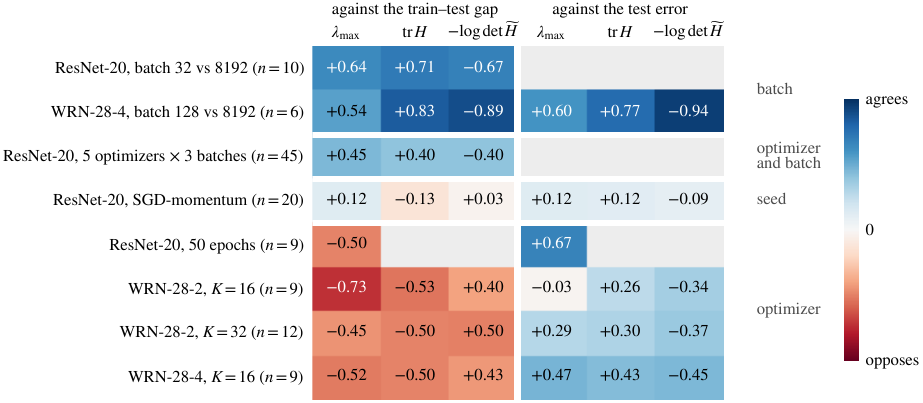}
\caption{\textbf{Sharpness tracks the generalization gap with the sign the flatness bound implies along the batch axis, but not across optimizers.}
Each entry is the Spearman correlation of a sharpness statistic with the train--test gap or with the test error over the $n$ endpoints of one sweep.
Its color is that correlation times the sign the bound implies ($+$ for $\lambda_{\max}$ and $\tr H$, $-$ for $-\log\det\widetilde H$), so blue agrees with the bound and red opposes it.
Gray entries were not measured.
On ResNet-20 at $50$ epochs the gap is the final test loss minus the last logged training loss.
$K$ is the Lanczos rank.}
\label{fig:sharpness-gap}
\end{figure}

Across optimizers the bare statistics are compared at different endpoints.
They also depend on the coordinates in which each endpoint is written.
At a fixed Wide-ResNet-28-2 endpoint a function-preserving rescaling, which leaves the test loss unchanged, moves $-\log\det\widetilde H$ by up to $99$ nats, as far as the distance between adjacent optimizers in a $241$-nat spread across the $12$ endpoints.
At both Wide-ResNet widths every such statistic takes the opposite sign against the gap, none of the intervals excluding zero.
One optimizer accounts for the sign: without Lion, eight of the nine cross-optimizer correlations take the implied sign.
Lion is the flattest optimizer in all three sweeps, with optimizer-mean $\lambda_{\max}$ of $0.6$--$1.7$ against $9$--$103$ for SGD-momentum and Adam.
It has the widest gap ($0.52$--$0.58$ against $0.18$--$0.32$) at a test error no higher than Adam's.
It fits its training set most tightly, with training losses of $6\times10^{-6}$ to $3.5\times10^{-4}$ against about $10^{-3}$ for SGD-momentum and $3$--$4\times10^{-3}$ for Adam.
At a correctly classified training point the curvature of the cross-entropy in the network outputs, $1-\sum_kp_k^2$, lies between $\ell-\ell^2/2$ and $2\ell$ for the example's loss $\ell$, so it scales with the training loss.
Across the endpoints each statistic orders the optimizers by training loss (Spearman $0.82$ to $0.97$ in magnitude).
The training loss orders them against the gap ($-0.47$ to $-0.52$).
Along the Wide-ResNet-28-4 batch axis the training losses agree within about $12\%$ on average and $22\%$ across endpoints.
Along that axis every statistic has the implied sign.
In a ResNet-20 study of momentum at $50$ epochs, plain SGD is the sharpest optimizer ($\lambda_{\max}$ $960$--$1{,}097$ against $158$--$202$) and has the smallest gap, measured against the training loss logged in the last iterations.
Momentum lowers test error there from $11.5\%$ to $9.1$--$9.2\%$.
Against test error eight of the nine cross-optimizer Wide-ResNet correlations take the implied sign, none excluding zero.

In function space, where a function-preserving rescaling moves nothing, the cross-optimizer correlation on ResNet-20 takes the implied sign at batch $32$ and the opposite sign at batch $8192$.
A panel trains SGD, heavy-ball, and Nesterov at batches $32$ and $8192$ for $50$ epochs with five seeds.
A function-preserving rescaling moves none of its function-space observables in any of the $30$ runs and the natural-gradient norm by at most $8.2\%$, against up to $1122\%$ for the log-spectrum of $\mathcal F^{-1}H$ estimated with $10$ conjugate-gradient iterations.
The function-space observables are built from $\bar H_L$, the input-averaged Hessian of the loss in the network outputs.
On the $15$ batch-$32$ runs they correlate with the gap with the implied sign: $+0.58$ for $\log\det\bar H_L$ and $+0.60$ for its top eigenvalue.
The trace of $\bar H_L$ gives $+0.49$, the natural-gradient norm $+0.41$, and the top-$128$ Hessian log-determinant $+0.66$ on the same runs.
The correlations come from the difference between optimizers.
Within an optimizer five seeds give none that is reliable.
On the $15$ batch-$8192$ runs, which stop at training losses of $0.62$ to $1.28$, the three observables built from $\bar H_L$ correlate with the gap at $-0.60$ to $-0.76$.
At both batch sizes these observables track the training loss (Spearman $+0.69$ to $+0.94$), while the gap's correlation with the training loss changes sign, from $+0.44$ at batch $32$ to $-0.55$ at batch $8192$.
Fresh Wide-ResNet-28-4 runs, one per optimizer, keep Lion the flattest optimizer in function space as well.
Lion's $\tr\bar H_L$ is $1.6\times10^{-5}$ against $1.8\times10^{-3}$ for SGD-momentum and $5.5\times10^{-3}$ for Adam.
Lion's gap is $0.51$ against $0.19$ and $0.32$.

At the $12$ Wide-ResNet-28-2 endpoints of the $K=32$ sweep, with $N=50{,}000$, $\delta=0.05$, $\alpha=\sigma^2=1$, and about $78{,}200$ steps, a rank-$32$ spectrum does not determine the relative entropy \eqref{eq:sharpness-kl} of the certificate.
The probed Hessians have negative eigenvalues.
For a positive-definite surrogate the floor over the prior scale depends on all $d$ eigenvalues.
The term $\|\theta_{T+1}\|^2/(2\sigma^2)$ is left out throughout, since the endpoints' parameter norms were not recorded.
Setting the $d-32$ unprobed eigenvalues to one and replacing the inverse trace $\tr(H^{-1})$ by its lower bound $d/\lambda_{\max}$ leaves the common term $\half d(\log(\alpha T\sigma^2)-1)\approx7.5\times10^6$ nats.
That term puts the bound \eqref{eq:pac-bound} within $1.3\times10^{-4}$ of $8.68$ at all $12$ endpoints.
A flat spectrum at the prior's scale, where the exact bound is $0.012$, evaluates to $9.09$ by the same rule, so the value does not measure the endpoints.
Two terms vary across the endpoints: the rank-$32$ basin term $\half\log\det\widetilde H$ over $[-119,+1.5]$ nats and the inverse-trace term over $[0.12,31.5]$ nats.
The certificate ranks the $12$ endpoints as the basin term does, decreasing in $-\log\det\widetilde H$, so it orders the optimizers against the gap ($\rho(\KL,\mathrm{gap})=-0.50$).

\section{Proofs of deferred statements}\label{app:proofs-deferred}

\begin{proof}[Proof of Proposition~\ref{prop:gaussian-tilt}]
The moment generating function of a Gaussian gives
\[
Z_t=\E_{\theta\sim\calN(\theta_t,\Sigma)}\exp(-\eta_t g_t^\top\theta)
=\exp\left(-\eta_t g_t^\top\theta_t+\frac{\eta_t^2}{2}g_t^\top\Sigma g_t\right)
\]
Completing the square in the density of $q_t(d\theta)\exp(-\eta_t g_t^\top\theta)$ yields a Gaussian with covariance $\Sigma$ and mean $\theta_t-\eta_t\Sigma g_t$.
\end{proof}

\begin{proof}[Proof of Theorem~\ref{thm:sgd-ledger}]
Since the comparator and the iterate share the covariance $\Sigma$, $I_t(u)=\KL(\rho_u\|q_t)=\half(u-\theta_t)^\top\Sigma^{-1}(u-\theta_t)$. Expanding $I_{t+1}(u)$ at the updated iterate $\theta_{t+1}=\theta_t-\eta_t\Sigma g_t$,
\[
I_{t+1}(u)
=\half(u-\theta_t+\eta_t\Sigma g_t)^\top\Sigma^{-1}(u-\theta_t+\eta_t\Sigma g_t)
=I_t(u)+\eta_t g_t^\top(u-\theta_t)+\half\eta_t^2 g_t^\top\Sigma g_t
\]
using $\Sigma^\top=\Sigma$ and $(\Sigma g_t)^\top\Sigma^{-1}=g_t^\top$. Rearranging and writing $Q_t=\half g_t^\top\Sigma g_t$ gives $I_t(u)-I_{t+1}(u)=\eta_t g_t^\top(\theta_t-u)-\eta_t^2Q_t$, i.e.\ \eqref{eq:weighted-ledger}; dividing by $\eta_t$ gives \eqref{eq:one-step-ledger} and summing gives \eqref{eq:sum-weighted-ledger}.
\end{proof}

\begin{proof}[Proof of Theorem~\ref{thm:loss-identity}]
Apply \eqref{eq:bregman-exact} with $f=f_t$ and $\theta=\theta_t$, then substitute into \eqref{eq:sum-weighted-ledger}.
\end{proof}

\begin{proof}[Proof of Corollary~\ref{cor:stochastic-identity}]
Take conditional expectations given $\calF_t$ in the pathwise identity \eqref{eq:weighted-ledger}. Since $\theta_t$ is $\calF_t$-measurable and $u$ is a fixed constant, $I_t(u)=\half\|u-\theta_t\|_{\Sigma^{-1}}^2$ is $\calF_t$-measurable and passes through the expectation unchanged. On the right side of \eqref{eq:weighted-ledger}, the term $I_{t+1}(u)=\half\|u-\theta_{t+1}\|_{\Sigma^{-1}}^2$ depends on $g_t$ through the update $\theta_{t+1}=\theta_t-\eta_t\Sigma g_t$, so its expectation $\E[I_{t+1}(u)\mid\calF_t]$ is taken over $g_t$. On the left side, $\E[\ip{g_t}{\theta_t-u}\mid\calF_t]=\ip{\E[g_t\mid\calF_t]}{\theta_t-u}=\ip{\nabla F(\theta_t)}{\theta_t-u}$ by unbiasedness and $\calF_t$-measurability of $\theta_t$; applying \eqref{eq:bregman-exact} with $\theta=\theta_t$, $f=F$ gives $\ip{\nabla F(\theta_t)}{\theta_t-u}=F(\theta_t)-F(u)+B_F(u,\theta_t)$. For the intrinsic-time term, the Gaussian-quadratic-form identity and the variance decomposition give
\[
\E[Q_t\mid\calF_t]
=\half\E[g_t^\top\Sigma g_t\mid\calF_t]
=\half\nabla F(\theta_t)^\top\Sigma\nabla F(\theta_t)+\half\tr(\Sigma C_t)
\]
where the second equality uses $\E[\|g_t\|_\Sigma^2\mid\calF_t]=\|\E[g_t\mid\calF_t]\|_\Sigma^2+\tr(\Sigma\Cov(g_t\mid\calF_t))$. Taking unconditional expectation (the tower property gives $\E[I_t(u)-\E[I_{t+1}(u)\mid\calF_t]]=\E[I_t(u)]-\E[I_{t+1}(u)]$) and summing $t=1,\dots,T$ telescopes the information terms to $I_1(u)-\E[I_{T+1}(u)]$ and gives the claimed identity.
\end{proof}

\begin{proof}[Proof of Proposition~\ref{prop:tracking-ledger}]
Apply the one-step identity \eqref{eq:weighted-ledger} at round $t$ with the comparator $u_t$ and sum over $t$:
\[
\sum_{t=1}^T\eta_t\ip{g_t}{\theta_t-u_t}
=\sum_{t=1}^T\bigl[I_t(u_t)-I_{t+1}(u_t)\bigr]+\sum_{t=1}^T\eta_t^2Q_t
\]
Re-indexing the first sum,
\[
\sum_{t=1}^T\bigl[I_t(u_t)-I_{t+1}(u_t)\bigr]
=I_1(u_1)-I_{T+1}(u_T)+\sum_{t=1}^{T-1}\bigl[I_{t+1}(u_{t+1})-I_{t+1}(u_t)\bigr]
\]
and the trailing sum is $D_T$, which gives \eqref{eq:tracking-ledger}. The closed form for each drift summand is the difference of two squared $\Sigma^{-1}$-norms sharing the center $\theta_{t+1}$: $I_{t+1}(u_{t+1})-I_{t+1}(u_t)=\half\ip{u_{t+1}-u_t}{u_{t+1}+u_t-2\theta_{t+1}}_{\Sigma^{-1}}$. Cauchy--Schwarz in the $\Sigma^{-1}$ inner product bounds each summand by $\half\|u_{t+1}-u_t\|_{\Sigma^{-1}}\,\|u_{t+1}+u_t-2\theta_{t+1}\|_{\Sigma^{-1}}$, and summing over $t$ gives the stated bound on $|D_T|$.
\end{proof}

\begin{proof}[Proof of Proposition~\ref{prop:zo-moments}]
The mean is $\E[\ip{g_t}{\nu_t}\nu_t]=\E[\nu_t\nu_t^\top]g_t=Cg_t$, using only the second moment, so \eqref{eq:zo-mean} holds for any $\calP$. The finite-$\mu$ form is Stein's identity: $\E[f(\theta_t+\mu \nu_t)\nu_t]=\mu C\,\E[\nabla f(\theta_t+\mu \nu_t)]=\mu C\nabla f_\mu(\theta_t)$ for $\nu_t\sim\calN(0,C)$, and $\E[f(\theta_t)\nu_t]=0$. For the second moment with $\nu_t\sim\calN(0,C)$, Isserlis' theorem gives $\E[\nu_i\nu_j(g^\top \nu)^2]=C_{ij}(g^\top Cg)+2(Cg)_i(Cg)_j$, which is \eqref{eq:zo-second}; \eqref{eq:zo-intrinsic} is $\half\tr(\Sigma\,\E[\hat g_t\hat g_t^\top])$.
\end{proof}

\begin{proof}[Proof of Proposition~\ref{prop:saddle}]
In the eigenbasis, $z_{t+1,j}=(1-\eta\lambda_j)z_{t,j}$ deterministically, which proves \eqref{eq:saddle-det}. With gradient noise $z_{t+1,j}=(1-\eta\lambda_j)z_{t,j}-\eta\xi_{t,j}$.
Squaring, multiplying by $1/2$, and taking conditional expectation gives \eqref{eq:saddle-stoch}.
\end{proof}

\begin{proof}[Proof of Proposition~\ref{prop:proper-gaussian-likelihood}]
The Kalman gain for the likelihood covariance $R_t=\chi_t(1-\chi_t)^{-1}\Sigma$ is $K_t=\Sigma(\Sigma+R_t)^{-1}=(1-\chi_t)I$.
Therefore the posterior mean is
\[
\theta_t+K_t(y_t-\theta_t)=\theta_t-(1-\chi_t)\frac{\eta_t}{1-\chi_t}\Sigma g_t=\theta_t-\eta_t\Sigma g_t
\]
The posterior covariance is $(I-K_t)\Sigma=\chi_t\Sigma$. Adding independent process noise with covariance $(1-\chi_t)\Sigma$ restores covariance $\Sigma$.

For the limiting statement, expand the Gaussian likelihood:
\[
-\half(\theta-y_t)^\top R_t^{-1}(\theta-y_t)
=-\half\theta^\top R_t^{-1}\theta+y_t^\top R_t^{-1}\theta+\text{const}
\]
Since $R_t^{-1}=(1-\chi_t)\chi_t^{-1}\Sigma^{-1}$, the quadratic term vanishes as $\chi_t\uparrow1$. Also
\[
y_t^\top R_t^{-1}\theta
=\left(\theta_t-\frac{\eta_t}{1-\chi_t}\Sigma g_t\right)^\top
\frac{1-\chi_t}{\chi_t}\Sigma^{-1}\theta
\to -\eta_t g_t^\top\theta
\]
Thus the likelihood factor converges to the linear-score factor $\exp(-\eta_t\ip{g_t}{\theta})$.
\end{proof}

\begin{proof}[Proof of Proposition~\ref{prop:mixture-prior}]
Multiply each component by $\exp(-\eta\ip{g}{\theta})$ and normalize. For component $k$, Proposition~\ref{prop:gaussian-tilt} gives
\[
\calN(\mu_k,\Sigma)(d\theta)\,e^{-\eta\ip{g}{\theta}}
=Z_k\,\calN(\mu_k-\eta\Sigma g,\Sigma)(d\theta),
\qquad
\log Z_k=-\eta\ip{g}{\mu_k}+\half\eta^2g^\top\Sigma g
\]
so the reweighted mixture is $\sum_kw_kZ_k\,\calN(\mu_k-\eta\Sigma g,\Sigma)$ up to normalization; dividing by $\sum_jw_jZ_j$ yields the stated posterior and the weights \eqref{eq:mixture-weights}. The per-round normalizer is $Z^{\mathrm{mix}}=\sum_kw_kZ_k$, and normalizers of successive linear-score updates compose multiplicatively, so the cumulative cross-entropy correction over a run is $-\sum_t\log\bigl(\sum_kw_k^{(t)}Z_k^{(t)}\bigr)$, the stated log-sum-exp of per-component free energies.
\end{proof}

\begin{proof}[Proof of Proposition~\ref{prop:cheby-cyclic}]
\textbf{Step 1 (Iterate factorization).} In eigencoordinates the deterministic gradient step $z_{t,i}=(1-\eta_t\lambda_i)z_{t-1,i}=(1-\lambda_i/r_t)z_{t-1,i}$ telescopes to
\[
z_{T,i}=z_{0,i}\cdot\prod_{t=1}^T\!\left(1-\frac{\lambda_i}{r_t}\right)
=z_{0,i}\cdot\frac{\prod_{t=1}^T(r_t-\lambda_i)}{\prod_{t=1}^T r_t}
\]
The numerator is $(-1)^T$ times the monic polynomial in $\lambda_i$ of degree $T$ with roots $\{r_t\}_{t=1}^T$. By construction \eqref{eq:cheby-roots} those are the affine Chebyshev nodes on $[\lambda_{\min},\lambda_{\max}]$, so
\[
\prod_{t=1}^T(r_t-\lambda_i)
=\left(\frac{\lambda_{\max}-\lambda_{\min}}{4}\right)^T\!\cdot 2\,T_T\!\left(\frac{\lambda_{\max}+\lambda_{\min}-2\lambda_i}{\lambda_{\max}-\lambda_{\min}}\right)
\]
(the standard monic-Chebyshev factorization, with the normalization restored by the leading coefficient $2^{T-1}$ of $T_T$ rescaled to the interval $[\lambda_{\min},\lambda_{\max}]$). The denominator $\prod r_t$ evaluates the same identity at $\lambda_i=0$, giving the divisor $T_T((\lambda_{\max}+\lambda_{\min})/(\lambda_{\max}-\lambda_{\min}))$. Combining numerator and denominator yields \eqref{eq:cheby-iterate}.

\textbf{Step 2 (Ledger identity).} The pathwise identity \eqref{eq:weighted-ledger} of Theorem~\ref{thm:sgd-ledger} holds on every round with $\Sigma=I$ and arbitrary $\eta_t$, and summing it gives the linearized form $\sum_t\eta_t\ip{g_{t-1}}{\theta_{t-1}-\theta_\star}=I_0(\theta_\star)-I_T(\theta_\star)+\sum_t\eta_t^2Q_{t-1}$. Substituting the Bregman identity \eqref{eq:bregman-exact} for each round at $\theta=\theta_{t-1}$, $f=F$, $u=\theta_\star$ (which rewrites $\ip{g_{t-1}}{\theta_{t-1}-\theta_\star}$ as $F(\theta_{t-1})-F(\theta_\star)+B_F(\theta_\star,\theta_{t-1})$ on the left, moving the curvature gain onto the objective-level regret) recovers \eqref{eq:cheby-ledger}; this is the Chebyshev-cycle specialization of \eqref{eq:loss-path} restricted to $f_t\equiv F$. The Bregman term sits on the left with the objective regret, and the intrinsic-time cost stays on the right.

\textbf{Step 3 (Geometric-mean contraction).} On the spectral boundary $\lambda_i=\lambda_{\min}$, the affine argument of $T_T$ in \eqref{eq:cheby-iterate} equals $1$, so $T_T(1)=1$ and the contraction factor reduces to $1/T_T(\xi)$ with $\xi=(\lambda_{\max}+\lambda_{\min})/(\lambda_{\max}-\lambda_{\min})=(\kappa+1)/(\kappa-1)$. Writing $\xi=\cosh\phi$ with $\phi=2\,\mathrm{arctanh}(1/\sqrt\kappa)$ gives $T_T(\xi)=\cosh(T\phi)$, so the per-round geometric-mean factor along this eigencoordinate is
\[
\left(\frac{1}{T_T(\xi)}\right)^{1/T}=\left(\cosh(T\phi)\right)^{-1/T}\xrightarrow[T\to\infty]{}\,e^{-\phi}=\frac{\sqrt\kappa-1}{\sqrt\kappa+1}=\rho_{\mathrm{cyc}}
\]
The same limit holds for $\lambda_i=\lambda_{\max}$ (sign flip in the Chebyshev argument). For $\lambda_i$ in the interior of $[\lambda_{\min},\lambda_{\max}]$, $|T_T((\lambda_{\max}+\lambda_{\min}-2\lambda_i)/(\lambda_{\max}-\lambda_{\min}))|\le 1$ for all $T$, so $|z_{T,i}|\le|z_{0,i}|/T_T(\xi)$ in every coordinate and $I_T/I_0\le T_T(\xi)^{-2}$. If $z_0$ has a nonzero component $z_{0,j}$ on an endpoint coordinate, then also $I_T/I_0\ge (z_{0,j}^2/(2I_0))\,T_T(\xi)^{-2}$, and taking the $1/(2T)$ power of both bounds gives \eqref{eq:cheby-geomean}.
\end{proof}

\begin{proof}[Proof of Proposition~\ref{prop:zo-second-diff}]
Equation~\eqref{eq:zo-second-diff} is the second-order Taylor expansion, with the odd terms canceling under the antithetic sum and the remainder vanishing identically for quadratic $f$. Equation~\eqref{eq:zo-hutch} is Isserlis' theorem: $\E[\nu_i\nu_j\nu_k\nu_l]=\delta_{ij}\delta_{kl}+\delta_{ik}\delta_{jl}+\delta_{il}\delta_{jk}$ gives $\E[(\nu^\top H\nu)\nu_i^2]=\sum_{jk}H_{jk}\E[\nu_j\nu_k\nu_i^2]=\tr(H)+2H_{ii}$; summing over $i$ gives $(d+2)\tr(H)$.
\end{proof}

\begin{proof}[Proof of Proposition~\ref{prop:surface-quadratic}]
\textbf{Step 1 (Diagonalization).}
Write $H=U\Lambda U^\top$ with $\Lambda=\diag(\lambda_1,\dots,\lambda_d)$, $\lambda_i>0$, and set $z_t=U^\top(\theta_t-\theta_\star)$.  Under fixed-rate gradient descent with $\Sigma=I$ and step $\eta$, each coordinate satisfies $z_{t+1,i}=(1-\eta\lambda_i)z_{t,i}$.
Stability requires $0<\eta\lambda_i<2$ for all $i$; the choice $\eta_\star=2/(\lambda_{\max}+\lambda_{\min})$ minimizes $\max_i|1-\eta\lambda_i|$ over the spectrum, giving the standard Chebyshev-optimal step.

\textbf{Step 2 (Intrinsic-time sum).}
Since $Q_t=\half\|g_t\|^2=\half\|\Lambda z_t\|^2=\half\sum_i\lambda_i^2 z_{t,i}^2$ and $z_{t,i}=(1-\eta\lambda_i)^tz_{0,i}$,
\[
\sum_{t=0}^{T-1}\eta^2Q_t
=\frac{\eta^2}{2}\sum_i\lambda_i^2 z_{0,i}^2\sum_{t=0}^{T-1}(1-\eta\lambda_i)^{2t}
=\frac{\eta^2}{2}\sum_i\lambda_i^2 z_{0,i}^2\cdot\frac{1-(1-\eta\lambda_i)^{2T}}{\eta\lambda_i(2-\eta\lambda_i)}
\]
As $T\to\infty$ (and using $0<\eta\lambda_i<2$ for stability), this is a convergent geometric series in each coordinate, so the cumulative intrinsic-time cost is finite and uniformly bounded in $T$:
\[
\sum_{t=0}^{\infty}\eta^2Q_t
=\frac{\eta}{2}\sum_i\frac{\lambda_i}{2-\eta\lambda_i}\,z_{0,i}^2
=\half\sum_i\frac{\eta\lambda_i}{2-\eta\lambda_i}\,z_{0,i}^2
\]
Each summand is finite at $\eta_\star=2/(\lambda_{\max}+\lambda_{\min})$ because $2-\eta_\star\lambda_i>0$ for every eigenvalue in the stable range; the cost is therefore a fixed $O(1)$ quantity that does not grow with the horizon.

\textbf{Step 3 (Net path term).}
The first-order Bregman remainder is $B_F(\theta_\star,\theta_t)=\half\|\theta_t-\theta_\star\|_H^2=\half\sum_i\lambda_i z_{t,i}^2$, and $\sum_{t=0}^{\infty}\eta B_F(\theta_\star,\theta_t)=\half\sum_i z_{0,i}^2/(2-\eta\lambda_i)$ is likewise a convergent geometric sum.  The two appear in $\calK_T(\varepsilon)$ with opposite signs, so the relevant quantity is the net path term, which has the closed form
\[
\sum_{t=0}^{\infty}\eta^2Q_t-\sum_{t=0}^{\infty}\eta B_F(\theta_\star,\theta_t)
=\half\sum_i\frac{\eta\lambda_i-1}{2-\eta\lambda_i}\,z_{0,i}^2
\]
a finite $O(1)$ quantity (signed, of either sign coordinate-by-coordinate). It is not dominated by the terminal log-determinant of Step 4 once the spectrum is ill-conditioned; see the discussion following the proposition.

\textbf{Step 4 (Terminal relative entropy and feasibility).}
Take $q_T=\calN(\theta_T,\Sigma_\star)$ with $\Sigma_\star=U\diag\bigl(1/(\beta\lambda_i+\tau^{-2})\bigr)U^\top$, the free-energy-optimal covariance \eqref{eq:free-energy-cov}. Then $\E_{q_T}F-F_\star=\half(\theta_T-\theta_\star)^\top H(\theta_T-\theta_\star)+\half\sum_i\lambda_i/(\beta\lambda_i+\tau^{-2})$, and since $\theta_T\to\theta_\star$ the strict feasibility hypothesis makes $q_T$ admissible for all large $T$. The relative entropy to $\pi=\calN(0,\tau^2I)$ is
\[
\KL(q_T\|\pi)
=\frac{\|\theta_T\|^2}{2\tau^2}+\half\sum_{i=1}^d\left[\log(1+\beta\tau^2\lambda_i)+\frac{1}{1+\beta\tau^2\lambda_i}-1\right]
\le\frac{\|\theta_T\|^2}{2\tau^2}+\half\sum_{i=1}^d\log(1+\beta\tau^2\lambda_i)
\]
Adding the net path term of Step 3, which the fixed-rate policy at $\eta_\star$ attains as $T\to\infty$, and letting $\theta_T\to\theta_\star$ gives the stated bound on $\limsup_{T\to\infty}\calK_T(\varepsilon)$.
\end{proof}

\begin{proof}[Proof of Corollary~\ref{cor:sam-ledger}]
The pathwise identity \eqref{eq:weighted-ledger} of Theorem~\ref{thm:sgd-ledger} holds for any score $c_t$ used to drive the update; in particular, applied with the score $\check c_t(\theta)=\ip{\check g_t}{\theta}$ and update \eqref{eq:sam-update} it gives
\[
\eta_t\ip{\check g_t}{\theta_t-u}=I_t(u)-I_{t+1}(u)+\eta_t^2\cdot\half\check g_t^\top\Sigma\check g_t
\]
For the closed form of $\half\check g_t^\top\Sigma\check g_t$, write $\check g_t=g_t+\rho_t\bar H_t\hat g_t$ from the fundamental theorem of calculus applied to $\nabla f_t$ along the segment $\theta_t+s\rho_t\hat g_t$, $s\in[0,1]$ (exact, since $\bar H_t$ is the path-averaged Hessian along that segment). Expanding the quadratic and using $\bar H_t^\top=\bar H_t$ gives
\[
\half\check g_t^\top\Sigma\check g_t
=\half g_t^\top\Sigma g_t
+\rho_t\hat g_t^\top\bar H_t\Sigma g_t
+\half\rho_t^2\hat g_t^\top\bar H_t\Sigma\bar H_t\hat g_t
\]
Substituting $\hat g_t=g_t/\|g_t\|$ in the last two terms gives \eqref{eq:sam-Q}. (When $g_t=0$, $\hat g_t=0$, $\check g_t=g_t=0$, and the corollary reduces to a vacuous step.)
\end{proof}

\begin{proof}[Proof of Theorem~\ref{thm:add-noisy-ledger}]
Let \(e_t=\theta_t-u\).  Since \(\theta_{t+1}-u=e_t-\eta_t\Sigma g_t+\xi_t\), \(I_{t+1}(u)=\half\|e_t-\eta_t\Sigma g_t+\xi_t\|_{\Sigma^{-1}}^2\).
Expanding the square and rearranging gives \eqref{eq:add-noisy-ledger-expanded}.  The compact form \eqref{eq:add-noisy-ledger} follows from \(\bar\theta_{t+1}-u=e_t-\eta_t\Sigma g_t\).  The conditional expectation removes the cross term and replaces \(\half\|\xi_t\|_{\Sigma^{-1}}^2\) by \(\half\tr(\Sigma^{-1}\Omega_t)\).
\end{proof}

\begin{proof}[Proof of Corollary~\ref{cor:add-noisy-objective-ledger}]
Use $\ip{\nabla F(\theta_t)}{\theta_t-u} =F(\theta_t)-F(u)+B_F(u,\theta_t)$ and subtract \(\eta_t\ip{\varepsilon_t}{\theta_t-u}\) from the right side of \eqref{eq:add-noisy-ledger}.
Taking conditional expectations gives \eqref{eq:add-noisy-objective-expected}; the identity \eqref{eq:add-expected-gradient-clock} follows by expanding \(\E[g_t^\top\Sigma g_t\mid\calF_t]\).
\end{proof}

\begin{proof}[Proof of Theorem~\ref{thm:add-channel-kl}]
Apply the chain rule for relative entropy to the joint $(X,Y)$ in two different orderings.

\emph{First ordering (condition on $X$):} The joint $P\otimes K$ has marginal $P$ for $X$ and conditional $K(X,\cdot)$ for $Y\mid X$.  The joint $Q\otimes K$ has marginal $Q$ for $X$ and the same conditional $K(X,\cdot)$ for $Y\mid X$.  By the chain rule for relative entropy in the $(X,Y)$ direction,
\[
        \KL(P\otimes K\|Q\otimes K)
        =\KL(P\|Q)+\E_{X\sim P}\underbrace{\KL(K(X,\cdot)\|K(X,\cdot))}_{=0}
        =\KL(P\|Q)
\]

\emph{Second ordering (condition on $Y$):} By the chain rule for relative entropy in the $(Y,X)$ direction,
\[
        \KL(P\otimes K\|Q\otimes K)
        =\KL(PK\|QK)+
        \E_{Y\sim PK}\KL\!\left((P\otimes K)_{X\mid Y}\|(Q\otimes K)_{X\mid Y}\right)
\]
Equating the two expressions and noting $\mathcal D_K(P,Q)=\E_{Y\sim PK}\KL((P\otimes K)_{X\mid Y}\|(Q\otimes K)_{X\mid Y})\ge0$ proves \eqref{eq:add-channel-kl}.
\end{proof}

\begin{proof}[Proof of Corollary~\ref{cor:add-gaussian-contraction}]
Since $P=\calN(u,\Sigma)$ and $K(x,\cdot)=\calN(x,\Omega)$, the convolution gives $PK=\calN(u,\Sigma+\Omega)$.  Similarly $QK=\calN(m,\Sigma+\Omega)$.  Applying the equal-covariance Gaussian relative-entropy formula $\KL(\calN(a,V)\|\calN(b,V))=\half(a-b)^\top V^{-1}(a-b)$ with $V=\Sigma+\Omega$ proves the first line of \eqref{eq:add-gaussian-kl-contraction}.

For the gap, $\KL(P\|Q)=\half(u-m)^\top\Sigma^{-1}(u-m)$ (same formula with $V=\Sigma$).  The chain-rule identity \eqref{eq:add-channel-kl} gives $\mathcal D_K(P,Q)=\KL(P\|Q)-\KL(PK\|QK)=\half(u-m)^\top[\Sigma^{-1}-(\Sigma+\Omega)^{-1}](u-m)$, which is nonneg\-ative since $\Sigma^{-1}\succeq(\Sigma+\Omega)^{-1}$ whenever $\Omega\succeq0$.
\end{proof}

\begin{proof}[Proof of Corollary~\ref{cor:add-langevin-ledger}]
Apply \eqref{eq:add-noisy-objective-expected} with \(F=U\), \(g_t=\widehat g_t\), and \(\Omega_t=2\beta^{-1}\eta_t\Sigma\).  Then \(\half\tr(\Sigma^{-1}\Omega_t)=\beta^{-1}\eta_t d\)
and \eqref{eq:add-expected-gradient-clock} gives the two gradient terms.
\end{proof}

\begin{proof}[Proof of Proposition~\ref{prop:add-1d-noise-floor}]
The stationary variance solves \(s=(1-
\eta h)^2s+\omega^2\).  The objective identity follows from \(\E U=hs/2\).  Substituting \(\omega^2=2\beta^{-1}\eta\) gives \eqref{eq:add-1d-langevin-excess}.
\end{proof}

\begin{proof}[Proof of Corollary~\ref{cor:add-finite-variance-noise}]
Take conditional expectations in \eqref{eq:add-noisy-ledger} and use $\E\|\xi_t\|_{\Sigma^{-1}}^2 =\tr(\Sigma^{-1}\Omega_t)+\|\bar\xi_t\|_{\Sigma^{-1}}^2$.
\end{proof}

\begin{proof}[Proof of Proposition~\ref{prop:add-cgf-noise}]
Conditional independence gives
\[
\E\left[e^{\ip{v}{\bar\theta_{t+1}+\xi_t}}\mid\calF_t,g_t\right]
=e^{\ip{v}{\bar\theta_{t+1}}}\E\left[e^{\ip{v}{\xi_t}}\mid\calF_t\right]
\]
Taking logarithms gives \eqref{eq:add-cgf-noise-normalizer}.
\end{proof}

\begin{proof}[Proof of Proposition~\ref{prop:add-noisy-pressure}]
By the conditional identity \eqref{eq:add-noisy-ledger-conditional}, the conditional expected point-comparator ledger equals $\eta_t^2Q_t+\half\tr(\Sigma^{-1}\Omega_t)$ whatever the target, so it equals $a_t$ if and only if \eqref{eq:add-noisy-pressure} holds. When $Q_t>0$ and $\Omega_t$ is free of $\eta_t$, \eqref{eq:add-noisy-pressure} is a quadratic in $\eta_t$ with no linear term, whose unique positive root is \eqref{eq:add-noisy-pressure-solution}, real precisely when the numerator $a_t-\half\tr(\Sigma^{-1}\Omega_t)$ is positive. For the Langevin scaling $\Omega_t=2\beta^{-1}\eta_t\Sigma$, the noise term is $\half\tr(\Sigma^{-1}\cdot2\beta^{-1}\eta_t\Sigma)=\beta^{-1}d\,\eta_t$, the balance becomes $\eta_t^2Q_t+\beta^{-1}d\,\eta_t=a_t$, and the quadratic formula gives the stated root.
\end{proof}

\begin{proof}[Proof of Proposition~\ref{prop:add-saddle-second-moment}]
Iterate \eqref{eq:add-saddle-1d} $x_t=(1+ \eta\gamma)^tx_0+ \sum_{k=0}^{t-1}(1+ \eta\gamma)^{t-1-k}\xi_k$.
Taking second moments gives \eqref{eq:add-saddle-second-moment}.
\end{proof}

\begin{proof}[Proof of Proposition~\ref{prop:add-phase-info-recursion}]
Expand \(\mathcal I_{t+1}=\half(A_ts_t+b_t)^\top P(A_ts_t+b_t)\).  Taking conditional expectations removes the cross term and replaces \(b_t^\top Pb_t\) by \(\tr(P\Xi_t)\).
\end{proof}

\section{Notation}\label{app:notation}

\begin{center}
\renewcommand{\arraystretch}{1.2}
\begin{tabular}{@{}ll@{}}
\toprule
\textbf{Symbol} & \textbf{Meaning} \\
\midrule
$d$ & parameter dimension \\
$\theta_t\in\R^d$ & iterate at round $t$, the mean of the Gaussian posterior $q_t$ \\
$f_t$, $F$ & round loss; objective \\
$g_t=\nabla f_t(\theta_t)$, $\hat g_t$ & (sub)gradient at $\theta_t$; its stochastic estimate \\
$\Sigma$, $\Sigma_t$ & fixed covariance; adaptive covariance at round $t$ \\
$\pi=\calN(0,\Sigma)$ & Gaussian prior, with its covariance restated where a section changes it \\
$q_t=\calN(\theta_t,\Sigma)$ & Bayesian posterior at round $t$ \\
$\rho_u=\calN(u,\Sigma)$ & Gaussian comparator centered at $u$ \\
$\eta_t$ & step size, read as the inverse temperature of round $t$ \\
$\alpha_t=\eta_t\sigma^2$ & effective step size under the scalar prior $\Sigma=\sigma^2I$ \\
$c_t(\theta)=\ip{g_t}{\theta}$ & linearized composite loss \\
$Q_t=\half g_t^\top\Sigma_t g_t$ & intrinsic-time density \\
$V_T=\sum_tQ_t$ & cumulative intrinsic time \\
$I_t(u)=\half\|u-\theta_t\|_{\Sigma^{-1}}^2$ & comparator information \\
$B_f(u,\theta)=f(u)-f(\theta)-\ip{\nabla f(\theta)}{u-\theta}$ & first-order Bregman remainder \\
$S_t=\ip{g_t}{\theta_t-r_t}$ & pressure against a reference point $r_t$ \\
$\Gamma$ & comparator budget standing in for $I_1(u)$ \\
$D_T$, $P_T$ & drift and path length of a moving comparator (Proposition~\ref{prop:tracking-ledger}) \\
$\Xi_{t+1}(\rho)$ & covariance-drift term (\S\ref{sec:cov-drift}) \\
$M_T^{\mathrm{est}}$, $\Bias_T$ & estimation martingale (\S\ref{sec:stochastic}) and bias term (Appendix~\ref{app:identities}) \\
$P_-$, $V_T^-$ & projector onto the negative-curvature eigenspace; unstable intrinsic time \\
$H$ & Hessian, at a critical point unless stated otherwise \\
$\nu$, $\nu_t$ & zeroth-order probe direction (\S\ref{sec:zeroth}) \\
$C$ & covariance of a zeroth-order probe \\
$\calK_T(\varepsilon)$ & surface complexity (Appendix~\ref{app:surface-full}) \\
\bottomrule
\end{tabular}
\end{center}

\section*{Acknowledgments}

We acknowledge the use of large language models in preparing this work.

\bibliographystyle{plainnat}
\bibliography{sgd_information_accounting_unified}

\end{document}